\documentclass{article}

\addtolength{\oddsidemargin}{-.5in}%
\addtolength{\evensidemargin}{-1in}%
\addtolength{\textwidth}{1in}%
\addtolength{\textheight}{1.7in}%
\addtolength{\topmargin}{-1in}%

\usepackage[english]{babel}
\usepackage[utf8x]{inputenc}
\usepackage[T1]{fontenc}
\usepackage{ dsfont }
\usepackage{placeins}
\usepackage{multirow}
\usepackage{lscape}
\usepackage{algpseudocode}
\usepackage{color,soul}
\usepackage{hyperref} 
\usepackage{csvsimple}
\usepackage{longtable}
\usepackage{arydshln,amssymb,color}
\usepackage{multicol}
\usepackage{bm}
\usepackage{tikz-cd}
\usetikzlibrary{arrows.meta}
\usepackage{collectbox}
\usepackage{chngpage}
\usepackage{natbib}
\usepackage{centernot}
\usepackage{graphicx}
\usepackage[caption=false, font=footnotesize]{subfig}

\usepackage{amsmath}
\usepackage{amsthm}
\usepackage{ mathrsfs }

\usepackage{booktabs} 
\usepackage{xtab,booktabs}
\usepackage{bm}
\usepackage{mwe}
\usepackage{bbding}
\usepackage{array}

\usepackage{color, colortbl}
\definecolor{LightCyan}{rgb}{0.82,0.90,1}

\usepackage[linesnumbered,algoruled,boxed,lined]{algorithm2e}
\SetKwInOut{KwParam}{Parameters}
\setlength\parindent{0pt}

\DeclareMathOperator*{\sign}{sign}
\DeclareMathOperator*{\supp}{supp}

\DeclareMathOperator*{\diff}{diff}
\DeclareMathOperator*{\BhattCoef}{BhattCoef}

\usepackage{booktabs} 

\newtheorem{mydef}{Definition}

\newtheorem{myproperty}{Property}
\newtheorem{myproposition}{Proposition}
\newtheorem{myconjecture}{Conjecture}

\newtheorem{mylemma}{Lemma}

\usepackage{mathtools}
\usepackage{url}
\usepackage{bbm}

\hyphenation{op-tical net-works semi-conduc-tor me-thods}


\begin{document}

\newcommand{\titlecontent}{Comparing Two Samples Through Stochastic Dominance: A Graphical Approach}


	\title{\bf \titlecontent}
	\author{
		Etor Arza\\
		BCAM - Basque Center for Applied Mathematics\\
		and \\
		Josu Ceberio\\
		University of the Basque Country UPV/EHU\\
		and \\
		Ekhiñe Irurozki\\
		Télécom Paris\\
		and \\
		Aritz Pérez\\
		BCAM - Basque Center for Applied Mathematics}
	\maketitle

\bigskip
\begin{abstract}
	Non-deterministic measurements are common in real-world scenarios: the performance of a stochastic optimization algorithm or the total reward of a reinforcement learning agent in a chaotic environment are just two examples in which unpredictable outcomes are common. 
	These measures can be modeled as random variables and compared among each other via their expected values or more sophisticated tools such as null hypothesis statistical tests.
	In this paper, we propose an alternative framework to visually compare two samples according to their estimated cumulative distribution functions.
	First, we introduce a dominance measure for two random variables that quantifies the proportion in which the cumulative distribution function of one of the random variables stochastically dominates the other one.
	Then, we present a graphical method that decomposes in quantiles i) the proposed dominance measure and ii) the probability that one of the random variables takes lower values than the other.
	With illustrative purposes, we re-evaluate the experimentation of an already published work with the proposed methodology and we show that additional conclusions---missed by the rest of the methods---can be inferred.
	Additionally, the software package \textit{RVCompare} was created as a convenient way of applying and experimenting with the proposed framework.
\end{abstract}

\noindent%
{\it Keywords: }  Data visualization, Random variables, Cumulative distribution function, First-order stochastic dominance
\vfill

\newpage

\setlength{\parskip}{1em}

%
%

%
%
%

\section{Introduction}

	The objective value obtained by an optimization algorithm may be non-deterministic.
	For example, in stochastic algorithms, the objective value measured depends on the seed used in the random number generator.
	In these kinds of scenarios, we can think that these non-deterministic measurements are observations of random variables with unknown distributions.
	Based on these measurements, we sometimes need to choose the random variable that takes the lowest (or largest) values.
	The expected values of the random variables---usually estimated as an average of several repeated observations---can be used for this purpose.
	However, many statisticians have claimed that summarizing data with simple statistics such as the average or the standard deviation is misleading, as very different data can still have the same statistics~\cite{matejka2017same,chatterjee_generating_2007}.

	\paragraph*{Motivating example 1)}

	A real-world motivation for this work is as follows.
	Suppose we need to choose the best option between two stochastic gradient-based methods for optimizing the parameters of a neural network. 
	A neural network classifier trained with a gradient-based method will produce different error rates~\cite{goodfellowDeepLearning2016} each time it is trained-tested, even if the same train-test dataset is used in each repeated measurement.
	One of the reasons is that the learned classifier depends on the initialization of its weights (before applying a gradient-based optimizer), which are often initialized randomly~\cite{pmlr-v9-glorot10a}.

	To illustrate the previous scenario, we trained and tested a neural network\footnote{We follow an example in the Keras~\cite{chollet2015keras} library, and train the neural network for one epoch.} in the MNIST dataset, and we compared two gradient-based optimizers in this data set: \textit{adam} and \textit{RMSProp}~\cite{goodfellowDeepLearning2016}.
	The error rate in the test set depends on the seed used to train the neural network, and therefore, we can model the error rate of each of the algorithms in this problem as a random variable.
	An observation of each of the two random variables (the error rate of each gradient-based optimizer is modeled as a random variable) involves training the neural network in the training set and measuring its error rate in the test set: the training and test sets are the same for each trained neural network.
	Figure~\ref{fig:dist_X1} shows the kernel density estimations of these random variables using the uniform kernel.
	As we see in the figure, the error rate is not the same in each measurement and ranges between $0.022$ and $0.04$.
	This shows that, in this context, it makes sense to model the error rate as a random variable rather than a constant: a unique value cannot represent the error rate without a significant amount of information loss.

	\begin{figure}
		\centering
		\includegraphics[width=0.60\textwidth]{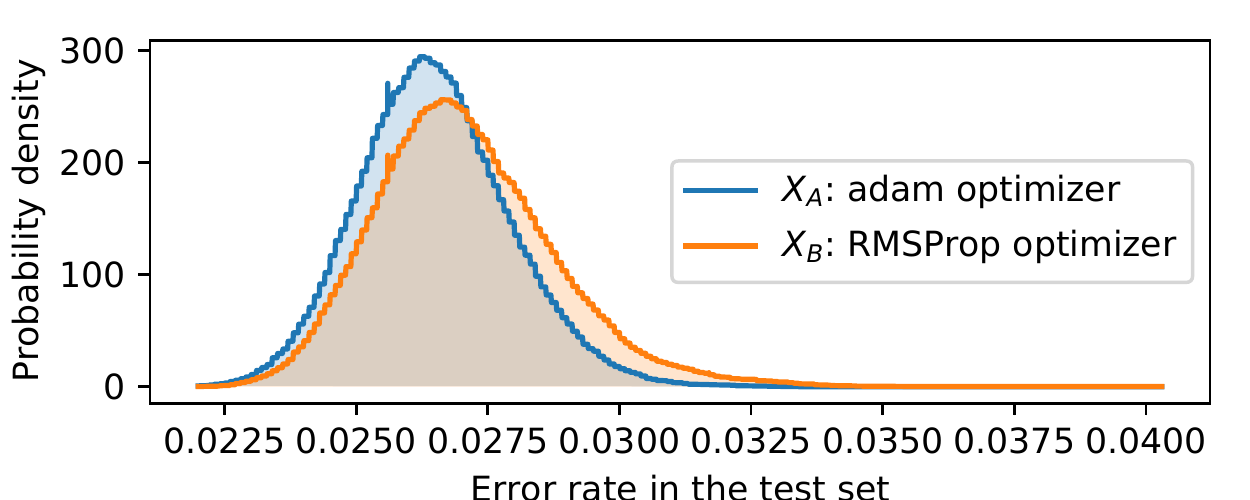}
		\caption{
			Density estimates of the error rates produced by the optimizers \textit{adam} and \textit{RMSProp} in the MNIST dataset.
			The sci-kit learn\cite{scikit-learn} package was used in the estimation.
		}		
		\label{fig:dist_X1}
	\end{figure}

	\paragraph*{Motivating example 2)}

	In the following, we present another example with synthetic data.
	Let us consider the two random variables $X_A$ and $X_B$ shown in Figure~\ref{fig:counterexample_being_better_dist}.
	$X_B$ has a lower expected value than $X_A$, $\mathbb{E}[X_B] < \mathbb{E}[X_A]$.
	If we use the expected value as the only criterion, then $X_B$ takes lower values than $X_A$.
	However, notice that with a low but nonzero probability, $X_B$ will take very large values that are undesirable in the context of minimization.
	Without loss of generality, in this paper, we assume that lower values are preferred.
	
	An error with low variance is very important in an environment where reliability is key, even if it means a slightly worse expected value.
	Some examples include breast cancer detection~\cite{cruz-roa_accurate_2017}, or some reinforcement learning tasks \cite{francois-lavet_introduction_2018,mnih_playing_2013} like self-driving cars \cite{badue_self-driving_2021}.
	
	\begin{figure}
		\centering
		\includegraphics[width=0.50\textwidth]{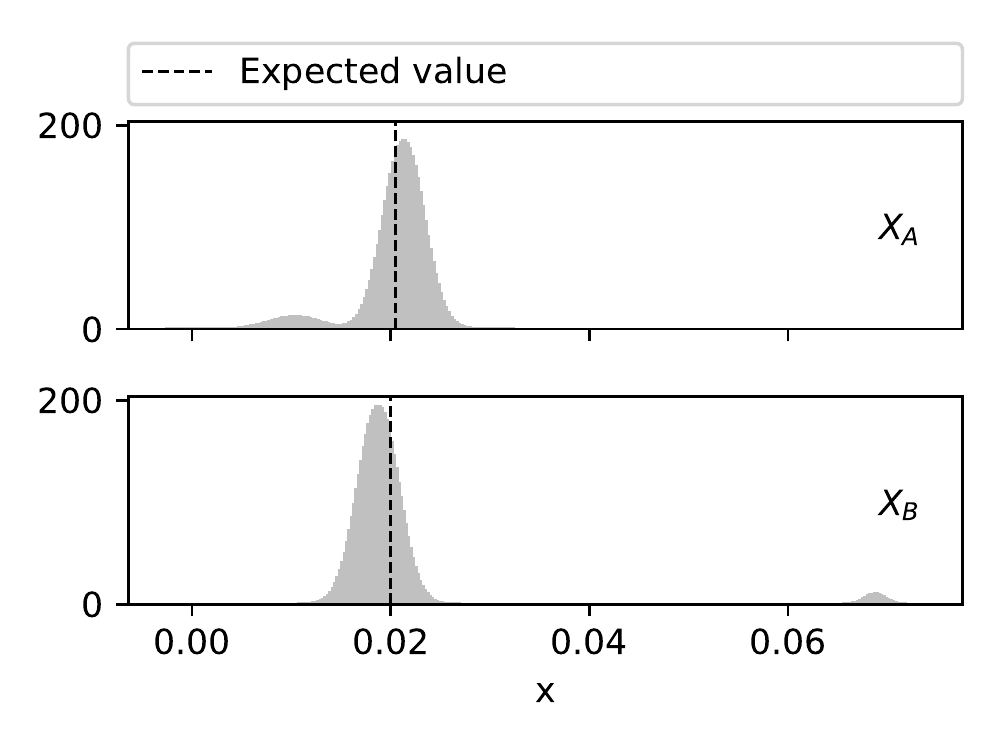}
		\caption{
			The probability density of two random variables $X_A$ and $X_B$, with probability density functions $g_{A} = 0.925 \cdot g_{\mathcal{N}(0.210325, 0.002)} + 0.075 \cdot g_{\mathcal{N}(0.010325, 0.025)}$ and $g_{B} = 0.975 \cdot g_{\mathcal{N}(0.01875, 0.002)} + 0.025 \cdot g_{\mathcal{N}(0.06875, 0.001)}$ where $\mathcal{N}(\rho,\sigma)$ is the normal distribution with mean $\rho$ and standard deviation $\sigma$.
			Their expected values are $\mathbb{E}[X_A] = 0.0205$ and $\mathbb{E}[X_B] = 0.02$ respectively.
		}
		\label{fig:counterexample_being_better_dist}
	\end{figure}
	
	In other circumstances, obtaining the lowest possible error can be more important than reliability.
	One could argue that reliability is less important in sentiment analysis~\cite{zhang_deep_2018}, or in certain real-world optimization problems~\cite{regnier2016truck}, where obtaining the best possible solution is key.
	When obtaining the best possible score is more important than reliability, it may even be worth running an optimization algorithm several times and choosing the best solution out of all the runs.
	In that case, $X_A$ would also be preferred to $X_B$, as $X_A$ has a higher probability of taking a value lower than $0.01$ (see Figure~\ref{fig:counterexample_being_better_dist}).

	\paragraph{Related work}

	In these two examples, we have seen that summarizing and comparing random variables with only the expected value can leave important information out (such as which of the random variables can take lower values), especially when neither random variable clearly takes lower values than the other one.
	Many works in the literature use null hypothesis tests~\cite{10.2307/2236101,conover1980practical,wilcoxonIndividualComparisonsRanking1945} to analyze observed samples and choose one of the random variables accordingly. 
	Nonetheless, as claimed in Benavoli et al.~\cite{benavoli2017time}, null hypothesis tests have their limitations too: when the null hypothesis is not rejected---this will happen often when the random variables being compared take similar values---, we get no information. 
	Not only that but even when the null hypothesis is rejected, it does not quantify the amount of evidence in favor of the alternative hypothesis~\cite{benavoli2017time}.

	\paragraph{Contribution}
	In this paper, we propose a graphical framework that compares two random variables using their associated cumulative distribution functions, in the context of choosing the one that takes lower values.
	The proposed methodology can compare the scores of two stochastic optimization algorithms or the error rates of two classifiers, among other applications.
	To achieve this, the performances of the optimization algorithms (or the error rates of the classifiers) are modeled as random variables, and then, we compare them by measuring the \textit{dominance}.
	
	Specifically, we first propose 8 desirable properties for \textit{dominance measures}: functions that compare two random variables in this context.
	From the measures in the literature, we find that the \textit{probability that one of the random variables takes a lower value than the other random variable} satisfies most of these properties.
	In addition, we propose a new dominance measure, the \textit{dominance rate}, that also satisfies most of the properties and is related to the first-order stochastic dominance~\cite{10.2307/2295819}.
	Then, we propose a graphical method that involves visually comparing the random variables through these two dominance measures.
	The graphical method, named \textit{cumulative difference-plot}, can also be used to compare the quantiles of the random variables, and it models the uncertainty associated with the estimate.
	By re-evaluating the experimentation of a recently published paper with the proposed methodology, we demonstrate that this new plot can be useful to compare two random variables, especially in the case when the random variables take similar values.

	Finally, an \textit{R} package named \textit{RVCompare}, available in CRAN, is distributed alongside this paper.
	With this package, the \textit{cumulative difference-plot} can be conveniently computed.
	The source code of the package and the supplementary material for the paper are available at~\href{https://github.com/EtorArza}{github.com/EtorArza}\footnote{
			The source of the package \textit{RVCompare} can be found at \href{https://github.com/EtorArza/RVCompare}{github.com/EtorArza/RVCompare}. 
			The code to reproduce every figure in the paper is available at~\href{https://github.com/EtorArza/SupplementaryPaperRVCompare}{github.com/EtorArza/SupplementaryPaperRVCompare}.
		}.

	The rest of the paper is organized as follows: in the next section, we propose eight desirable properties for dominance measures.
	Then, in Section~\ref{section:dominance_measures}, we study two dominance measures that satisfy most of these properties.
	Section~\ref{section:limitedata} introduces a graphical method to compare random variables. 
	In Section~\ref{section:related_work}, we discuss related methods in the literature and compare them to the proposed approach. 
	Section~\ref{section:real_world_case_study}, evaluates the proposed graphical method and other alternatives in an already published experimentation.
	In Section~\ref{section:assumptions_and_limitations}, we state the assumptions and limitations of the proposed \textit{cumulative difference-plot}.
	Finally, Section~\ref{section:conclusion} concludes the paper.

	\section{Desirable properties for dominance measures}
	\label{section:the_meaning_of_being_better}
	
	\subsection{Background}
	\label{section:background}

	When we have two random variables and we need to choose the one that takes the lowest values, we usually take i) the random variable with the lowest expected value or ii) the random variable with the lowest median.
	The \textit{median}~\cite{conover1980practical} of a continuous random variable $X_A$, denoted as $m_A$, is the value that satisfies $\mathcal{P}(X_A < m_A)  = \mathcal{P}(X_A > m_A)$.
	In other words, if $m_A$ is the median of $X_A$, a sample of $X_A$ is as likely to be lower than $m_A$ as it is to be higher.

	Interestingly enough, the median and the expected value have their strengths and weaknesses when it comes to choosing the random variable that takes the lowest values.
	In the following, we elaborate on this point with two particular cases of study.
	The first case is shown in Figure~\ref{fig:example_1_mean_median_prob_better}, with two random variables $X_A$ and $X_B$.
	Each of the random variables is a mixture of two Gaussian distributions with the same shape and similar weight in the mixture.
	It is clear that $X_A$ tends to take values lower than $X_B$, as the Gaussian distributions of $X_A$ are centered in $0.05$ and $0.07$, while the Gaussian distributions of $X_B$ are centered in $0.06$ and $0.08$.
	While the expected values of $X_A$ and $X_B$ are aligned with this intuition, the medians are not; as $\mathbb{E}[X_A] < \mathbb{E}[X_B]$ and $m_A > m_B$.
	However, the expected value does a poor job of summarizing the bimodal shape of $X_A$ or $X_B$:  both of these random variables usually take much higher or much lower values than their expected values.
	
	\begin{figure}
	\centering
	\vspace{-0.8em}
	\includegraphics[width=0.5\textwidth]{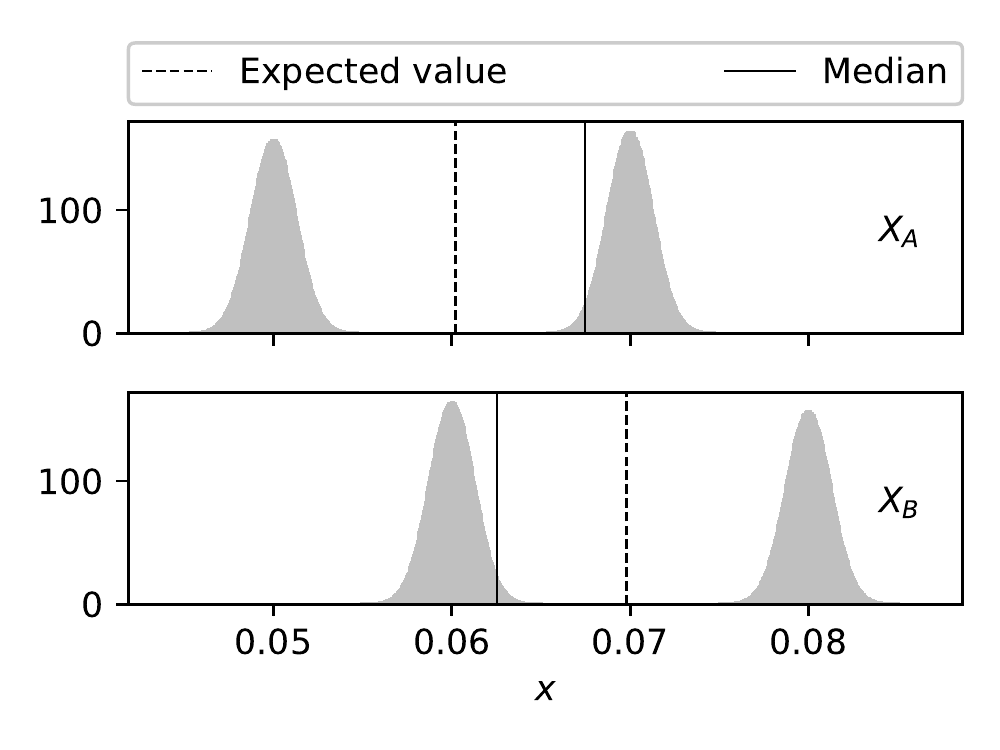}
	\caption{
		Case 1.
		The probability density functions of $X_A$ and $X_B$: $g_{A} = 0.489 \cdot g_{\mathcal{N}(0.05, 0.00125)} + 0.511 \cdot g_{\mathcal{N}(0.07, 0.00125)}$ and $g_{B} = 0.511 \cdot g_{\mathcal{N}(0.06, 0.00125)} + 0.489 \cdot g_{\mathcal{N}(0.08, 0.00125)}$ where $g_{\mathcal{N}(\rho,\sigma)}$ is the density function of the normal distribution with mean $\rho$ and standard deviation $\sigma$.
	}
	\label{fig:example_1_mean_median_prob_better}
	\end{figure}

	The second case is shown in Figure~\ref{fig:example_2_mean_median_prob_better}.
	With a very high probability, $X_A$ takes lower values than $X_B$, even though $X_B$ will rarely take really low values, which might prove useful in some particular applications.
	In this case, $m_A < m_B$ and $\mathbb{E}[X_A] > \mathbb{E}[X_B]$, hence, the comparison of the medians are aligned with the intuition that $X_A$ takes lower values than $X_B$, while the expected values are not.
	In the presence of outliers~\cite{carrenoAnalyzingRareEvent2020}, the median is considered more robust than the expected value~\cite{rousseeuw_robust_2011}.

	\begin{figure}
	\centering
	\vspace{-0.8em}
	\includegraphics[width=0.5\textwidth]{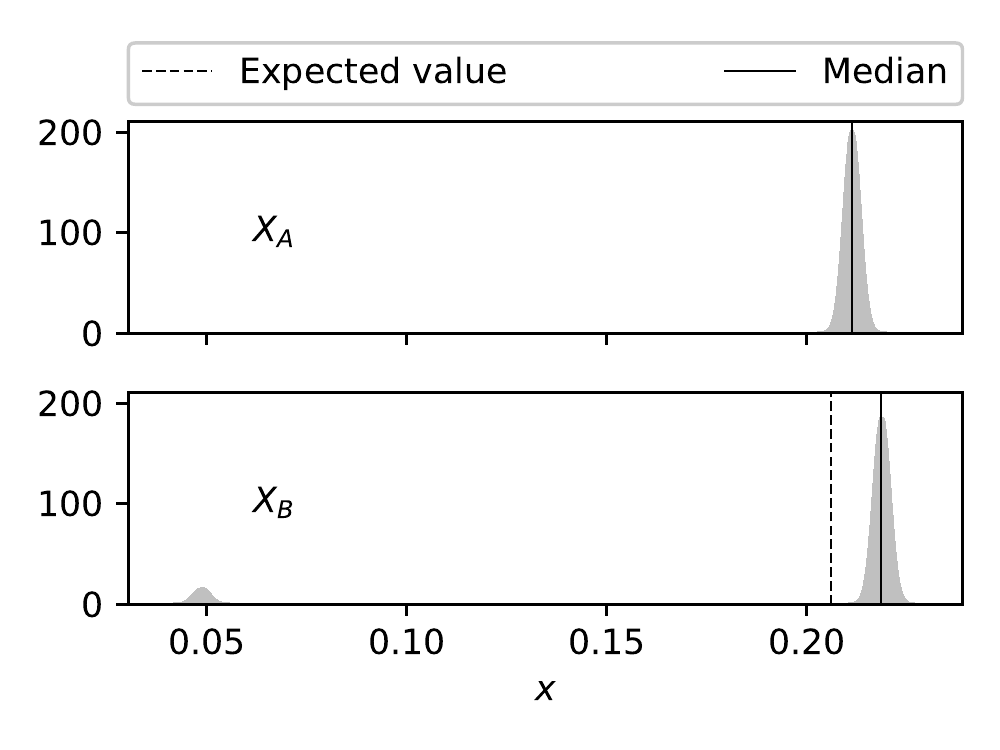}

	\caption{
		Case 2.
		The probability density functions of $X_A$ and $X_B$: $g_{A} = g_{\mathcal{N}(0.211325, 0.002)}$ and $g_{B} = 0.925 \cdot g_{\mathcal{N}(0.21875, 0.002)} + 0.075 \cdot g_{\mathcal{N}(0.04875, 0.002)}$ respectively.
	}
	\label{fig:example_2_mean_median_prob_better}
	\end{figure}

	Notice that, in the second case, it is not trivial to choose between $X_A$ and $X_B$, as $X_B$ can take lower values, but $X_A$ is more likely to be lower than $X_B$. 
	So, when can we claim that one of them clearly takes lower values than the other?
	When the cumulative distribution of $X_A$ is higher than the cumulative distribution of $X_B$ in the entire domain of definition: in that case, $X_A$ has a higher probability than $X_B$ of taking values lower than $x$, for all $x$ in the domain of definition.
	This is known \cite{10.2307/2236101} as $X_A$ being stochastically smaller than $X_B$.
	Depending on the field of study, it can also be referred to~\cite{schmid_testing_1996,https://doi.org/10.1111/iere.12038,10.2307/2295819} as ``$X_A$ stochastically dominates $X_B$''\footnote{Without loss of generality, minimization is assumed in this paper.}.
	The stochastic dominance can be further relaxed, obtaining what is known as \textit{first-order stochastic dominance} in the literature~\cite{schmid_testing_1996,10.2307/2295819}, although, for the sake of brevity, we will call it \textit{stochastic dominance} throughout the paper. 
	
	\begin{mydef}
	\label{def:dominance}
	(Stochastic dominance) Let $X_A$ and $X_B$ be two continuous random variables defined in a connected subset $N \subseteq \mathbb{R}$.
	We say that $X_A$ stochastically dominates $X_B$, denoted as  ${X_A \succ X_B}$, when	
	
	$i) \  G_A(x) \geq G_B(x)  \ \ \text{for all } x \in N$\\
	and \\	
	$ii) \ \text{There exists an } x \in N \ \ \text{such that} \ \ G_A(x) > G_B(x).$
	
	\end{mydef}

	where $G_A$ and $G_B$ are the cumulative distributions of $X_A$ and $X_B$ respectively.

	For $X_A$ not to stochastically dominate $X_B$ (denoted as $X_A \nsucc X_B$\footnote{Note that $X_A \nsucc X_B$ is not equivalent to $X_B \succ X_A$.}), either condition i) or ii) must be violated.
	The special case that ${X_A \nsucc X_B}$ and ${X_B \nsucc X_A}$ at the same time is defined, it is said that $X_A$ and $X_B$ \textit{cross}~\cite{https://doi.org/10.1111/iere.12038}, and we denote it as $X_A \lessgtr X_B$. 
	In the non trivial ($X_A \neq X_B$) case that $X_A \lessgtr X_B$, there exists two points $x_1,x_2 \in N$ such that $G_A(x_1) < G_B(x_1)$ and $G_A(x_2) > G_B(x_2)$: we cannot say, for all $x \in N$, that one of the random variables has a higher probability of taking values lower than $x$.

	Let us now see how the cumulative distributions can be used to compare random variables in an example.
	In Figure~\ref{fig:example_1_pareto}, the cumulative distributions of the random variables described in Figure~\ref{fig:example_1_mean_median_prob_better} are shown.
	We can see that $G_A(x) > G_B(x)$ for almost all $x \in N$.
	But there is at least a point $x \in (0.06, 0.07)$ where $G_A(x) < G_B(x)$, hence, $X_A \lessgtr X_B$. 
	The same happens in the second case (Figure~\ref{fig:example_2_pareto}).
	As in the previous case, $X_A \lessgtr X_B$, because even though $G_A(x) > G_B(x)$ for almost all $x \in N$ (in which $g_A(x) \neq 0$ and $g_B(x) \neq 0 $), for all $x \in (0.05, 0.2), G_A(x) < G_B(x)$.
		
	\begin{figure} 
		\centering
		\subfloat[Case 1\label{fig:example_1_pareto}]{%
		\includegraphics[width=0.5\linewidth]{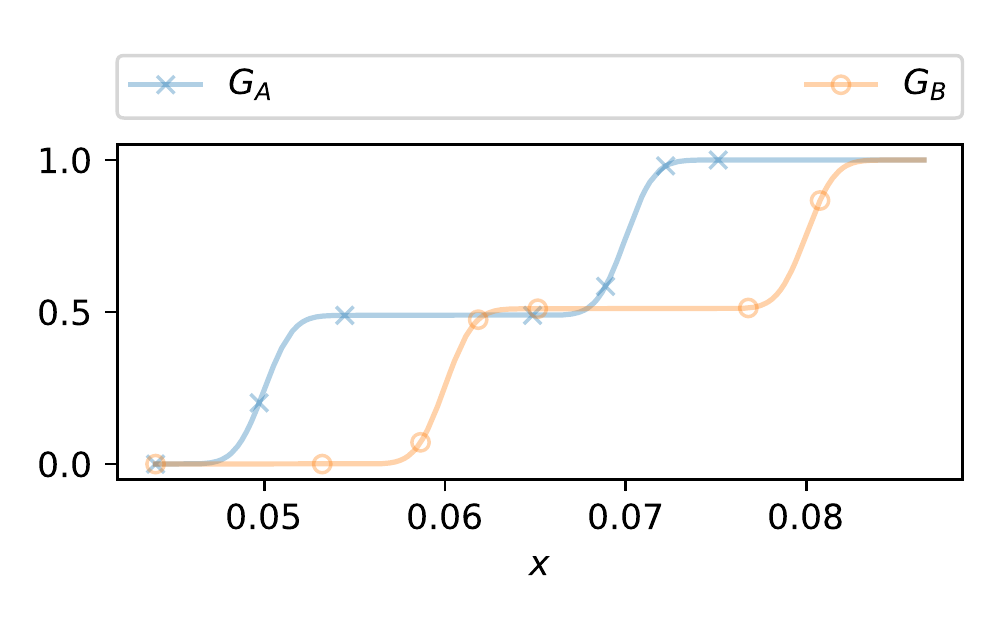}
		}
		\\
		\subfloat[Case 2\label{fig:example_2_pareto}]{%
		\includegraphics[width=0.5\linewidth]{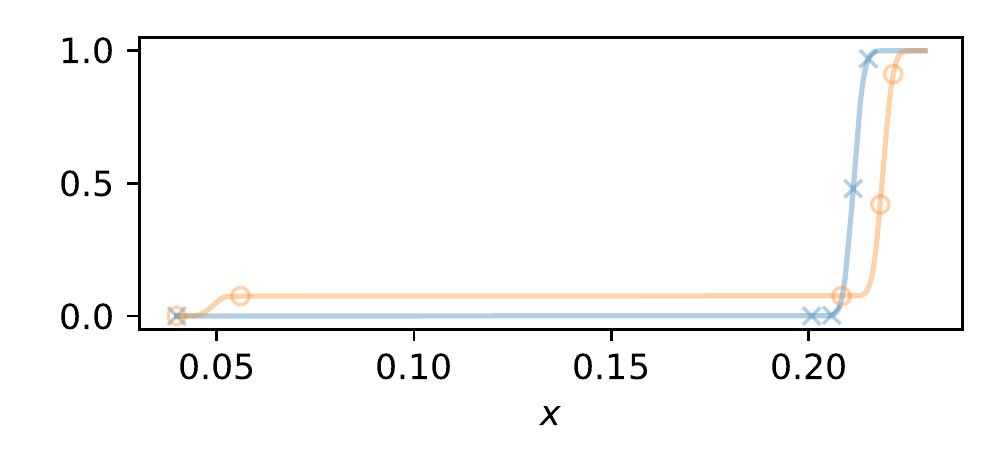}
		}
		\caption{The cumulative distributions of the two cases shown in Figures~\ref{fig:example_1_mean_median_prob_better} and \ref{fig:example_2_mean_median_prob_better}.}
		\label{fig:examples_1_2_multiobjective}
	\end{figure}

	In the following, we will study how to quantify the difference between two random variables, emphasizing the degree to which one of the random variables stochastically dominates the other.

	\subsection{Desirable properties}
	\label{section:properties_of_comparison_comparison_functions}

	There are many ways to compare two random variables, each with a different point of view: some aim to find how dissimilar two random variables are (disregarding which of them takes lower values), while other methods try to guess if one of the random variables stochastically dominates the other one.
	In the context of this paper, we are interested in measures that, given two random variables, quantify through the stochastic dominance how much one of the random variables tends to take lower values than the other.
	We use the term \textit{dominance measure} to refer to functions that quantify the difference between two random variables following this intuition.
	In this section, we define eight desirable properties for these dominance measures, and we study the suitability of several measures from the literature.

	\begin{mydef}
		Let $X_A$ and $X_B$ be two continuous random variables.
		We define a \emph{dominance measure} between two random variables as a function $\mathcal{C}$ that maps two random variables into a real value $\mathcal{C}(X_A,X_B)$.
	\end{mydef}
		
	It is desirable that $\mathcal{C}(X_A,X_B)$ quantifies the stochastic dominance.
	Hence, we want $\mathcal{C}(X_A,X_B)$ to be proportional to the portion of the support of $X_A$ and $X_B$ in which $G_A(x) < G_B(x)$.
	Formally, this intuitive idea can be represented as:

	\begin{myproperty}
	\label{prop:bounds_with_interpretation}
	$\mathcal{C}$ is defined in the $[0,1]$ interval, where: 
	
	i) $$\mathcal{C}(X_A,X_B) = 1 \iff X_A \succ X_B$$ 
	ii) $$ \mathcal{C}(X_A,X_B) = 0 \iff X_B \succ X_A$$
	iii)		$$\mathcal{C}(X_A,X_B) \in (0,1) \iff X_B \lessgtr X_A  $$
	\end{myproperty}

	\begin{myproposition}
		If a dominance measure $\mathcal{C}$ satisfies Property~\ref{prop:bounds_with_interpretation} i) and ii), then it also satisfies Property~\ref{prop:bounds_with_interpretation} iii).
	\end{myproposition}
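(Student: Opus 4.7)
The plan is to prove Property~\ref{prop:bounds_with_interpretation} iii) by a direct logical equivalence argument, leveraging two complementary partitions. On the ``structural'' side, for any pair of continuous random variables $X_A, X_B$, exactly one of $X_A \succ X_B$, $X_B \succ X_A$, or $X_A \lessgtr X_B$ holds, where the last case is defined (as in the excerpt) by $X_A \nsucc X_B$ \emph{and} $X_B \nsucc X_A$ holding simultaneously. On the ``value'' side, the codomain decomposes as $[0,1] = \{0\} \cup (0,1) \cup \{1\}$. Properties~\ref{prop:bounds_with_interpretation} i) and ii) already pair up two cells of each partition, so iii) should drop out by elimination.

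First I would take the contrapositives of the biconditionals in i) and ii) to obtain $\mathcal{C}(X_A,X_B) \neq 1 \iff X_A \nsucc X_B$ and $\mathcal{C}(X_A,X_B) \neq 0 \iff X_B \nsucc X_A$. Then I would prove both directions of iii) in turn. For the forward direction, assuming $\mathcal{C}(X_A,X_B) \in (0,1)$ gives $\mathcal{C}(X_A,X_B) \neq 0$ and $\mathcal{C}(X_A,X_B) \neq 1$, hence by the two contrapositives $X_A \nsucc X_B$ and $X_B \nsucc X_A$, which is precisely $X_A \lessgtr X_B$. For the reverse direction, $X_A \lessgtr X_B$ unpacks to $X_A \nsucc X_B$ and $X_B \nsucc X_A$, so $\mathcal{C}(X_A,X_B) \neq 1$ and $\mathcal{C}(X_A,X_B) \neq 0$; combined with the standing hypothesis $\mathcal{C}(X_A,X_B) \in [0,1]$, this forces $\mathcal{C}(X_A,X_B) \in (0,1)$.

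There is no genuine obstacle: the proposition is essentially a complementation/pigeonhole observation, and its content is that the biconditional form of i) and ii) (not merely the $\Leftarrow$ implications) is what permits the contrapositives to be applied in both directions. No further assumption on $\mathcal{C}$ beyond having codomain $[0,1]$ is needed.
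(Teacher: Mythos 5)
Your proof is correct and follows essentially the same route as the paper's: negate the biconditionals in Property~1 i) and ii), unpack the definition of $X_A \lessgtr X_B$ as $X_A \nsucc X_B$ and $X_B \nsucc X_A$, and use the codomain $[0,1]$ to convert ``$\neq 0$ and $\neq 1$'' into membership in $(0,1)$. Your write-up is just a more explicit two-direction version of the paper's one-line equivalence chain.
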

	\begin{proof}
		By definition, $X_B \lessgtr X_A$ iff  $X_A \nsucc X_B$ and $X_B \nsucc X_A$. Property~\ref{prop:bounds_with_interpretation} i) and ii) implies that $X_A \nsucc X_B$ and $X_B \nsucc X_A$ iff $\mathcal{C}(X_A,X_B) \neq 1$ and $\mathcal{C}(X_A,X_B) \neq 0$.
		From Property~\ref{prop:bounds_with_interpretation} i) also $\mathcal{C}(X_A,X_B) \in [0,1]$, thus $X_B \lessgtr X_A$ iff $\mathcal{C}(X_A,X_B) \in (0,1) $.
	\end{proof}

	\begin{myproperty}
		\label{prop:antisimmetry}
		(Antisymmetry) $\mathcal{C}(X_A,X_B)$ and $\mathcal{C}(X_B,X_A)$ add up to 1.
		$$\mathcal{C}(X_A,X_B) = 1-\mathcal{C}(X_B,X_A)$$
	\end{myproperty}

		It is noteworthy that Property~\ref{prop:bounds_with_interpretation} ii) can be inferred from Property~\ref{prop:bounds_with_interpretation} i) and Property~\ref{prop:antisimmetry}.

	\begin{myproperty}
	\label{prop:inverse}
	The inversion (under the sum) of the operands of $\mathcal{C}$ equals the inversion of $\mathcal{C}$:
	$$\mathcal{C}(-1 \cdot X_A, -1 \cdot X_B) = 1-\mathcal{C}(X_A,X_B)$$
	\end{myproperty}

	\begin{myproperty}
		\label{prop:equality_value}
		When $X_A$ and $X_B$ are equal, $\mathcal{C}$ is symmetric.
		$$X_A = X_B \implies \mathcal{C}(X_A,X_B) = \mathcal{C}(X_B,X_A)$$ 
	\end{myproperty}

		Assuming Property~\ref{prop:antisimmetry} holds, we can rewrite the previous property as:
		$$X_A = X_B \implies \mathcal{C}(X_A,X_B) = 0.5.$$ 
		Note that the opposite is not true: 
		$$\mathcal{C}(X_A,X_B) = \mathcal{C}(X_B,X_A) \centernot\implies X_A = X_B$$

	\begin{myproperty}
		\label{prop:translation}
		(Invariance to translation) Moving the domain of definition of $X_A$ and $X_B$ by the same amount does not change \nolinebreak $\mathcal{C}$\footnote{We define $X_A + \lambda$ as the random variable that is sampled in two steps: first obtain an observation from $X_A$ and then add $\lambda$ to this observation. We define $\lambda \cdot X_A$ in a similar way.}.
		$$\text{for all } \lambda \in \mathbb{R}, \ \ \	\mathcal{C}(X_A + \lambda ,X_B + \lambda) = \mathcal{C}(X_A,X_B)$$
	\end{myproperty}

	\begin{myproperty}
	\label{prop:scaling}	
	(Invariance to scaling)	Scaling both $X_A$ and $X_B$ by the same positive amount does not change $\mathcal{C}$.
	$$\text{for all } \lambda > 0, \ \ \	\mathcal{C}(\lambda \cdot X_A, \lambda \cdot X_B) = \mathcal{C}(X_A,X_B)$$
	\end{myproperty}

	In the following lines, we give an intuition for Property 7.
	In Case 2, shown in Figure~\ref{fig:example_2_mean_median_prob_better}, we saw that $\text{for all } x \in (0.075, 0.2)$, $G_A(x) < G_B(x)$.
	However, notice that most of the mass of $X_A$ and $X_B$ is in the interval $(0.2,0.23)$, where $G_A(x) > G_B(x)$.
	This means that most of the observed points of $X_A$ and $X_B$ will be in that interval.
	Therefore, it makes sense that $G_A(x) > G_B(x)$ has a higher weight than $G_A(x) < G_B(x)$ in the computation $\mathcal{C}(X_A,X_B)$.
	In other words, the \textit{small} mass of $X_B$ centered in $0.05$ can only account for a \textit{small} part of $\mathcal{C}(X_A,X_B)$.
	In what follows, this is formalized as $X_B$ being a mixture of two distributions, where one of the distributions represents this small mass with a small weight in the mixture.
	Property~\ref{prop:scale_of_portions} states that the change in the computation of $\mathcal{C}$ produced by the distribution of small weight in the mixture can be, at most, its weight in the mixture.
	
	\begin{myproperty}
		\label{prop:scale_of_portions}
		Let $X_B =\mathcal{M}_{[1-\tau,\tau]}(X_{B1},X_{B2})$ be the mixture\footnote{The probability density function of $\mathcal{M}_{[1 - \tau,\tau]}(X_{B1},X_{B2})$ is defined as $(1 - \tau) \cdot g_{B1}(x) + \tau \cdot g_{B2}(x)$. Note that $\tau \in [0,1]$.} distribution of $X_{B1}$ and $X_{B2}$ with weights $1 - \tau$ and $\tau$ respectively and let $X_A$ be another random variable.
		Then, 
		
		$$\left| \mathcal{C}(X_A, X_B) -  \mathcal{C}(X_A, X_{B1})  \right| \leq \tau$$
	\end{myproperty}

	\begin{figure}
		\centering
		\vspace{-0.8em}
		\includegraphics[width=0.5\textwidth]{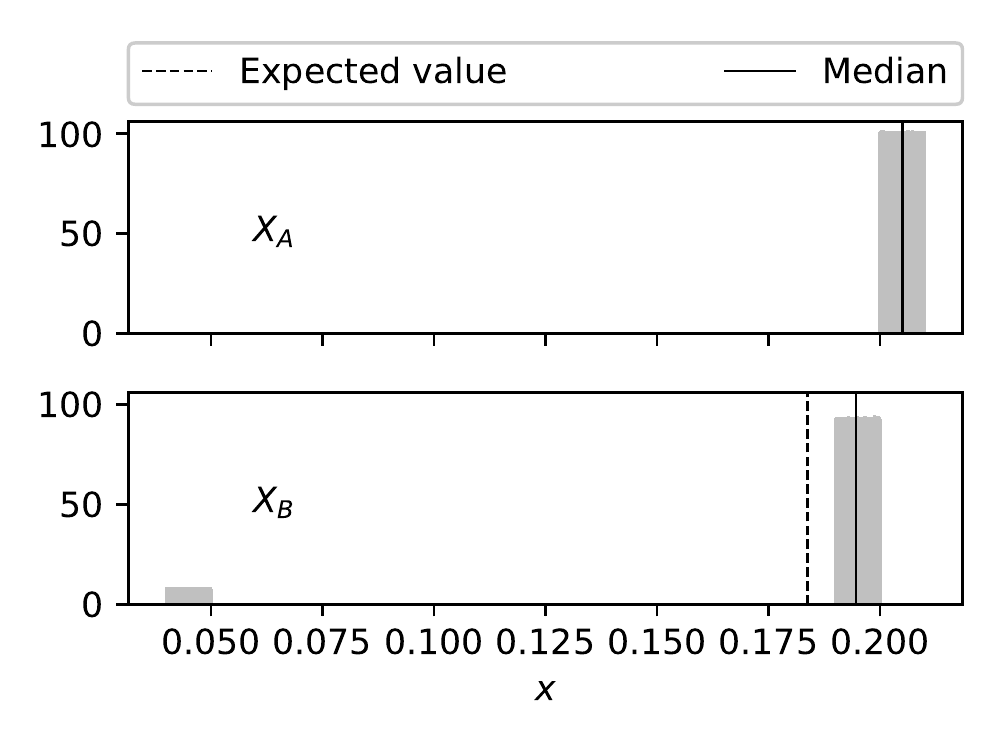}
		
		\caption{
			Case 3.
			The probability density functions of $X_A$ and $X_B$: $g_{A} = g_{\mathcal{U}(0.2, 0.21)}$ and $g_{B} = 0.925 \cdot g_{\mathcal{U}(0.19, 0.2)} + 0.075 \cdot g_{\mathcal{U}(0.04, 0.05)}$ respectively, where $\mathcal{U}(0.2, 0.21)$ is the uniform distribution in the interval $(0.2,0.21)$.
		}
		\label{fig:example_5_mean_median_prob_better}
	\end{figure}

	Property~\ref{prop:partial_translation_within_support} explains that, under certain circumstances, $\mathcal{C}(X_A,X_B)$ is invariant to the translation/dilatation of one of the random variables.
	Specifically, it states that the distribution of one of the random variables ($X_B$) can change without affecting the value of $\mathcal{C}(X_A,X_B)$ as long as the changed part does not overlap with the support of the other random variable ($X_A$).
	Let us assume that the random variable $X_B$ is defined as mixture distribution $\mathcal{M}_{[1 - \rho,\rho]}(X_{B1},X_{B2})$ where the supports of $X_{B2}$ and $X_A$ do not overlap, with $\rho \in (0,1)$.
	Property~\ref{prop:partial_translation_within_support} states that a translation and/or dilatation can be applied to $X_{B1}$, as long as: i) this transformation does not cause an overlap of the supports of $X_A$ and $X_{B2}$, and ii) partial transformations will also not cause an overlap (hence the need for $\xi_1$ and $\xi_2$ in Property~\ref{prop:partial_translation_within_support}).
	In the following, we formalize this property:

	\begin{myproperty}
		\label{prop:partial_translation_within_support}
		Let $X_B = \mathcal{M}_{[1 - \rho,\rho]}(X_{B1},X_{B2})$ be the mixture distribution of $X_{B1}$ and $X_{B2}$ with weights $1 - \rho$, and $\rho$, respectively and let $X_A$ be another random variable with $\rho \in (0,1)$.
		Suppose that $\supp(X_{B2}) \cap \supp(X_{A}) = \varnothing$.
		Let $\lambda_1 \in \mathbb{R}^+, \lambda_2 \in \mathbb{R}$ be two numbers such that $\text{for all } \xi_1, \xi_2 \in [0,1]$, $\supp((1+(\lambda_1 -1)\xi_1) \cdot X_{B2} + \xi_2\lambda_2) \cap \supp(X_{A}) = \varnothing$.
		Then, 
		$$\mathcal{C}(X_A, X_B) = \mathcal{C}(X_A, \mathcal{M}_{[1 - \rho,\rho]}(X_{B1}, \lambda_1 \cdot X_{B2} + \lambda_2)$$
	\end{myproperty}

	This property can be applied to the distributions in Case~3 shown in Figure~\ref{fig:example_5_mean_median_prob_better}.
	For example, the probability mass in the interval $(0.04, 0.05)$ could have been centered in $0.1$ or $0.15$ instead of $0.045$, without any changes to $\mathcal{C}(X_A,X_B)$. 
	In addition to the position, the shape of the mass can also be altered as long as its weight in the mixture stays the same and does not overlap with $X_A$.

	Unfortunately, it is impossible that a dominance measure satisfies Properties~\ref{prop:bounds_with_interpretation} and \ref{prop:scale_of_portions} at the same time.
	Intuitively, the problem is that, given the distributions $X_A$ and $X_B = \mathcal{M}_{[1 - \tau, \tau]} (X_{B1}, X_{B2})$,  it is possible that $X_A \succ X_{B1}$ and at the same time $X_B \succ X_A$ with $\tau < 0.5$\footnote{See \href{https://etorarza.github.io/pages/2021-interactive-comparing-RV.html}{https://etorarza.github.io/pages/2021-interactive-comparing-RV.html} for an interactive example that illustrates the above point.}.
	We formalize and prove this claim in the following proposition:
	
	\begin{myproposition}
	Let $\mathcal{C}$ be a dominance measure.
	
	i)	If $\mathcal{C}$ satisfies Property~\ref{prop:bounds_with_interpretation}, then it fails to satisfy Property~\ref{prop:scale_of_portions}.

	ii)	If $\mathcal{C}$ satisfies Property~\ref{prop:scale_of_portions}, then it fails to satisfy Property~\ref{prop:bounds_with_interpretation}.	
	\end{myproposition}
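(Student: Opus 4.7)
The plan is to observe first that statements (i) and (ii) are logical contrapositives of each other: if Property~\ref{prop:bounds_with_interpretation} forces Property~\ref{prop:scale_of_portions} to fail, then assuming Property~\ref{prop:scale_of_portions} to hold forbids Property~\ref{prop:bounds_with_interpretation}. So I will only prove (i) and cite the equivalence for (ii).

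To prove (i) I will exhibit an explicit counterexample. I will construct three continuous random variables $X_A, X_{B1}, X_{B2}$ and a weight $\tau \in (0,\tfrac12)$ such that simultaneously $X_A \succ X_{B1}$ and $X_B := \mathcal{M}_{[1-\tau,\tau]}(X_{B1}, X_{B2}) \succ X_A$. Once that is in place, Property~\ref{prop:bounds_with_interpretation} i) and ii) force $\mathcal{C}(X_A, X_{B1}) = 1$ and $\mathcal{C}(X_A, X_B) = 0$, so
\[
\bigl|\mathcal{C}(X_A,X_B)-\mathcal{C}(X_A,X_{B1})\bigr| = 1 > \tau,
\]
directly contradicting Property~\ref{prop:scale_of_portions}.

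The construction relies on the following intuition. If $X_A$ is stochastically smaller than $X_{B1}$ by only a bounded amount (i.e.\ $\max_x(G_A(x) - G_{B1}(x)) < \tfrac12$), then a second mixture component $X_{B2}$ supported far below $X_A$ (so $G_{B2}(x) = 1$ wherever $G_A - G_{B1}$ is close to its maximum) can raise the mixture CDF $G_B = (1-\tau) G_{B1} + \tau G_{B2}$ uniformly above $G_A$, provided $\tau$ is chosen slightly above $\max_x(G_A - G_{B1})$. A concrete instance that works is $X_A \sim \mathcal{U}(0,1)$, $X_{B1} \sim \mathcal{U}(0.4,1)$, $X_{B2}$ supported on $[-1,-0.5]$, and $\tau = 0.45$; one checks piecewise that $G_{B1} \le G_A$ everywhere (with strict inequality, so $X_A \succ X_{B1}$), and that $G_B \ge G_A$ everywhere with strict inequality on $[0,1)$ (so $X_B \succ X_A$).

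The main obstacle is simply finding the construction: one needs $X_A \succ X_{B1}$ strictly while keeping the CDF gap below $\tfrac12$, and then verifying the mixture inequality $(1-\tau)G_{B1} + \tau G_{B2} \ge G_A$ on every piece of the partition induced by the supports. After the verification, plugging into Property~\ref{prop:bounds_with_interpretation} and Property~\ref{prop:scale_of_portions} yields the contradiction immediately, which closes the proof of (i); (ii) follows by contraposition as noted above.
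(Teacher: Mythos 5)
Your proposal is correct and takes essentially the same route as the paper: both exhibit a mixture counterexample built from uniform distributions in which $X_A \succ X_{B1}$ yet $X_B = \mathcal{M}_{[1-\tau,\tau]}(X_{B1},X_{B2}) \succ X_A$, so that Property~\ref{prop:bounds_with_interpretation} forces $\mathcal{C}(X_A,X_{B1})=1$ and $\mathcal{C}(X_A,X_B)=0$, contradicting Property~\ref{prop:scale_of_portions}. The paper uses $X_A=\mathcal{U}(0,1)$, $X_{B1}=\mathcal{U}(0.1,1)$, $X_{B2}=\mathcal{U}(-0.5,0)$ with $\tau=0.1$ and argues the two directions directly rather than invoking contraposition, but these are only cosmetic differences from your construction.
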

	\begin{proof}
	A dominance measure only satisfies a property when that property is true for every possible random variable.
	Consequently, to prove this proposition, it is enough to find four random variables $X_A, X_B, X_{B1}$ and $X_{B2}$ where

	i) $X_B =\mathcal{M}_{[0.1,0.9]}(X_{B1},X_{B2}),$\\
	ii) $X_A \succ X_{B1},$ \\
	iii) $X_B \succ X_A.$

	If four random variables can be found that satisfy these three statements, then with Property~\ref{prop:bounds_with_interpretation} we obtain that ${\mathcal{C}(X_A,X_{B1}) = 1}$ and ${\mathcal{C}(X_A,X_{B}) = 0}$. 
	This contradicts Property~\ref{prop:scale_of_portions}, because ${\left| \mathcal{C}(X_A, X_B) -  \mathcal{C}(X_A, X_{B1})  \right| \nleq 0.1}$.
	The same is true the other way around, Property~\ref{prop:scale_of_portions} states that ${\left|  \mathcal{C}(X_A, X_B) -  \mathcal{C}(X_A, X_{B1})  \right| \leq 0.1}$ and this contradicts Property~\ref{prop:bounds_with_interpretation}, with ${\mathcal{C}(X_A,X_{B1}) < 1}$ or ${\mathcal{C}(X_A,X_{B}) > 0}$.

	A simple example in which this happens is for the random variables
	
	${X_A = \mathcal{U}(0,1)},$

	${X_B = \mathcal{M}_{[0.9, 0.1]} (\mathcal{U}(0.1,1),\mathcal{U}(-0.5,0))},$

	${X_{B1} = \mathcal{U}(0.1,1)},$
	 
	${X_{B2} = \mathcal{U}(-0.5,0)}.$
	
	The cumulative distribution functions of $X_A$, $X_B$ and $X_{B1}$ are shown in Figure~\ref{fig:impossibilityporp7and1cumulative}, where it is clear that $X_B \succ X_A$ and $X_A \succ X_{B1}$.
	\end{proof}

\begin{figure}
	\centering
	\includegraphics[width=0.6\linewidth]{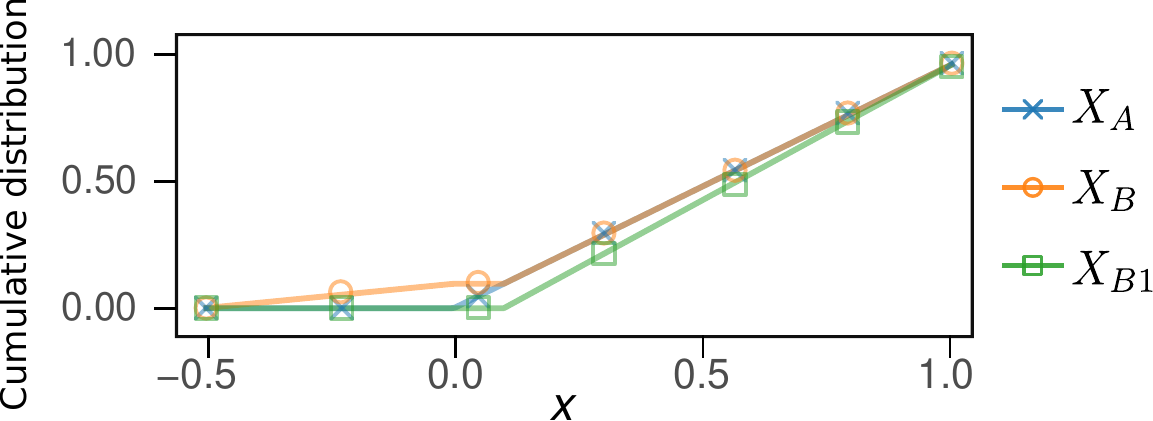}
	\caption{The cumulative distribution functions of $X_A$, $X_B$ and $X_{B1}$.}
	\label{fig:impossibilityporp7and1cumulative}
\end{figure}

		In the following, we will briefly review several measures in the literature and, specifically, which of the proposed properties they satisfy.
		Many measures describe the difference between $X_A$ and $X_B$, disregarding whether the difference in cumulative density is positive or negative.
		Consequently, they cannot satisfy Property~\ref{prop:bounds_with_interpretation} (see Appendix~\ref{appendix:additional_information_on_bad_comparison_functions} for details).
		This is the case for \textit{f-divergences}---including Kullback-Leibler, Jensen-Shannon, the Hellinger distance and the total variation---and for the Wasserstein distance.
		These measures also fail to satisfy several other properties (see a summary in Table~\ref{tab:functions_which_properties_satisfy}).

		\begin{table}
		\setlength\tabcolsep{3.5pt} 
		\renewcommand{\arraystretch}{1.2} 
		\centering
		\caption{
			Which of the properties in Section~\ref{section:properties_of_comparison_comparison_functions} does each measure satisfy?
		}
		\label{tab:functions_which_properties_satisfy}
		\begin{scriptsize}
			\begin{tabular}{|l|l|c|c|c|c|c|c|c|}
				\multicolumn{9}{c}{ }\\
				\hline
				& 1  &  2      &     3      &     4      &     5      &     6      &     7      &     8      \\ \hline
				Kullback-Leibler divergence     &    &    &    & \checkmark & \checkmark & \checkmark &    & \checkmark \\ \hline
				Jensen-Shannon divergence       &    &    &           & \checkmark & \checkmark & \checkmark &    & \checkmark \\ \hline
				Total-Variation                 &    &    &           & \checkmark & \checkmark & \checkmark & \checkmark & \checkmark \\ \hline
				Hellinger distance              &    &    &           & \checkmark & \checkmark & \checkmark & \checkmark & \checkmark \\ \hline
				Wasserstein distance            &    &    &           & \checkmark & \checkmark &    &    &    \\ \hline
				$\mathcal{C}_{\mathcal{P}}$: Probabitlity of $X_A < X_B$        &   & \checkmark & \checkmark & \checkmark & \checkmark & \checkmark & \checkmark & \checkmark \\ \hline
				$\mathcal{C}_{\mathcal{D}}$: Dominance rate of $X_A$ over $X_B$ & \checkmark & \checkmark & \checkmark & \checkmark & \checkmark & \checkmark &    & \checkmark \\ \hline
			\end{tabular}
		\end{scriptsize}
		\begin{scriptsize}		
			\begin{flushleft}
			\hspace{3cm}	A checkmark $\checkmark$ indicates that the measure satisfies the property.
			\end{flushleft}
		\end{scriptsize}
	\end{table}

	\section{Dominance measures}
	\label{section:dominance_measures}
	
	Most of the measures in the literature fail to satisfy the eight properties introduced in Section~\ref{section:properties_of_comparison_comparison_functions}.
	However, there is a dominance measure in the literature that overcomes this limitation: the probability that $X_A < X_B$~\cite{conover1980practical}.
	
	\subsection{$\mathcal{C}_\mathcal{P}$: the probability of $X_A < X_B$}
	\label{section:p_a_lower_b}
	
	We can compare $X_A$ and $X_B$ with the probability that a value sampled from $X_A$ is smaller than a value sampled from $X_B$.
	When the random variables are exactly the same, this probability is $0.5$.
	Formally, given two continuous random variables $X_A$ and $X_B$ defined in a connected set $N \subseteq \mathbb{R}$, the probability that $X_A < X_B$ is defined as:
	
	
	\begin{equation}
			\label{equation:definition_of_C_P_from_density}
			\mathcal{P}(X_A < X_B) = \int_N g_B(x) G_A(x)   dx.
	\end{equation}


	When we consider $\mathcal{P}(X_A < X_B)$ as a dominance measure, we will denote it as $\mathcal{C}_\mathcal{P}(X_A, X_B)$.

	One of the advantages of $\mathcal{C}_\mathcal{P}$ is its easy interpretation.
	In addition, $\mathcal{C}_\mathcal{P}$ is a well behaved dominance measure, as it satisfies Properties~\ref{prop:antisimmetry}, \ref{prop:inverse}, \ref{prop:equality_value}, \ref{prop:translation}, \ref{prop:scaling}, \ref{prop:scale_of_portions} and \ref{prop:partial_translation_within_support}.
	It also satisfies a weak version of Property~\ref{prop:bounds_with_interpretation}: 
	$$\mathcal{C}_\mathcal{P}(X_A,X_B) = 1 \implies X_A \succ X_B \implies $$
	$$\mathcal{C}_\mathcal{P}(X_A,X_B) \in (0.5,1]$$
	\centerline{and} 
	$$\mathcal{C}_\mathcal{P}(X_A,X_B) = 0 \implies X_B \succ X_A \implies$$
	$$\mathcal{C}_\mathcal{P}(X_A,X_B) \in [0,0.5).$$
	
	Note that, when $X_A \succ X_B$, $\mathcal{C}_\mathcal{P}(X_A,X_B) \neq 1$ is still possible, and this is why it does not satisfy Property~\ref{prop:bounds_with_interpretation} entirely.
	For instance, when the probability densities of $X_A$ and $X_B$ are two Gaussian distributions with the same variance and the mean of $X_A$ is lower, then $X_A \succ X_B$ but $\mathcal{C}_\mathcal{P}(X_A,X_B) < 1$.

	So far, we have seen that $\mathcal{C}_\mathcal{P}$ satisfies most of the properties.
	Unfortunately, since it does not satisfy Property~\ref{prop:bounds_with_interpretation}, not all cases of $X_A \succ X_B$ can be identified by $\mathcal{C}_\mathcal{P}$.
	We now propose a dominance measure that satisfies Property~\ref{prop:bounds_with_interpretation} and, thus, can be used to identify cases in which $X_A \succ X_B$.
	
	\subsection{$\mathcal{C}_\mathcal{D}$: dominance rate}
	\label{sec:dominance_rate}

	Intuitively, the \textit{dominance rate} is a dominance measure that quantifies the extent to which $X_A$ has a lower cumulative distribution function than $X_B$, normalized by the portion of the probability densities with different cumulative distributions.

	\begin{mydef}
		\label{def:dominance_density_function}
		(Dominance density function) 
		Let $X_A$ and $X_B$ be two continuous random variables defined in a connected set $N\subseteq \mathbb{R}$.
		We define the dominance density function as follows:
		
		$$
		\mathcal{D}_{X_A,X_B}(x) = 
			\begin{cases}
		 \hspace{0.8em}	g_A(x) \cdot k_A  &\quad\text{if}\quad G_A(x) > G_B(x) \\
			 -g_B(x) \cdot k_B &\quad\text{if}\quad G_A(x) < G_B(x)\\
		\hspace{0.8em}	0 &\quad\text{otherwise.} \\ 
			\end{cases} 
		$$
		where $k_A = \left( \int_{\{x\in N \ | \ G_A(x) \neq G_B(x)\}} g_A(t) dt\right) ^{-1}$ is the normalization constant and $k_B$ is defined likewise.
	\end{mydef}
	
	Note that the dominance density function is not correctly defined when $\int_{N} |g_A(x) - g_B(x)|dx = 0$.

	\begin{mydef}
		\label{def:dominance_rate}
		(Dominance rate)
		Let $X_A$ and $X_B$ be two continuous random variables defined in a connected set $N\subseteq \mathbb{R}$.
		The dominance rate of $X_A$ over $X_B$ is defined as
		$$
		\mathcal{C}_{\mathcal{D}}(X_A,X_B) = 
		\begin{cases}
		0.5 ,  \quad \text{if} \int_{N} |g_A(x) - g_B(x)|dx = 0 \\
		0.5   \int_{N} \mathcal{D}_{X_A,X_B}(t) dt + 0.5 , \quad \text{otherwise.} \\ 
		\end{cases} 
		$$
	\end{mydef}

	Basically, we are measuring the amount of mass of $X_A$ in which $G_A(x) > G_B(x)$ minus the amount of mass of $X_B$ in which $G_A(x) < G_B(x)$.
	This value is then normalized so that all sections in which $G_A(x) = G_B(x)$ are ignored, i.e.\vspace{0.5em}
	$\int_{N} \mathcal{D}_{X_A,X_B}(t) dt = $
	$$\dfrac{\mathbb{E}_A[\mathcal{I}[G_A(x)>G_B(x)]]}{\mathbb{E}_A[\mathcal{I}[G_A(x) \neq G_B(x)]]} - \frac{\mathbb{E}_B[\mathcal{I}[G_A(x) < G_B(x)]]}{\mathbb{E}_B[\mathcal{I}[G_A(x) \neq G_B(x)]]}$$
	Finally, we apply the linear transformation $l(x) = 0.5x - 0.5$ ensuring the dominance rate is defined in the interval $[0,1]$ (instead of $[-1,1]$), required to comply with Property~\ref{prop:bounds_with_interpretation}.

	From
	 
	i) $\mathcal{C}_{\mathcal{D}}(X_A,X_B) = 1 \iff X_A \succ X_B$ and 
	
	ii) $\mathcal{C}_{\mathcal{D}}(X_A,X_B) = 0 \iff X_B \succ X_A$,
	
	 we deduce that the dominance rate satisfies Property~\ref{prop:bounds_with_interpretation}.
	Note that the previous deduction is only possible when $g_A$ and $g_B$ are bounded, as this implies that $G_A$ and $G_B$ are continuous.
	Specifically, it is enough to find a point in $N$ in which $G_A(x) > G_B(x)$ to satisfy that $\int_{x \in \{t\in N \ | \ G_A(t) > G_B(t)\}} g_A(x) dx > 0$, and this point is guaranteed to exist when $X_A \succ X_B$ because of the definition of the dominance.
	The dominance rate is also a well behaved dominance measure, as it satisfies Properties~\ref{prop:bounds_with_interpretation}, \ref{prop:antisimmetry}, \ref{prop:inverse}, \ref{prop:equality_value}, \ref{prop:translation}, \ref{prop:scaling} and \ref{prop:partial_translation_within_support}.

	We have seen that the dominance measures $\mathcal{C}_\mathcal{P}$ and $\mathcal{C}_\mathcal{D}$ satisfy most of the properties listed in Section~\ref{section:properties_of_comparison_comparison_functions}. 
	As we will see in the next section, their values are related.

	\subsection{The relationship between $\mathcal{C}_\mathcal{P}$ and $\mathcal{C}_\mathcal{D}$}	
	
	In Section~\ref{section:background} we stated that $\mathcal{C}_\mathcal{P} = 1$ is a stronger condition than $\mathcal{C}_\mathcal{D} = 1$, because $\mathcal{C}_\mathcal{P}(X_A,X_B) = 1$ implies that for all $x$ in $N$ that $G_A(x) < 1$, $G_B(x) = 0$.
	On the other hand,  $\mathcal{C}_\mathcal{D} = 1$ implies that $X_A \succ X_B$ (the two conditions in Definition~\ref{def:dominance}), which is weaker.
	In the diagram below, we show the values of $\mathcal{C}_\mathcal{P}$ and $\mathcal{C}_\mathcal{D}$ that imply other values of $\mathcal{C}_\mathcal{P}$ and $\mathcal{C}_\mathcal{D}$.
	Each arrow can be interpreted as an implication.
	The implications are transitive: i.g. $\mathcal{C}_\mathcal{D}(X_A,X_B) = 1$ implies $\mathcal{C}_\mathcal{P}(X_A,X_B) > 0.5$.
	
	\begin{figure}[!h]
		\centering
		\begin{tikzpicture}[node distance=1.25cm and 0.8cm,
		every node/.style={fill=white, font=\sffamily}, align=center]
	    \node (Pe1)    []          {$\mathcal{C}_\mathcal{P}(X_A,X_B) = 1$};
		\node (De1)  [below of=Pe1]        {$\mathcal{C}_\mathcal{D}(X_A,X_B) = 1$};
		\node (Pg05) [xshift=1.75cm, below of=De1]                   {$\mathcal{C}_\mathcal{D}(X_A,X_B) > 0.5$};
		\node (Dg05) [xshift=-1.75cm, below of=De1]    {$\mathcal{C}_\mathcal{P}(X_A,X_B) > 0.5$};
		\node (invNode2) [xshift=-6cm, yshift=1.5cm, below of=Pg05] {\color{black!50}\small };
		\node (inv1) [yshift=1.5cm, above of=invNode2] {\color{black!50}\small };
		
		\draw[->] (Pe1) -- (De1);
		\draw[->] (De1) -- (Dg05);
		\draw[->] (De1) -- (Pg05);
		\end{tikzpicture}
		\caption{Implications between the values of $\mathcal{C}_\mathcal{P}$ and $\mathcal{C}_\mathcal{D}$.}
	\end{figure}
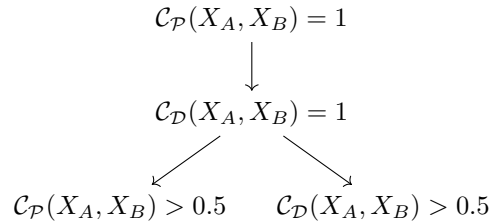

	\subsection{Estimating $\mathcal{C}_\mathcal{P}$ and $\mathcal{C}_\mathcal{D}$}	
	\label{section:simple_estimators_for_cp_and_cd}	
	In the previous sections, we have assumed that the random variables $X_A$ and $X_B$ are known, but usually, we only have a few observed values from each random variable.
	Therefore, it may be interesting to estimate $\mathcal{C}_\mathcal{P}$ and $\mathcal{C}_\mathcal{D}$ from the observed samples.
	With this purpose, we propose the following empirical estimates of $\mathcal{C}_\mathcal{P}$ and $\mathcal{C}_\mathcal{D}$.

	\begin{mydef}(estimation of $\mathcal{C}_\mathcal{P}$) \\
		\label{def:estimation_of_CP_simple}
		Let $X_A$ and $X_B$ be two continuous random variables and $A_n = \{a_1,...,a_n\}$ and $B_n = \{b_1,...,b_n\}$ their $n$ observations respectively.
		We define the estimation of the probability that $X_A$ < $X_B$ as 
		$$\widetilde{\mathcal{C}_\mathcal{P}}(A_n, B_n) =  \sum_{i,k=1...n} \dfrac{\sign(b_k -a_i)}{2n^2} + \frac{1}{2}.$$
	\end{mydef}

	This estimator is well known in the literature because it is the U statistic of the Mann-Whitney test~\cite{10.2307/2236101}.

	\begin{mydef}(estimation of $\mathcal{C}_\mathcal{D}$) \\
		\label{def:estimation_of_CD_simple}
		Let $X_A$ and $X_B$ be two continuous random variables and $A_n = \{a_1,...,a_n\}$ and $B_n = \{b_1,...,b_n\}$ their $n$ observations respectively.
		Let $C_{2n} = \{c_j\}_{j=1}^{2n}$ be the list of all the sorted observations of $A_n$ and $B_n$ where $c_1$ is the smallest observation and $c_{2n}$ the largest.
		Suppose that $a_i \neq b_k$ for all $i,k=1,...,n$. 
		We define the estimation of the dominance rate as 
		$$\widetilde{\mathcal{C}_\mathcal{D}}(A_n, B_n) =  \sum_{j=1}^{2n} \dfrac{    \mathcal{I}(\hat{G}_A(c_j) > \hat{G}_B(c_j) \land c_j \in A_n)    }{2n}  - $$
		$$\sum_{j=1}^{2n} \dfrac{    \mathcal{I}(\hat{G}_A(c_j) < \hat{G}_B(c_j) \land c_j \in B_n)    }{2n} + \frac{1}{2}.$$
		
		where $\mathcal{I}$ is the indicator function and $\hat{G}_A(x)$ and $\hat{G}_B(x)$ are the empirical distributions estimated from $A_n$ and $B_n$ respectively.
	\end{mydef}

	For simplicity, this estimator of the dominance rate assumes there are no repeated samples. 
	However, it can be extended to take into account repeated values (see Appendix~\ref{appendix:proof_difference_graph}).

	\clearpage
	
	\section{Cumulative difference-plot}
	\label{section:limitedata}
	In this section, we propose a graphical method called \textit{cumulative difference-plot} that shows the estimations of $\mathcal{C}_\mathcal{P}$ and $\mathcal{C}_\mathcal{D}$ decomposed by quantiles: $\mathcal{C}_\mathcal{P}$ and $\mathcal{C}_\mathcal{D}$ can be visually estimated from the difference plot.
	In addition, the proposed plot allows a comparison of quantiles of the two random variables.
	The proposed approach also models the uncertainty associated with the estimation of the \textit{cumulative difference-plot} from the data.

	\subsection{Quantile random variables}
	\label{section:quantile_rv}

	From a practical point of view, it is unlikely that the probability densities of the compared random variables $X_A$ and $X_B$ are known.
	Usually, we only have $n$ observations ${A_n = \{a_1,...,a_n\}}$ and ${B_n = \{b_1,...,b_n\}}$ from each random variable.
	The proposed \textit{cumulative difference-plot} is based on two random variables $Y_A$ and $Y_B$ that are defined with these observations.
	Specifically, we define the densities of the two \textit{quantile random variables} $Y_A$ and $Y_B$ as a mixture of several uniform distributions in the interval $[0,1]$.

	\begin{figure}
		\centering
		\includegraphics[width=0.75\linewidth]{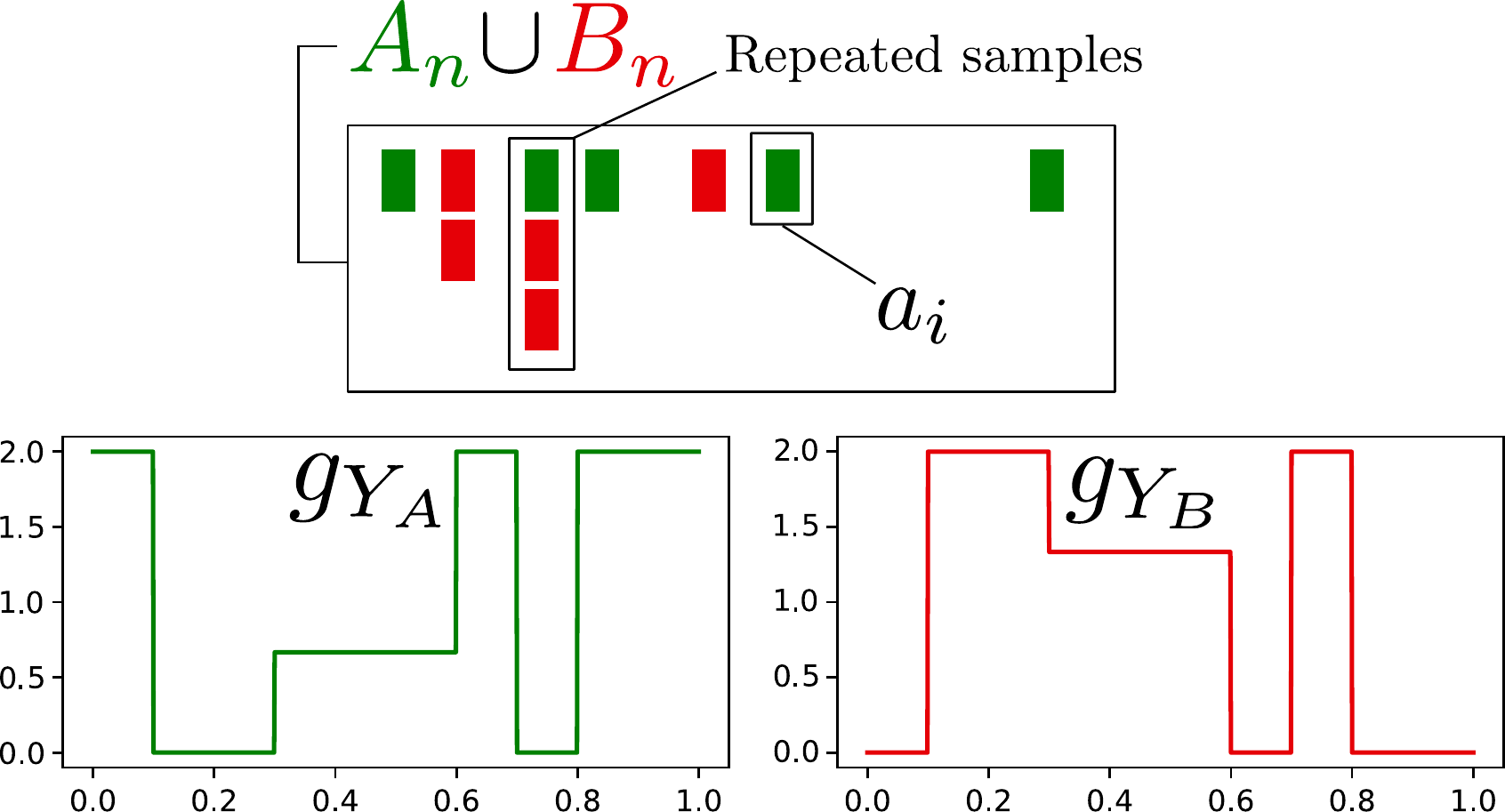}
		\caption{An example of the probability density functions of $Y_A$ and $Y_B$ given the observed samples $A_n \cup B_n$.}
		\label{fig:from_samples_inkscape_Y_A_and_Y_B}
	\end{figure}

	The uniform distributions in the quantile random variables are placed according to their rank in $A_n \cup B_n$.
	Assuming no repetitions, for each value $k$ in $A_n \cup B_n$, its corresponding kernel is centered in $\frac{\text{rank}(k) + 0.5}{2n}$ where $\text{rank}(k)$ is the rank of $k$ in $A_n \cup B_n$.
	The kernels have a bandwidth of $1/4n$, ensuring that the sum of the densities of $Y_A$ and $Y_B$ is constant.
	If there are repeated values in $A_n \cup B_n$, their corresponding kernel is placed at the middle of the previous and the next rank, and the width of the kernel is increased proportionally with respect to the number of repetitions.
	See Figure~\ref{fig:from_samples_inkscape_Y_A_and_Y_B} for an example.
	In Appendix~\ref{appendix:estimation_densityY_A_step_by_step} we show how to compute the probability densities of $Y_A$ and $Y_B$ step by step.

	A more simple approach would be to estimate and define the quantile random variables through the empirical cumulative distribution functions of the observed samples of $X_A$ and $X_B$.
	However, the quantile random variables defined through uniform kernels have some interesting properties: they have the same $\mathcal{C}_\mathcal{P}$ and $\mathcal{C}_\mathcal{D}$ as the kernel density estimations of $X_A$ and $X_B$ (shown in Appendix~\ref{appendix:Y_A_mantains_same_mathcalC}).
	In addition, $g_{Y_A}(x) + g_{Y_B}(x) = 2$ for all ${x \in [0,1]}$. 
	As we will later see, these properties are essential for the interpretation of the \textit{cumulative difference-plot}.

	\subsection{Confidence bands}
	\label{section:comp_cumulative_limited_data}

	The \textit{cumulative difference-plot} is based on the cumulative distribution functions of $Y_A$ and $Y_B$, which are estimated from the observed samples.
	This means that we need to model the uncertainty associated with the estimations.
	Confidence bands are a suitable choice in this scenario: a confidence band is a region in which the cumulative distribution is expected to be with a certain confidence.
	The size of the band is determined by the number of samples and the desired level of confidence: a high number of samples or a low level of confidence are associated with a small band size.
	There is an extensive literature~\cite{cheng_confidence_1983,10.2307/2958458,cheng1988one,wangSmoothSimultaneousConfidence2013, doi:10.1080/01621459.1990.10474983,doi:10.1080/01621459.1989.10478742,hall_simple_2013} on how to estimate the confidence bands of cumulative distributions, and, in this work, we use a simple bootstrap approach\footnote{
		The bootstrapping~\cite{efron_introduction_1993} method involves considering the observed values as a population from which random samples with replacement are drawn.
		These samples are then used to estimate the upper and lower pointwise confidence intervals of the cumulative distribution of $Y_A$ and $Y_B$.
		Since a pointwise estimation of the confidence interval is used, we can expect that a portion proportional to $\alpha$ will fall outside the confidence band.\\
		Note that we are interested in having an overall confidence of $1 - \alpha$, thus, we want that the cumulative distributions of $Y_A$ and $Y_B$ are inside their confidence bands at the same time with this level of confidence~\cite{goemanMultipleHypothesisTesting2014,bauerMultipleTestingClinical1991}.
		This means that we have to use a higher confidence level for each band: $\sqrt{(1 - \alpha)}$.
	}.

	\begin{figure}
		\centering
		\includegraphics[width=0.45\textwidth]{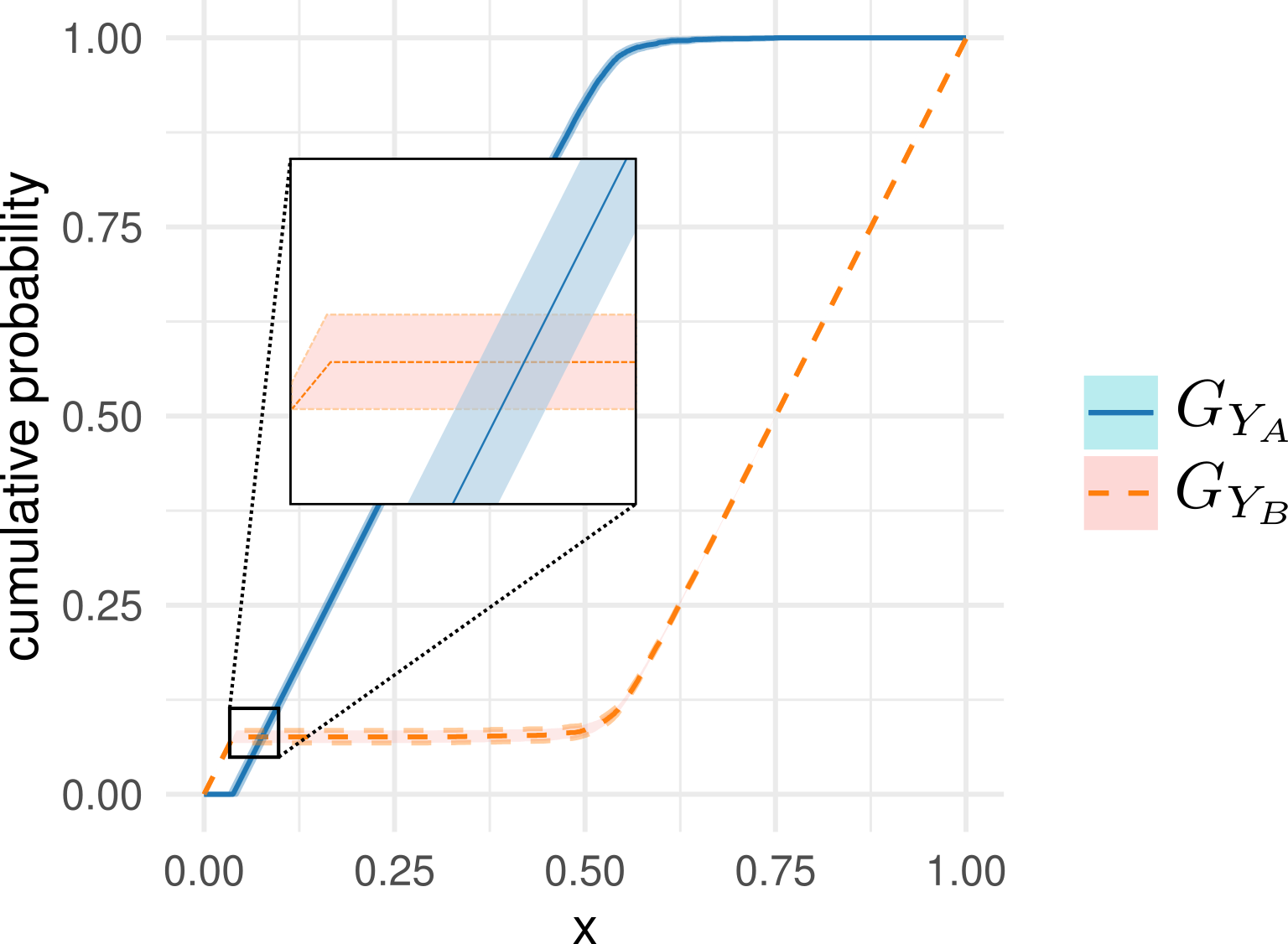}
		\caption
		{
			The confidence bands of the cumulative distributions of the quantile random variables $Y_A$ and $Y_B$ corresponding to the distributions $X_A$ and $X_B$ in Case 2 (shown in Figure~\ref{fig:example_2_mean_median_prob_better}).
		}
		\label{fig:example2_bands_x_primaAB} 
	\end{figure}

	To illustrate how to interpret the confidence bands of the cumulative distributions of $Y_A$ and $Y_B$, we will assume that we have observed $n = 400$ samples from each random variable $X_A$ and $X_B$ from Case 2 (see Figure~\ref{fig:example_2_mean_median_prob_better} in Section~\ref{section:background}).
	We show the 95\% confidence bands of the cumulative distribution functions of $Y_A$ and $Y_B$ in Figure~\ref{fig:example2_bands_x_primaAB}.
	The estimated cumulative distribution functions of $Y_A$ and $Y_B$ resemble the cumulative distribution functions of $X_A$ and $X_B$ from Figure~\ref{fig:example_2_pareto}.
	However, there are several relevant differences.
	In Figure~\ref{fig:example2_bands_x_primaAB}, we observe that $Y_A$ and $Y_B$ are defined in the interval $[0,1]$, while the cumulative distribution functions of $X_A$ and $X_B$ are defined in the sample space.
	Each of the values in this interval can be used to deduce the distribution with the lowest quantile: at $x=0.5$, the cumulative distribution function of $Y_A$ is larger than the cumulative distribution function of $Y_B$, hence, the median of $X_A$ is lower than the median of $X_B$.
	In addition, the sum of the density function of $Y_A$ and $Y_B$ is constant.
	As a result, unlike $X_A$ and $X_B$, the probability density functions of $Y_A$ and $Y_B$ do not have large areas where the probability density is zero.

	\subsection{The cumulative difference-plot}
	\label{section:the_difference_plot}

	In this section, we introduce a new graphical method designed to visually analyze the dominance of $X_A$ and $X_B$.
	Without loss of generality, a minimization\footnote{
		Note that if the random variables being compared take values in a maximization setting (higher values are preferred), then the random variables need to be redefined as the inverse with respect to the sum (this simply means the sampled values are multiplied by $-1$) before generating the cumulative difference-plot.
		With this change, the interpretation of the cumulative difference-plot is consistent and intuitive: for either minimization or maximization, on the left side of the cumulative difference-plot, the most desirable values that the random variables take are compared.
		If the difference is positive on the left side of the cumulative difference-plot, then the best values that $X_A$ takes are better than the best values that $X_B$ takes.
		Similarly, the worst values are compared on the right side of the cumulative difference-plot: if the difference is positive on this side, then the worst values of $X_A$ are better than the worst values of $X_B$.	
	} setting is assumed: lower values in $X_A$ and $X_B$ are preferred to higher values.
	It builds upon the difference function defined as 
	
	\begin{equation}
	\label{eq:difference_function_for_cum_plot}
	\begin{aligned}
	\diff \colon [0,1] & \longrightarrow \left[ -1,1\right]  \\
	x\ \ 				 & \longmapsto G_{Y_A}(x) -  G_{Y_B}(x),
	\end{aligned}
	\end{equation}

	where $G_{Y_A}(x)$ and $G_{Y_B}(x)$ are the cumulative distribution functions of $Y_A$ and $Y_B$, respectively.

	The \textit{cumulative difference-plot} is the plot of the difference function (the difference between the cumulative distributions of $Y_A$ and $Y_B$), including a confidence band.
	A positive value in the \textit{cumulative difference-plot} can be interpreted as a quantile in which the cumulative distribution function of $X_A$ is larger than the cumulative distribution function of $X_B$.
	Hence, if the difference is positive at $0.5$, the median of $X_A$ is lower than the median of $X_B$ (assuming minimization).
	In this sense, the best values obtained by both random variables are compared on the left side, and the worst values are compared on the right side.
	
	\subsubsection{$\mathcal{C}_\mathcal{P}$ and $\mathcal{C}_\mathcal{D}$ in the cumulative difference-plot}
	 
	$\mathcal{C}_\mathcal{P}$ and $\mathcal{C}_\mathcal{D}$ can be directly obtained from the proposed plot.
	The integral of the difference between $Y_A$ and $Y_B$ is $\mathcal{C}_\mathcal{P} - 0.5$ (we prove this in Appendix~\ref{appendix:proof_difference_graph}).
	Formally, $\mathcal{C}_\mathcal{P} = 0.5 + \int_{0}^{1}  \diff(x) dx$.
	However, in practice, $\mathcal{C}_\mathcal{P}$ can be visually estimated by adding 0.5 to the difference in the areas over and under 0.
	For the example shown in Figure~\ref{fig:plotexamplecdandcpindifferencegraph}, $\mathcal{C}_\mathcal{P} = 0.5 - \text{Area1} + \text{Area2}$.
	The difference can only be in the area highlighted in blue in the cumulative difference-plot.
	When the probability that $X_A < X_B$ is $1$, the difference is at its maximum:	in the cumulative difference-plot we see a line from $(x, f(x))=(0,0)$ to $(0.5,1)$ and from $(0.5,1)$ to $(1,0)$.
	Similarly, when the probability that $X_A < X_B$ is $0$, the difference between $Y_A$ and $Y_B$ is equal to the lowest possible values inside the light blue area.

	By contrast, $\mathcal{C}_\mathcal{D}$ is represented in the plot as the total length in which the difference is positive minus the total length in which the difference is negative.
	Specifically, 
	\begin{equation}
		\label{equation:C_D_from_cumulative_diff_plot_visually}
		\mathcal{C}_\mathcal{D} = \frac{  \dfrac{\int_{0}^{1}\mathcal{I}[\diff(x) > 0] - \mathcal{I}[\diff(x) < 0] dx}{2} + \frac{1}{2}}{\int_{0}^{1}\mathcal{I}[\diff(x) \neq 0]dx},
	\end{equation}
	where $\mathcal{I}$ is the indicator function (we prove this in \nobreak{Appendix}~\ref{appendix:proof_difference_graph}).
	As an example, $\mathcal{C}_\mathcal{D}$ is proportional to ${\text{Length2} - \text{Length1}}$ in Figure~\ref{fig:plotexamplecdandcpindifferencegraph}: it is higher than $0.5$, because ${\text{Length2} > \text{Length1}}$.
	In this example, there is no need to divide by the total length in which the difference is nonzero because the difference is zero in only a limited number of points.
	In such cases, $\mathcal{C}_\mathcal{D}$ can  also be estimated as the total length in which ${\diff(x) > 0}$.
	In the example in Figure~\ref{fig:plotexamplecdandcpindifferencegraph}, the estimation is ${\mathcal{C}_\mathcal{D} = \text{Length2} \approx 0.75}$.
	Note that Equation~\eqref{equation:C_D_from_cumulative_diff_plot_visually} is not correctly defined when $Y_A$ and $Y_B$ are equal, but this is an easy case to identify, as the difference is constantly $0$.

	\begin{figure}
		\centering
		\includegraphics[width=0.45\linewidth]{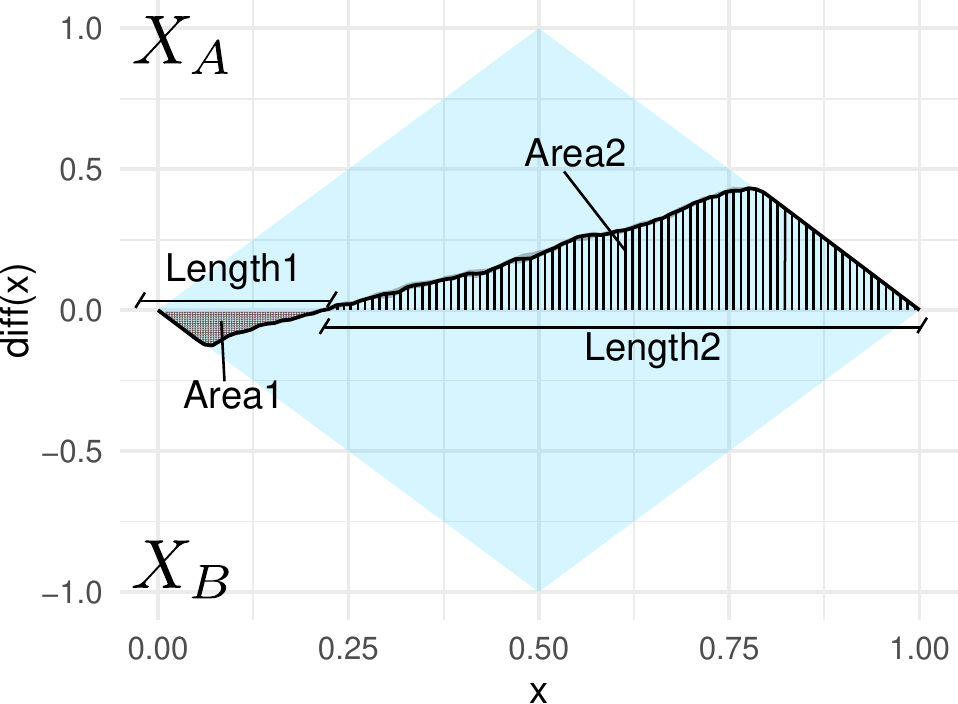}
		\caption{The areas and lengths in the cumulative difference-plot that can be used to deduce $\mathcal{C}_\mathcal{P}$ and $\mathcal{C}_\mathcal{D}$.}
		\label{fig:plotexamplecdandcpindifferencegraph}
	\end{figure}

	\begin{figure}
		\centering
		\includegraphics[width=0.45\linewidth]{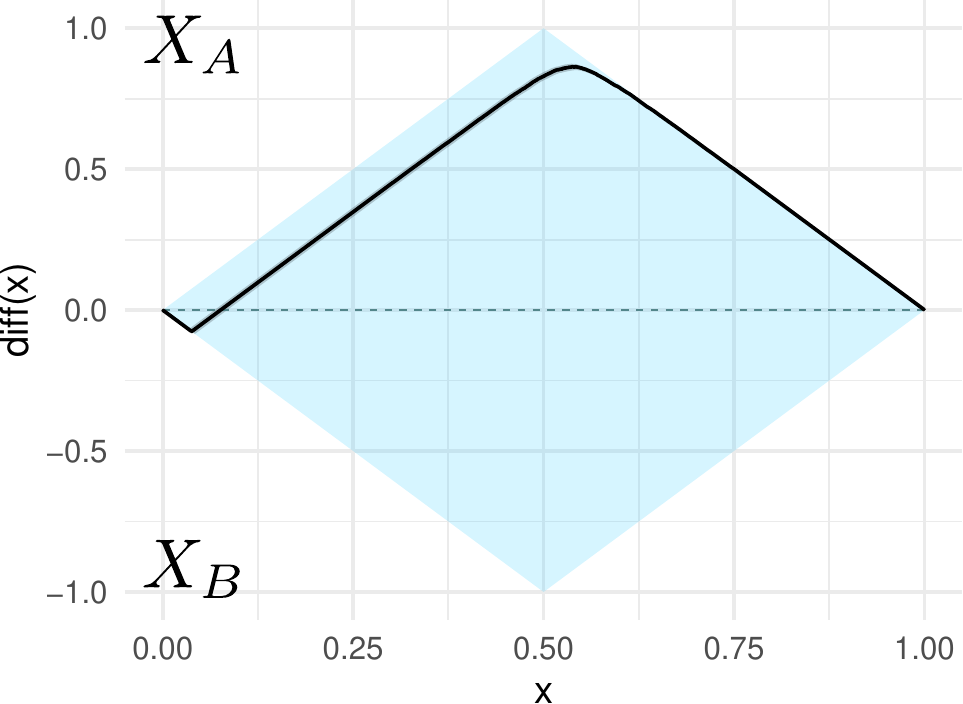}
		\caption
		{
			The cumulative difference-plot for Case 2: the difference between the cumulative distribution functions of $Y_A$ minus $Y_B$ corresponding to the distributions $X_A$ and $X_B$ in Case 2 (shown in Figure~\ref{fig:example_2_mean_median_prob_better}).
		}
		\label{fig:Example2_xprimaABDiff} 
	\end{figure}

	\subsubsection{Illustrative example}
	
	Figure~\ref{fig:Example2_xprimaABDiff} shows the cumulative difference-plot for the random variables $X_A$ and $X_B$ from Case 2 (their densities were shown in Figure~\ref{fig:example_2_mean_median_prob_better}).
	First, we see that the difference is both negative and positive, hence, neither random variable dominates the other.
	The difference is negative when $x = 0.05$ or lower. 
	This can be interpreted as $X_B$ having a smaller $5\%$ quantile than $X_A$.
	The difference is positive otherwise, thus we deduce from the cumulative difference-plot that the $25\%$, the $50\%$ (the median), the $75\%$ and $95\%$ quantiles are smaller in $X_A$ than in $X_B$.
	In other words, the random variable $X_B$ can take really low values with a small probability, but apart from these really low values, $X_A$ takes lower values than $X_B$.
	This is also reflected by $\mathcal{C}_\mathcal{D}(X_A,X_B) > 0.75$, as deduced from $\diff(x) > 0$ for all $x \in (0.25,1)$.
	The difference is also near its maximum value, implying that its integral is high and thus $\mathcal{C}_\mathcal{P}$ (the probability that $X_A < X_B$) is also near 1.

	\section{Related work}
	\label{section:related_work}

	Statistical assessment of experimental results is a very studied research topic.
	In this section, we locate our proposal in the field and focus on similarities and differences with respect to other random variable comparison methods.

	\subsubsection{Visualizing densities}
	As mentioned in the introduction, it makes sense to model the performance of stochastic optimization algorithms as random variables.
	Therefore, statistical tools that compare random variables have become an increasingly important part of the analysis of experimental data.
	Among these tools, visualization techniques such as histograms or box-plots are usually applied before the rest of the methods.
	The advantage of these methods is their simplicity.
	If one of the random variables clearly takes lower values than the other, then these two methods effectively convey this message simply and naturally.
	Unfortunately, when both random variables have similar probability densities, these two methods might fail to represent the random variables in a way that makes it easy to compare them (example shown in Section~\ref{sec:case_study_histogram_and_boxplot}).

	The simplicity of these methods is also a drawback: for example, they give no information about the uncertainty associated with the estimates.
	The histogram suffers from the bin positioning bias~\cite{thas2010comparing,scikit-learndevelopersDensityEstimation2021}.
	A kernel density estimation with the uniform kernel---considered to be the moving window equivalent of the histogram---overcomes this limitation~\cite{thas2010comparing}, at the cost of using a more complex model.
	Similarly, the box-plot has a ``non\nobreakdash-injectivity'' problem: very different data can still have the same box-plot~\cite{matejka2017same,chatterjee_generating_2007}.
	The violin-plot is an extension of the box-plot that overcomes the above limitation by combining the kernel density estimate of the random variables with the traditional box-plot~\cite{hintze1998violin}.
	The proposed cumulative difference-plot improves on these methods because it represents the data clearly, even when the two random variables being compared are similar.

	\subsubsection{Null hypothesis statistical testing}
	\label{sect:null_hypothesis_testing_explaination}
	
	Null hypothesis tests can be used to compare random variables without having to visually represent them.
	In a very general way, carrying out a null hypothesis test involves the following: first, a null hypothesis is proposed.
	Under certain assumptions, the null hypothesis implies that a given statistic obtained from the data follows a known distribution.
	Then, assuming the null hypothesis is true, the probability of obtaining data with a more extreme statistic value\footnote{The definition of what \textit{data with a more extreme statistic value} is not the same for every null hypothesis test, and it depends on the test being used.} than the observed~\cite{conover1980practical} is computed.
	When the probability under the null hypothesis of the observed statistic is lower than a predefined threshold, the null hypothesis is rejected and the alternative hypothesis is accepted~\cite{greenland_statistical_2016}.
	Usually, this threshold is set at an arbitrary but well established~\cite{wasserstein_asa_2016} $p = 0.05$, although recently, further reducing the threshold to $p = 0.005$ has been proposed~\cite{benjamin_redefine_2018,ioannidis2018proposal}.

	In the context of comparing two random variables $X_A$ and $X_B$, in general, we cannot assume that a statistic obtained from the data follows a known distribution under the null hypothesis.
	In this case, a non-parametric test~\cite{conover1980practical} is a suitable choice.
	Specifically, the Mann-Whitney~\cite{10.2307/2236101} test is a good choice, as the samples observed from the random variables are i.i.d for each random variable~\footnote{
		For paired data, the Wilcoxon signed-rank test~\cite{wilcoxonIndividualComparisonsRanking1945} or the sign test~\cite{conover1980practical} should be used.
		However, in the context of this paper, the samples observed from the random variables are not paired.
		In this paper, we consider the Mann-Withney test as it is probably the most well known non-parametric test for unpaired data, although take into account that more modern alternatives have been proposed~\cite{ledwinaNonparametricTestsStochastic2012,baumgartnerNonparametricTestGeneral1998,biswasNonparametricTwosampleTest2014}.
	}.
	With this test, the null hypothesis is that $\mathcal{P}(X_A >X_B) = \mathcal{P}(X_B > X_A)$, and a possible alternative hypothesis is that $X_A \succ X_B$~\cite{10.2307/2236101}.

	Null hypothesis tests have some limitations: for example, the \textit{p}-value does not separate between the effect size and the sample size~\cite{benavoli2017time,calvoBayesianPerformanceAnalysis2019}.
	In addition, rejecting the null hypothesis does not always mean that there is evidence in favor of the alternative hypothesis: it just means that the observed statistic (or a more extreme statistic) is very unlikely when the null hypothesis is true.

	To show this, we generate $400$ samples of the distributions $X_A$ and $X_B$ from Case 2 (density functions shown in Figure~\ref{fig:example_2_mean_median_prob_better}) and we apply the Mann-Whitney test, rejecting the null hypothesis when $p < 0.005$.
	If we repeat this experiment $10^4$ times (with different samples each time), the null hypothesis is rejected every time\footnote{The source code to replicate this experiment is available in the file \href{https://github.com/EtorArza/SupplementaryPaperRVCompare/blob/main/mann_whitney_counter_example.R}{mann\_whitney\_counter\_example.R} in our \href{https://github.com/EtorArza/SupplementaryPaperRVCompare}{Github repository}.}.
	However, $X_A \nsucc X_B$ and $X_B \nsucc X_A$, implying that the alternative hypothesis is not true.
	Note that the proposed cumulative difference-plot (shown in Figure~\ref{fig:Example2_xprimaABDiff}) avoids this problem because it correctly points out that neither random variable dominates the other one, for the same case and with the same number of samples.

	\subsubsection{Bayesian analysis}
	\label{sec:bayesian_simplex_explained}
	
	As an alternative~\cite{benavoli2017time, calvoBayesianPerformanceAnalysis2019} to the limitations of null hypothesis test, Bayesian analysis has been proposed.
	Bayesian analysis \cite{gelmanBayesianDataAnalysis2014,bernardo2009bayesian} estimates the probability that a hypothesis is true, conditioned to the observed data.
	This estimation requires the prior probabilities of the hypotheses and the data, but usually, they are assumed to follow a distribution that gives equal probability to all hypotheses and data.
	Recently a Bayesian version of the Wilcoxon signed-rank test~\cite{benavoli2014bayesian,benavoli2017time} has been proposed.
	In this paper, we will consider the simplex-plot of its posterior distribution.
	For convenience, in the rest of the paper, we will call it \textit{simplex-plot}.

	Once the posterior distribution is known, the probability that the difference between a sample from $X_A$ and a sample from $X_B$ is in the interval $(-\infty,-r)$, $[-r,r]$ or $(r, + \infty)$ can be computed.
	These probabilities can be interpreted as the probability that $X_A > X_B$, $X_A = X_B$ and $X_B > X_A$, where two samples $x_a$ and $x_b$ are considered equal when $|x_a - x_b| \leq r$.
	Note that the simplex-plot is just a convenient representation of the posterior distribution, where `rope' or \textit{range of practical equivalence} denotes hypothesis $X_A = X_B$ (when the difference is in the interval $[-r,r]$).

	We computed the simplex-plot (Figure~\ref{fig:simplexcase2}) with the $400$ samples of $X_A$ and $X_B$ from Case 2 obtained in Section~\ref{section:comp_cumulative_limited_data}.	
	Two samples were considered equal when their difference is lower than $r = 10^{-3}$, and we used the prior proposed by \cite{benavoli2017time}.
	We can deduce from this figure that the hypothesis $X_B > X_A$ is much more likely than $X_A = X_B$ or $X_B > X_A$.

\begin{figure}
	\centering
	
	\includegraphics[width=0.35\linewidth]{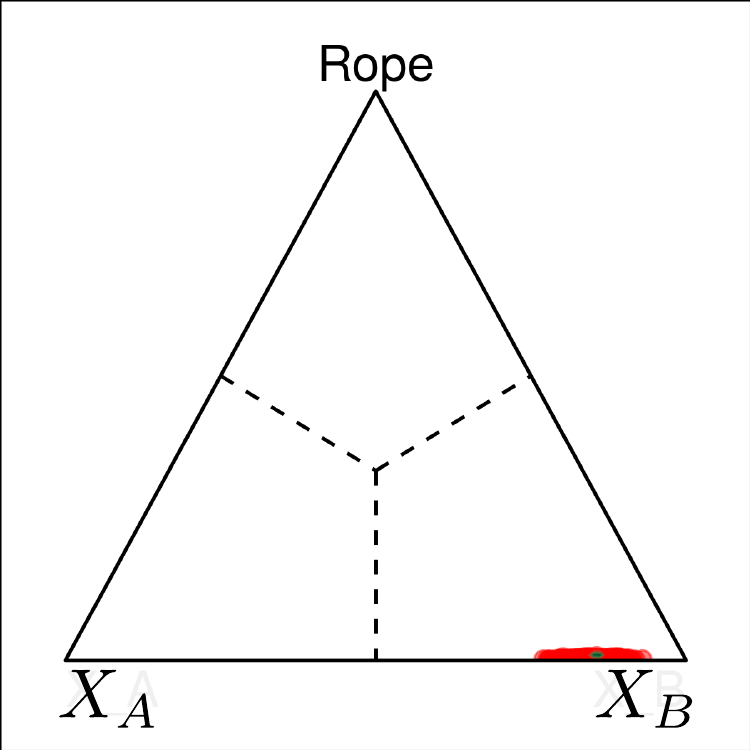}
	\caption{
		The simplex-plot computed with the package \textit{scmamp}~\cite{calvoScmampStatisticalComparison2016} of the posterior distribution for Case 2.
	}
	\label{fig:simplexcase2}
\end{figure}

	The simplex plot summarizes the data through the probabilities of $X_B > X_A$, $X_A = X_B$ or $X_B > X_A$, but does not offer any additional information: we cannot deduce from these probabilities in which intervals the values of a random variable are lower than the other.
	In this sense, the cumulative difference-plot is a more detailed comparative visualization. 
	Specifically, the observation that the $1\%$ lowest values of $X_A$ are lower than the $1\%$ lowest values of $X_B$ cannot be deduced from the simplex-plot, while it is easy to see in the cumulative difference-plot.
	Also, the cumulative difference-plot shows a comparison of the cumulative distributions through the dominance rate, while the simplex-plot does not.

	\subsubsection{Other plots in the interval $[0,1]$}

	The probability-probability plot is defined as 
	$$PP:[0,1]\rightarrow [0,1]^2 : p \rightarrow(p, G_A(G_B^{-1}(p))).$$
	As proposed by Schmid et al.~\cite{schmid_testing_1996}, it can be interpreted via the integral of the non-negative part, which represents the amount of violation against the hypothesis that $X_A$ dominates $X_B$.

	The quantile-quantile plot~\cite{thas2010comparing,wilkProbabilityPlottingMethods1968} is defined as
	$$QQ:[0,1]\rightarrow N^2 : p \rightarrow(G_A^{-1}(p), G_B^{-1}(p)),$$
	and it is a natural way to visualize the differences in quantiles of $X_A$ and $X_B$ in $N$ (the domain of definition of the random variables).

	The quantile-quantile plot also allows a comparison between quantiles, just like the \textit{cumulative difference-plot}.
	However, the cumulative difference-plot proposed in this paper is distinct from the two plots above in three aspects: 
	i) the proposed cumulative difference-plot is defined directly from the observed samples.
	Because of its definition, it has a confidence band built-in, which allows the uncertainty associated with the estimation to be directly interpreted within the plot.
	ii) The proposed cumulative difference-plot contains several statistics simultaneously. 
	Specifically, the estimated $\mathcal{C}_\mathcal{D}$, $\mathcal{C}_\mathcal{P}$ and the comparison of the quantiles can be visually interpreted.
	iii) The proposed plot is just the difference of two cumulative distributions ($G_{Y_A}$ and $G_{Y_B}$), and thus, unlike in the pp-plot and qq-plot mentioned above, it can be defined without the need of the inverse function.
	The random variables $Y_A$ and $Y_B$ have the same $\mathcal{C}_\mathcal{D}, \mathcal{C}_\mathcal{P}$ as the kernel density estimations of the original distributions, and therefore, we can think of the cumulative difference-plot as the difference between the cumulative distribution function of two simpler versions of the original random variables.

%
%
%
%

\section{Experimentation with the cumulative difference-plot}
\label{section:real_world_case_study}
	
To illustrate the applicability of the proposed methodology, in the following, we re-evaluate the experimentation of a recently published work.
In a recent paper, \cite{santucci_gradient_2020} introduced a gradient-based optimizer for solving problems defined in the space of permutations (from now on \textit{PL-GS}). 
In their experimentation, they compared it with an estimation of distribution algorithm~\cite{larranaga2001estimation} (from now on \textit{PL-EDA}).
These two algorithms were tested in a set of $50$ problem instances of the linear ordering problem~\cite{schiavinotto_linear_2004}.
The performance of each algorithm in each instance was estimated with the median relative deviation from the best-known objective value, with $n = 20$ repetitions.
From now on, we call \textit{score} to the relative deviation from the best-known objective value and note that a low score is better than a high score, as it means that the objective value found is closer to the best-known.


In the work by \cite{santucci_gradient_2020}, when the score of one of the algorithms was at least $10^{-4}$ higher than the other, it was considered that one of the algorithms performed better than the other in that instance.
\cite{santucci_gradient_2020} concluded that both algorithms performed equally in the instance \textit{N\nobreakdash-t70n11xx}, as the median scores were exactly the same for both algorithms in this instance.

In the following, we take a closer look at the performance of \textit{PL-EDA} and \textit{PL-GS} in this problem instance by comparing $n=10^3$ measurements of the score from each algorithm.
We increase the sample size from $n = 20$ to $n = 10^3$ because the difference between the performance of the algorithms is small.
With a sample size of $n=20$, the uncertainty is too high to come to any meaningful conclusion (regardless of the statistical methodology considered).
With this increased sample size, we obtained more accurate estimates of the median scores---$\textit{PL-GS}=0.00407,    \textit{ EDA} = 0.00433$, lower is better---and \textit{PL-GS} obtains a better value by a difference higher than $10^{-4}$.

\subsection{Step 1: Visualization}
\label{sec:case_study_histogram_and_boxplot}

Figure~\ref{fig:histogram_Nt70n11xx} shows the histogram of the scores.
It can be deduced from the figure that neither algorithm clearly produces better scores.
In particular, neither algorithm dominates the other: \textit{PL-EDA} has a longer tail both to the right and to the left.
Also, notice that the score of the algorithms is not normally distributed: \textit{PL-GS} has a bimodal shape, and \textit{PL-EDA} has a very long tail to the right (while the tail to the left is shorter).

\begin{figure}
	\centering
	\includegraphics[width=0.65\linewidth]{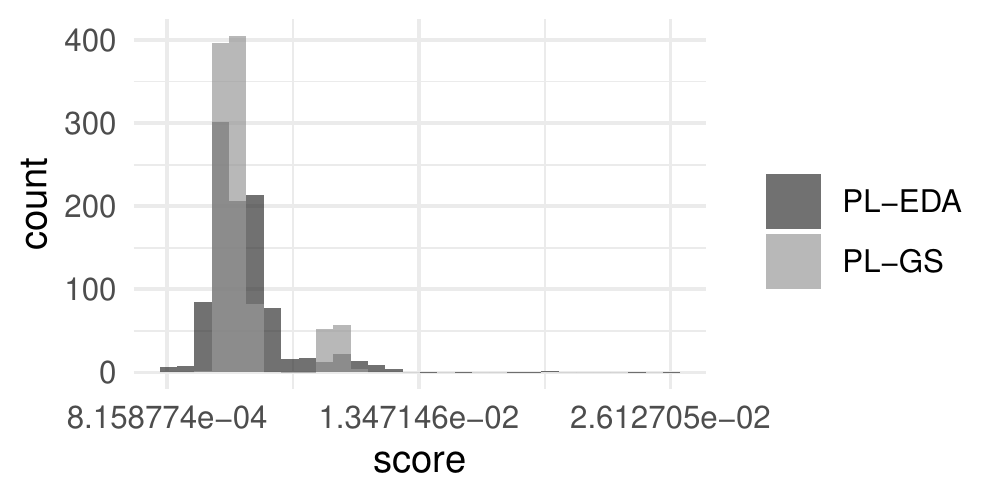}
	\caption{Histogram of the scores obtained in the instance \textit{N-t70n11xx}. Lower is better.}
	\label{fig:histogram_Nt70n11xx}
\end{figure}

Figure~\ref{fig:violin_Nt70n11xx} shows the box-plot and the violin-plot of the data.
Both algorithms have a similar median, but due to the high number of outliers~\cite{carrenoAnalyzingRareEvent2020}, it is difficult to compare the scores of the algorithms with the box-plot.
The same happens with the violin-plot.

\begin{figure}
	\centering
	\includegraphics[width=0.55\linewidth]{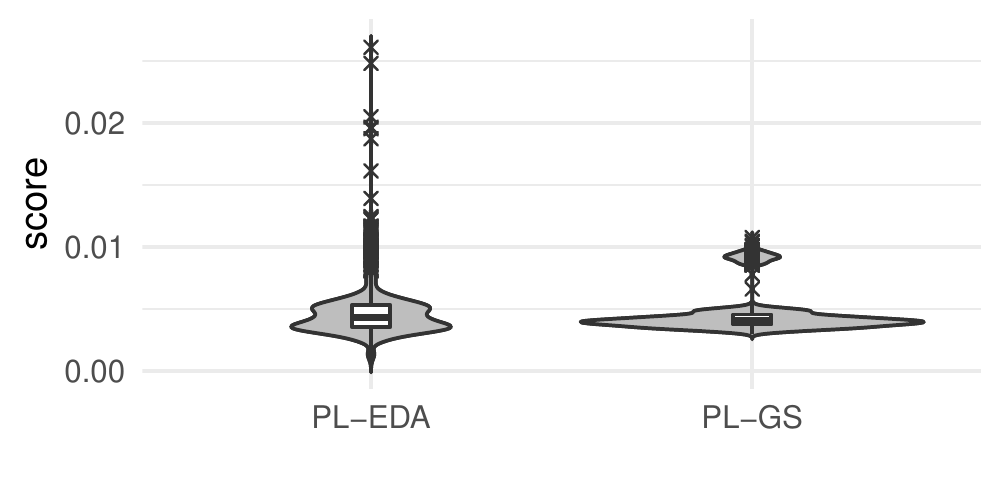}
	\caption{Box-plot and violin-plot of the scores obtained in the instance \textit{N-t70n11xx}. Lower is better.}
	\label{fig:violin_Nt70n11xx}
\end{figure}

\subsection{Step 2: Comparing PL-GS with PL-EDA}

Sometimes, visualization is enough to compare the performance of two algorithms: if one of the algorithms always performs better than the other, there is no need for further analysis.
However, in this case, the three visualization methods considered (histogram, box-plot, and violin-plot) have not been able to summarize the scores obtained with the algorithms in a way that enables an easy comparison.
In the following, we further study the scores of the algorithms with statistical tests, the simplex-plot, and the cumulative difference-plot.

\subsubsection{Mann-Whitney test}

Applying the Mann-Whitney test we obtain a \textit{p}-value of $p = 0.035$, lower than the usually used $0.05$ threshold.
With $\textit{p} < 0.05$, we reject the null hypothesis and accept the alternate hypothesis: the random variable associated with the score of \textit{PL-GS} dominates \textit{PL-EDA}.
Note that neither rejecting the null hypothesis nor a small \textit{p}-value reflect the magnitude of the difference in score of the algorithms.
In addition, as stated when we studied the histogram, we known that it is unlikely that \textit{PL-GS} dominates \textit{PL-EDA}.

\subsubsection{Simplex-plot}

We show the simplex-plot~\cite{benavoli2017time} of the scores in Figure~\ref{fig:simplex_Nt70n11xx}.
Following the criterion by \cite{santucci_gradient_2020}, we considered that two scores are equal when they differ by less than $r = 10^{-4}$.
Unlike in the statistical test, one can deduce the probability that one of the algorithms has a better score than the other from simplex-plot: it is more likely that \textit{PL-GS} takes a lower value than \textit{PL-EDA}.
A closer position in the plot to \textit{PL-EDA} indicates a higher probability of measuring a higher score in \textit{PL-EDA} than in \textit{PL-GS}.
Specifically, from the simplex-plot shown in Figure~\ref{fig:simplex_Nt70n11xx}, we can deduce that given two samples $x_{gs}$ and $x_{eda}$ of the scores of \textit{PL-GS} and \textit{PL-EDA} respectively, 
$$\mathcal{P}(x_{eda} < x_{gs}) < \mathcal{P}(x_{gs} < x_{eda}).$$

However, the difference in these probabilities is small. 
Also, the probability that $\mathcal{P}(x_{gs} = x_{eda})$ is low (no data points near `rope').

\begin{figure}
	\centering
	\includegraphics[width=0.35\linewidth]{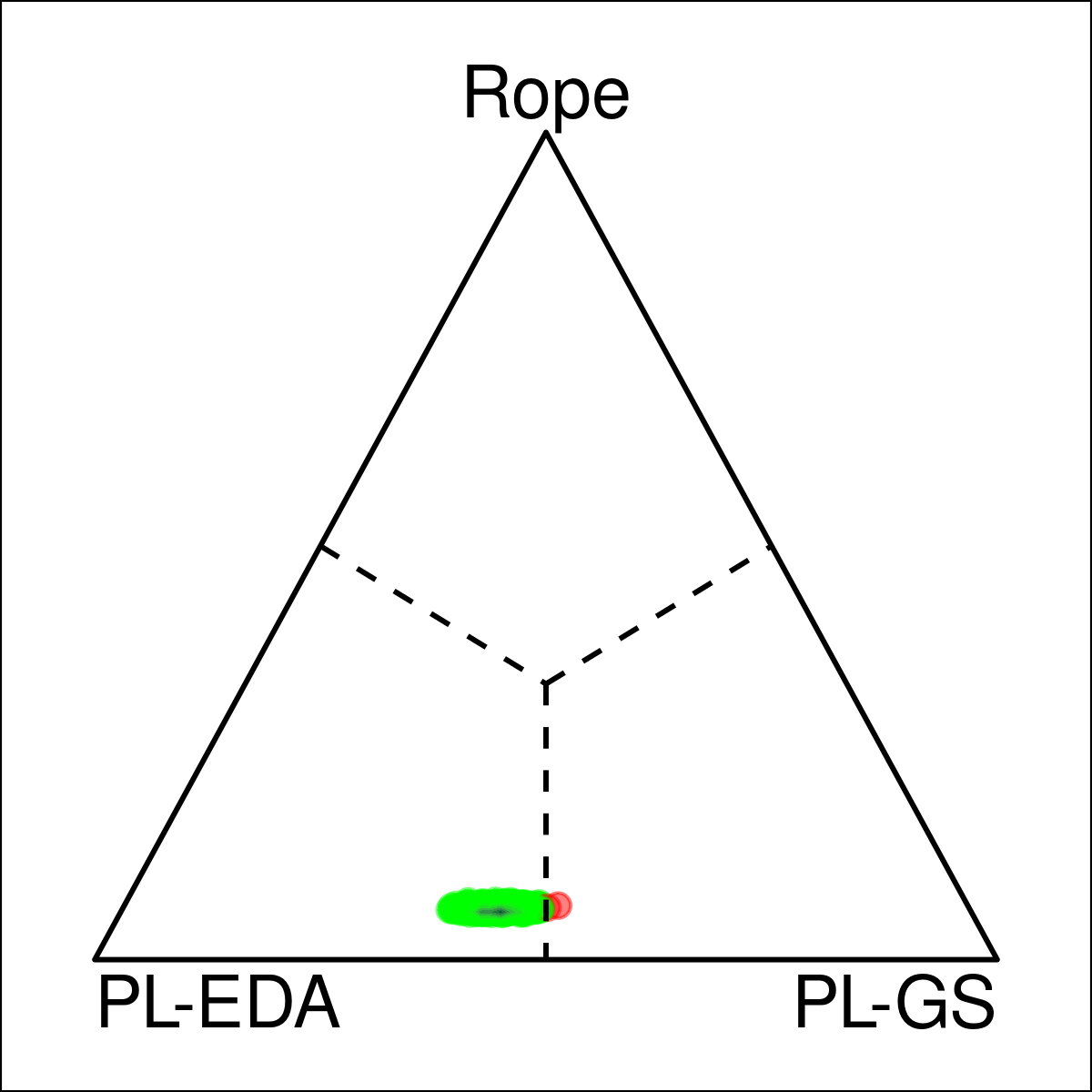}
	\caption{
		Simplex-plot of \textit{PL-GS} and \textit{PL-EDA} in the instance \textit{N-t70n11xx}.
		A closer position in the plot to \textit{PL-EDA} indicates a higher probability of measuring a higher score in \textit{PL-EDA} than in \textit{PL-GS}.
		A low score is preferred to a high score.
	}
	\label{fig:simplex_Nt70n11xx}
\end{figure}

\subsubsection{Cumulative difference-plot}

We show the $95\%$ confidence cumulative difference-plot in Figure~\ref{fig:difference_Nt70n11xx}.
From this plot, we can deduce the following:

\begin{enumerate}
	\item $\mathcal{P}(x_{eda} < x_{gs})$ and  $\mathcal{P}(x_{gs} < x_{eda})$ have similar probabilities, as $\mathcal{C}_\mathcal{P}(\textit{PL-EDA},\textit{PL-GS}) \approx 0.5$. However, The area under $\diff(x)=0$ is a little larger than the area over $\diff(x)=0$, hence $\mathcal{P}(x_{eda} < x_{gs})$ is a little smaller than $\mathcal{P}(x_{gs} < x_{eda})$.
	
	\item Neither algorithm dominates the other one, and what is more, $\mathcal{C}_\mathcal{D}(\textit{PL-EDA},\textit{PL-GS}) \approx 0.5$.
	\item The difference is positive when $x < 0.3$, and therefore, if we only consider the best $30\%$ values of both algorithms, \textit{PL-EDA} dominates \textit{PL-GS}.
	\item The difference is negative when $ x > 0.98$. In this case, we conclude that if we only consider the worst $2\%$ values of \textit{PL-EDA} and \textit{PL-GS}, then \textit{PL-GS} dominates \textit{PL-EDA}.
	\item These ``worst'' $2\%$ values are much less likely than the ``best'' $30\%$ values mentioned in 3), as the estimated probability of these ``best'' and ``worst'' values is $0.3$ and $0.02$ respectively.
	\item The difference is negative at $x = 0.5$ and at $x = 0.75$. This can be interpreted as PL-GS having a better median and a better $75\%$ quantile.
\end{enumerate}

Summarizing the above points, we conclude that the performance of the algorithms is quite similar, and \textit{PL-EDA} takes both better and worse scores than \textit{PL-GS}.
The probability that \textit{PL-EDA} takes these better values is much higher than the probability that it takes worse values.
Therefore, if we are in a setting in which repeating the execution of the algorithms is reasonable, \textit{PL-EDA} is a much better algorithm.
On the other hand, if it is critical to avoid really bad values, then \textit{PL-GS} would be preferred.
With an increased number of samples, it might be possible to better compare the algorithms (it would reduce the uncertainty associated with the size of the confidence band).

\begin{figure}
	\centering
	\includegraphics[width=0.55\linewidth]{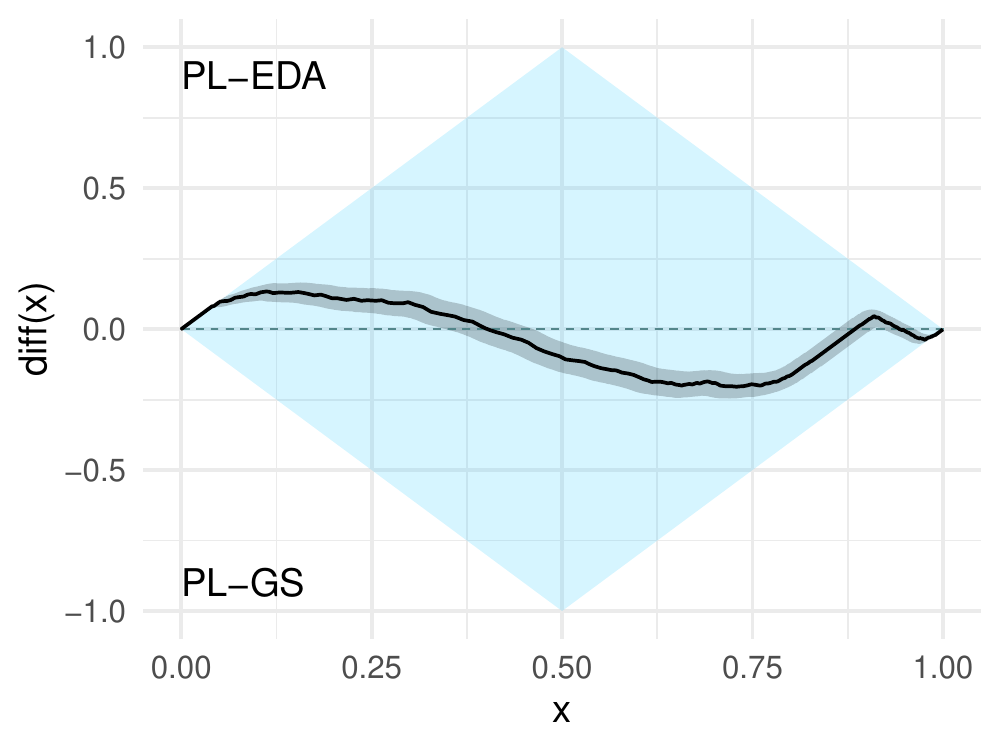}
	\caption{
		The cumulative difference-plot of $95\%$ confidence of the objective values obtained by \textit{PL-EDA} and \textit{PL-GS} in the instance \textit{N-t70n11xx}.
	}
	\label{fig:difference_Nt70n11xx}
\end{figure}

\section{Assumptions and limitations}
\label{section:assumptions_and_limitations}
In the following, we briefly summarize the assumptions that the cumulative difference-plot requires and comment on a few caveats.

\subsection{Assumptions}
Correctly using the proposed cumulative difference-plot requires that the following three assumptions are satisfied.
The first assumption is that all samples of both $X_A$ and $X_B$ are i.i.d, consequently, it should not be used with paired data. 
This is also an assumption made by the Mann-Whitney test.

The second assumption is that the values of the random variables represent a minimization setting: lower values are preferred to higher values.
To apply the proposed method in a maximization setting, it is enough to redefine the objective function by multiplying it by $-1$.

The third assumption is that $X_A$ and $X_B$ are continuous random variables defined in a connected subset of $\mathbb{R}$.
This also implies that the cumulative distribution functions of $X_A$ and $X_B$ are continuous and that their probability density functions are bounded.
Although having a bounded density means that there should never be two identical samples---the probability of observing two independent equal samples is $0$ with a bounded density---, in reality, the proposed cumulative difference-plot can deal with repeated samples. 
To do so, when defining the kernel density estimations of $Y_A$ and $Y_B$ in Section~\ref{section:limitedata}, repeated samples were assigned the same rank.
Then, the size of the uniform distributions was adjusted (with the $\gamma$ function) ensuring that the sum of the estimated densities of $Y_A$ and $Y_B$ remains constant even in the case of repeated observations.

\subsection{Limitations and future work}

Just like with other methods, the number of samples determines in part the stability of the results.
With a small sample size, the confidence band of the cumulative difference-plot will be larger.
There are three reasons why a larger sample size increases the stability of the plot: i) we are doing a kernel density estimation, and a higher sample size~\cite{danicaWhatMinimumNumber2009} implies that the estimation is closer to the real distribution, ii) the bootstrap method also requires several samples to be meaningful~\cite{chernick2011bootstrap,hall2013bootstrap} and iii) the sample size needs to be reasonable with respect to the quantiles being estimated.
For example, it would not make sense to use 10 samples to estimate a $1\%$ quantile.
In all of these cases, however, determining what is a \textit{too small} sample size is a highly debated question, and is beyond the scope of this paper.
To be on the safe side, we recommend using a sample size of at least $n = 100$.
It is worth noting that this was arbitrarily chosen, and a suitable sample size should be chosen depending on the desired conclusions (for example, comparing small and big quantiles requires more data).
With $n=100$ we ensure that the comparison of $1\%$ quantiles in the cumulative difference-plot is meaningful.

The most obvious limitation of the proposed approach is in its applicability: it should only be used in case of doubt between two random variables, and when none of the random variables dominates the other one. 
Otherwise, there are more suitable alternatives such as Bayesian analysis~\cite{benavoli2014bayesian,calvoBayesianPerformanceAnalysis2019}, or directly comparing box-plots.
For instance, if we take $10^3$ samples of $X_A$ and $X_B$ and all samples of $X_A$ are lower than all samples of $X_B$, then there is no need for further statistical comparison, as the results speak for themselves.

The proposed approach assumes $X_A$ and $X_B$ are continuous random variables and that all samples of both $X_A$ and $X_B$ are i.i.d, and consequently, it cannot be used with paired data.
As future work, the proposed methodology could be extended for paired data and ordinal random variables.
Also, the bootstrap method is the slowest part of the \textit{cumulative difference-plot}, especially as the number of samples increases.
To increase the computation speed, this slow part was written in C++ (the rest of the package was written in \textit{R}~\cite{rcoreteamLanguageEnvironmentStatistical2020}).
However, its speed can probably be further improved with a better implementation.

\section{Conclusion}
\label{section:conclusion}

In this paper, we approached the problem of choosing between two random variables in terms of which of them takes lower values.
We proposed eight desirable properties for dominance measures: functions that compare random variables in the context of quantifying the dominance.
Among the measures in the literature, we found out that \textit{the probability that one of the random variables takes lower values than the other} was the one that satisfies the most properties.
However, it fails to satisfy Property~\ref{prop:bounds_with_interpretation}, hence it cannot be used to determine when one of the random variables stochastically dominates the other.
To overcome this limitation, we introduced a new dominance measure: the dominance rate, which quantifies how much higher one of the cumulative distribution function is than the other.

Based on the above, we proposed a cumulative difference-plot that allows two random variables to be compared in terms of which of them takes lower values.
This cumulative difference-plot contains a comparison of the quantiles, in addition to allowing a graphical estimation of the dominance rate and the probability that one of the random variables takes lower values than the others.
It also models the uncertainty associated with the estimate through a confidence band.
Finally, in Section~\ref{section:real_world_case_study} we showed that the proposed methodology is suitable to compare two random variables, especially when they take similar values and other methods fail to give detailed and clear answers.

\bigskip
\begin{center}
	\section*{Supplementary Material}
\end{center}

\subsection*{}

\begin{description}
	\item[RVCompare: ] With this R package, users can compute the $\mathcal{C}_\mathcal{P}$ and $\mathcal{C}_\mathcal{D}$ of two distributions, given their probability density functions.
	Furthermore, it can be used to produce the proposed cumulative difference-plot, given the observed data. (The package can be directly installed from CRAN and is also available in the GitHub repo \url{https://github.com/EtorArza/RVCompare})
	
\end{description}

\begin{description}
	\item[Reproducibility: ] Alongside the paper, we provide the code to generate the figures in this paper and replicate the experimentation. For instructions on how to install the dependencies and replicate the results, refer to the README.md file in the repository. (GitHub repo \url{https://github.com/EtorArza/SupplementaryPaperRVCompare})
\end{description}

\begin{description}
\item[Appendices: ] To keep the length of the paper at a reasonable size, the appendices have been moved to another document. (The appendices are available for download at \url{https://doi.org/10.5281/zenodo.6528669}).

\end{description}

	\begin{center}
		\section*{Acknowledgments}
	\end{center}

	This work was funded in part by the Spanish Ministry of Science, Innovation and Universities through PID2019-106453GA-I00/AEI/10.13039/501100011033 and the BCAM Severo Ochoa excellence accreditation SEV-2017-0718; by the Basque Government through the Research Groups 2019-2021 IT1244-19, ELKARTEK Program (project code KK-2020/00049) and BERC 2018-2021 program.

	\textbf{Disclosure statement}
	
	The authors report that there are no competing interests to declare.

	\FloatBarrier

	\bibliographystyle{chicago}
	\bibliography{main}

	\FloatBarrier

	\clearpage
	\section*{Appendices}

\section{A literature review of measures}
\label{appendix:additional_information_on_bad_comparison_functions}

\subsection{$f$-divergences} 
\label{sec:f_divergences_explained}
The \textit{$f$-divergence} is a family of functions that can be used to measure the difference between two random variables.
Given a strictly convex\footnote{A function $f: (0,+\infty) \rightarrow \mathbb{R}$ is strictly convex if $\text{for all } t 	\in [0,1], \text{for all } x_1,x_2 \in (0,+\infty)$, $f(tx_1+ (1-t)x_1) < tf(x_1)+ (1-t)f(x_2)$} function $f:(0,+\infty) \rightarrow \mathbb{R}$ with $f(1) = 0$, and two continuous random variables $X_A$ and $X_B$, the $f$-divergence~\cite{liese_divergences_2006,renyi1961measures} is defined as 

\begin{equation}
D_f(X_A, X_B) = \int_{\mathbb{R}} g_B(x)  f\left(\dfrac{g_A(x)}{g_B(x)}\right)  dx
\end{equation}

where $g_A$ and $g_B$ are the probability density functions of the random variables $X_A$ and $X_B$ respectively.
Since $g_B(x)$ can be $0$, we assume~\cite{polyanskiy2014lecture} that $0 \cdot f(0 / 0) = 0$ and $0\cdot f(a /  0) = \lim_{x \rightarrow 0+} x \cdot f(a / x)$.
Notice that if $g_A$ and $g_B$ are the same probability density functions, then $D_{f}(X_A, X_B) = 0$.

\emph{Kullback–Leibler divergence:}	The Kullback–Leibler divergence~\cite{kullback_information_1951} is a particular case of the $f$-divergence, for $f(x) = x \cdot \ln(x)$.
Given two random variables $X_A$ and $X_B$, $D_{KL}(X_A, X_B)$ can be interpreted~\cite{111523} as the amount of entropy increased by using $g_B$ to model data that follows the probability density function $g_A$.

The Kullback–Leibler divergence is non-negative, and non symmetric $D_{KL}(X_A, X_B) \neq D_{KL}(X_B, X_A)$, and therefore, it is not actually a distance~\cite{goodfellowDeepLearning2016}.
It will not satisfy Property~\eqref{prop:antisimmetry}, as it is not antisymmetric either.
This also makes the interpretation less intuitive.
The Kullback–Leibler divergence is often used to measure the difference between two random variables~\cite{goodfellowDeepLearning2016}, but since $D_{KL}(X_A, X_B) \neq D_{KL}(X_B, X_A)$, it may be better to interpret the Kullback–Leibler divergence as stated above~\cite{111523}.

In Figure~\ref{fig:mathematica_distances}, we show the probability density functions and cumulative distribution functions of four random variables $X_A,X_B,X_C$ and $X_D$.
Looking at their cumulative distributions (Figure~\ref{fig:mathematica_distances_cumulative}), one can clearly see that $X_A \succ X_B$, $X_B \lessgtr X_C$ and $X_B \succ X_D$.
However, as shown in Table~\ref{tab:mathematica_distances}, $D_{KL}(X_B, X_A) = D_{KL}(X_B, X_C) = D_{KL}(X_B, X_D) = 15.4$ and $D_{KL}(X_A, X_B) = D_{KL}(X_C, X_B) = D_{KL}(X_C, X_D) = 6.2$.
This means that, given any two random variables $X_A$ and $X_B$, the Kullback-Leibler is not able to distinguish if $X_A\succ X_B$, $X_B \succ X_A$ or $X_A \lessgtr X_B$.
We can interpret this as the Kullback–Leibler divergence only caring about the difference between two random variables, and not if this difference is related to one of the random variables taking lower values than the other.
Hence, it cannot satisfy Property~\ref{prop:bounds_with_interpretation}, even if we try to transform it to be defined in the $[0,1]$ interval.
We conclude that the Kullback–Leibler divergence is not suitable to gain information regarding which of the random variables takes lower values.

\begin{figure} 
	\centering
	\subfloat[Probability density.\label{fig:mathematica_distances_density}]{%
		\includegraphics[width=0.45\textwidth]{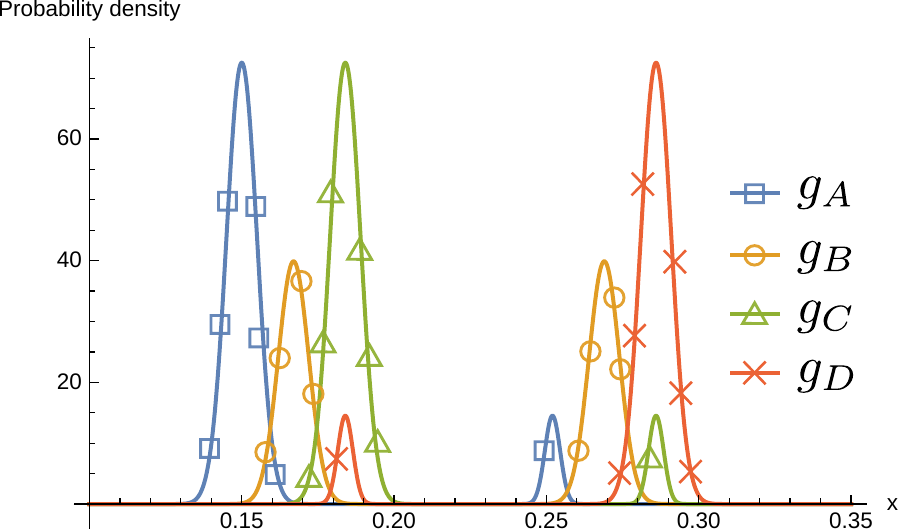}
	}
	\\
	\subfloat[Cumulative distribution function.\label{fig:mathematica_distances_cumulative}]{%
		\includegraphics[width=0.45\textwidth]{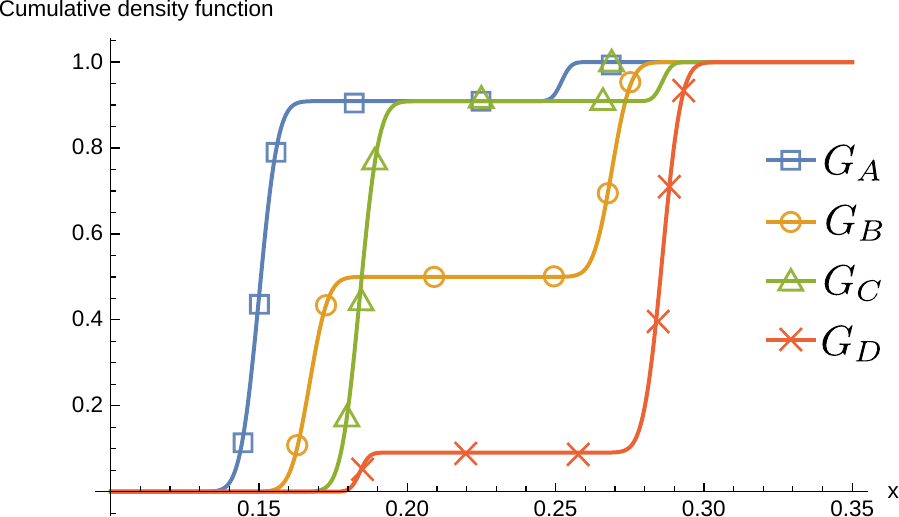}
	}
	\caption
	{
		The probability density function and cumulative distribution of the four random variables.
		The distances between these random variables are listed in Table~\ref{tab:mathematica_distances}.
	}
	\label{fig:mathematica_distances} 
\end{figure}

\begin{figure} 
	\centering
	\subfloat[Probability density.\label{fig:mathematica_distances_density_wasserstein}]{%
		\includegraphics[width=0.45\textwidth]{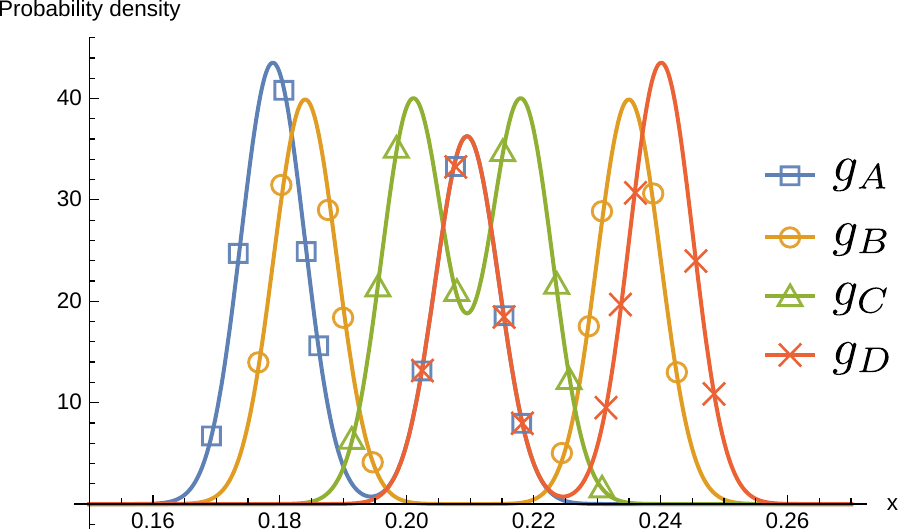}
	}
	\\
	\subfloat[Cumulative distribution function.\label{fig:mathematica_distances_cumulative_wasserstein}]{%
		\includegraphics[width=0.45\textwidth]{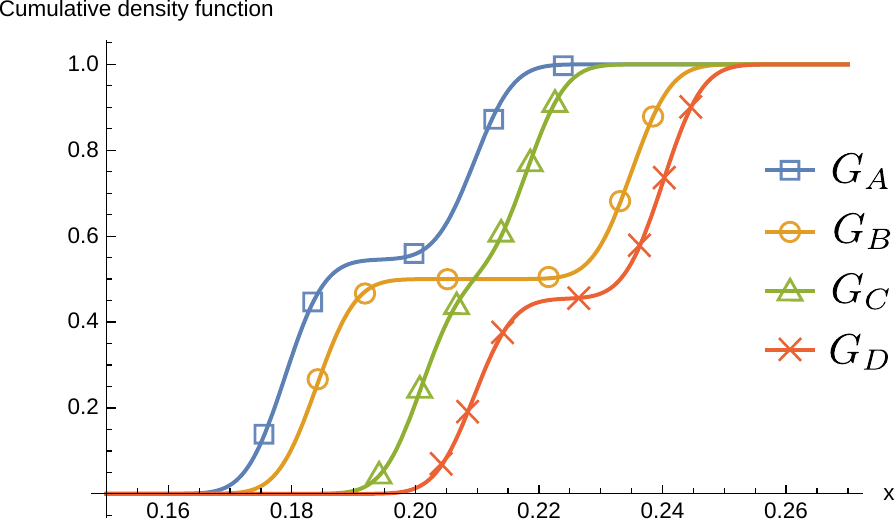}
	}
	\caption
	{
		The probability density function and cumulative distribution of four other random variables.
		The Wasserstein distance between $X_B$ and each of the other random variables is $0.017$.
	}
	\label{fig:mathematica_distances_wasserstein} 
\end{figure}

\begin{table}[]
	\begin{adjustwidth}{-1.2in}{}
		
		\setlength\tabcolsep{3.1pt} 
		\renewcommand{\arraystretch}{1.2} 

		\centering

		\label{tab:mathematica_distances}
		\begin{scriptsize}
			\begin{center}
				
				\begin{tabular}{ll|c|c|c|c|cll|c|c|c|c|c|}
					& \multicolumn{1}{c}{} & \multicolumn{4}{c}{Kullback–Leibler} & & & \multicolumn{1}{c}{} & \multicolumn{4}{c}{Jensen-Shannon}       \\ \cline{3-6} \cline{10-13}
					\multirow{2}{*}{}  & \multirow{2}{*}{} &    \multicolumn{4}{c|}{ RV$_2$}                                                                                 & $\ \ \ \ \ \ $ & \multirow{2}{*}{}  & \multirow{2}{*}{} &    \multicolumn{4}{c|}{ RV$_2$}\\ \cline{3-6}\cline{10-13}
					&  &  $X_A$  &   $X_B$  &  $X_C$  &$X_D$&                                                                                                     & &                                                                          & $X_A$    & $X_B$  & $X_C$ & $X_D$\\ \cline{1-6}\cline{8-13}
					\multicolumn{1}{|c|}{\multirow{4}{*}{\rotatebox{90}{ RV$_1 \ \ $}}} &  $X_A$     &   0.0  &    6.2  &   28.6   &   88.8   & & \multicolumn{1}{|c|}{\multirow{4}{*}{\rotatebox{90}{ RV$_1 \ \ $}}}  &  $X_A$  & 0.0 & 1.2    & 1.4  & 1.4 \\ \cline{2-6}\cline{9-13}
					\multicolumn{1}{|l|}{}                                                    &  $X_B$     &   15.4 &    0.0  &   15.4   &   15.4    & & \multicolumn{1}{|l|}{}                                                    &  $X_B$  & 1.2 & 0.0    & 1.2  & 1.2 \\ \cline{2-6}\cline{9-13}
					\multicolumn{1}{|l|}{}                                                    &  $X_C$     &   29.4 &    6.2  &    0.0   &    2.6      & & \multicolumn{1}{|l|}{}                                                  &  $X_C$  & 1.4 & 1.2    & 0.0  & 0.8 \\ \cline{2-6}\cline{9-13}
					\multicolumn{1}{|l|}{}                                                    &  $X_D$     &   88.8 &    6.2  &    2.6   &    0.0      & & \multicolumn{1}{|l|}{}                                                  &  $X_D$  & 1.4 & 1.2    & 0.8  & 0.0 \\ \cline{1-6}\cline{8-13}
					\\
					& \multicolumn{1}{c}{} & \multicolumn{4}{c}{Total variation} & & & \multicolumn{1}{c}{} & \multicolumn{4}{c}{Hellinger}       \\ \cline{3-6} \cline{10-13}
					\multirow{2}{*}{}  & \multirow{2}{*}{} &    \multicolumn{4}{c|}{ RV$_2$}                                                                                 & $\ \ \ \ \ \ $ & \multirow{2}{*}{}  & \multirow{2}{*}{} &    \multicolumn{4}{c|}{ RV$_2$}\\ \cline{3-6}\cline{10-13}
					&  &  $X_A$  &   $X_B$  &  $X_C$  &$X_D$&                                                                                                     & &                                                                               &$X_A$  & $X_B$    & $X_C$  & $X_D$\\ \cline{1-6}\cline{8-13}
					\multicolumn{1}{|c|}{\multirow{4}{*}{\rotatebox{90}{ RV$_1 \ \ $}}} &  $X_A$     &   0.000&    0.934  &   0.999  &    1.000   & & \multicolumn{1}{|c|}{\multirow{4}{*}{\rotatebox{90}{ RV$_1 \ \ $}}} &  $X_A$  & 0.00   & 1.28    & 1.41  & 1.41 \\ \cline{2-6}\cline{9-13}
					\multicolumn{1}{|l|}{}                                                    &  $X_B$     &   0.934&    0.000  &   0.934  &    0.934    & & \multicolumn{1}{|l|}{}                                                   &  $X_B$  & 1.28   & 0.00    & 1.28  & 1.28 \\ \cline{2-6}\cline{9-13}
					\multicolumn{1}{|l|}{}                                                    &  $X_C$     &   0.999&    0.934  &   0.000  &    0.818     & & \multicolumn{1}{|l|}{}                                                  &  $X_C$  & 1.41   & 1.28    & 0.00  & 0.99 \\ \cline{2-6}\cline{9-13}
					\multicolumn{1}{|l|}{}                                                    &  $X_D$     &   1.000&    0.934  &   0.818  &    0.000     & & \multicolumn{1}{|l|}{}                                                  &  $X_D$  & 1.41   & 1.28    & 0.99  & 0.00 \\ \cline{1-6}\cline{8-13}
					\\
					& \multicolumn{1}{c}{} & \multicolumn{4}{c}{Wasserstein} & & &  \multicolumn{1}{c}{} & \multicolumn{4}{c}{$\mathcal{C}_{\mathcal{P}}$}       \\ \cline{3-6} \cline{10-13}
					\multirow{2}{*}{}  & \multirow{2}{*}{} &    \multicolumn{4}{c|}{ RV$_2$}                                                                                 & $\ \ \ \ \ \ $ & \multirow{2}{*}{}  & \multirow{2}{*}{} &    \multicolumn{4}{c|}{ RV$_2$}\\ \cline{3-6}\cline{10-13}
					&  &  $X_A$  &   $X_B$  &  $X_C$  &$X_D$&                                                                                                     & &                                                                          & $X_A$    & $X_B$  & $X_C$ & $X_D$\\ \cline{1-6}\cline{8-13}
					\multicolumn{1}{|c|}{\multirow{4}{*}{\rotatebox{90}{ RV$_1 \ \ $}}} &  $X_A$     &   0.000 &    0.06  &   0.03   &    0.12   & & \multicolumn{1}{|c|}{\multirow{4}{*}{\rotatebox{90}{ RV$_1 \ \ $}}} &  $X_A$  & 	0.50& 0.95 &0.92& 0.99 \\ \cline{2-6}\cline{9-13}
					\multicolumn{1}{|l|}{}                                                    &  $X_B$     &   0.06 &    0.00  &   0.04   &    0.06   & & \multicolumn{1}{|l|}{}                                                    &  $X_B$  & 0.05& 0.50& 0.54& 0.95 \\ \cline{2-6}\cline{9-13}
					\multicolumn{1}{|l|}{}                                                    &  $X_C$     &   0.03 &    0.04  &   0.000   &    0.083   & & \multicolumn{1}{|l|}{}                                                  &  $X_C$  & 0.08& 0.46& 0.50& 0.91 \\ \cline{2-6}\cline{9-13}
					\multicolumn{1}{|l|}{}                                                    &  $X_D$     &   0.12 &    0.06  &   0.083   &    0.000   & & \multicolumn{1}{|l|}{}                                                  &  $X_D$  & 0.01& 0.05& 0.09& 0.50 \\ \cline{1-6}\cline{8-13}
					\\
					& \multicolumn{6}{c}{$\mathcal{C}_{\mathcal{D}}$}  & &\multicolumn{6}{c}{}       \\ \cline{3-6} 
					\multirow{2}{*}{}  & \multirow{2}{*}{} &    \multicolumn{4}{c|}{ RV$_2$}                                                                                 & $\ \ \ \ \ \ $ & \multirow{2}{*}{}  & \multicolumn{6}{c}{} \\ \cline{3-6}
					&  &  $X_A$  &   $X_B$  &  $X_C$  &$X_D$&                                                                                                     & \multicolumn{6}{c}{} \\ \cline{1-6}
					\multicolumn{1}{|c|}{\multirow{4}{*}{\rotatebox{90}{ RV$_1 \ \ $}}} &  $X_A$     &   	0.50 & 1.00 & 1.00 & 1.00 & & \multicolumn{6}{c}{} \\ \cline{2-6}
					\multicolumn{1}{|l|}{}                                                    &  $X_B$     &  0.00 & 0.50 & 0.59 & 1.00   & &  \multicolumn{6}{c}{} \\ \cline{2-6}
					\multicolumn{1}{|l|}{}                                                    &  $X_C$     &  0.00 & 0.41 & 0.50 & 1.00   & & \multicolumn{6}{c}{} \\ \cline{2-6}
					\multicolumn{1}{|l|}{}                                                    &  $X_D$     &   0.00 & 0.00 & 0.00 & 0.50   & & \multicolumn{6}{c}{} \\ \cline{1-6}
					\\ 
				\end{tabular}
			\end{center}
		\end{scriptsize}

	\end{adjustwidth}
			\caption{
	$\mathcal{C}(\text{RV}_1, \text{RV}_2)$ for the random variables $X_A,X_B,X_C$ and $X_D$ shown in Figure~\ref{fig:mathematica_distances}.
}
\end{table}

\vspace{1em}
\emph{Jensen-Shannon divergence:} The Jensen-Shannon divergence~\cite{polyanskiy2014lecture} is very similar to the Kullback-Leibler divergence, and is another the particular case of the $f$-divergence for $f(x) = x \cdot \ln(\frac{2x}{x+1}) + \ln(\frac{2}{x+1})$.
It is also known as the symmetrized version of the Kullback–Leibler divergence~\cite{polyanskiy2014lecture}, because 

$$ D_{JS}(X_A, X_B) = D_{KL}\left( X_A, X_\mathcal{M} \right)  + D_{KL}\left( X_B, X_\mathcal{M}\right) $$

where the probability density function of $X_\mathcal{M}$ is $g_\mathcal{M}(x) = 0.5 ( g_A(x) +  g_B(x))$. 
Thus, we can interpret this divergence as the sum of the Kullback–Leibler divergences of $g_A$ and $g_B$ with respect to the average probability density function $g_\mathcal{M}$.
The Jensen-Shannon divergence also fails to identify (see Table~\ref{tab:mathematica_distances}) the dominance relationships between $X_B$ and the rest of the random variables in Figure~\ref{fig:mathematica_distances}, thus, it cannot satisfy Property~\ref{prop:bounds_with_interpretation}.
In addition, the Jensen-Shannon divergence also fails to satisfy Properties~\ref{prop:antisimmetry} and \ref{prop:inverse}.
See Table~\ref{tab:functions_which_properties_satisfy} for a detailed list of the properties that each measure satisfies.

\vspace{1em}
\emph{Total variation:}	The total variation~\cite{polyanskiy2014lecture} is also a particular $f$-divergence, for $f(x) = \frac{1}{2}  |x - 1|$.
Unlike the Kullback–Leibler divergence, the total variation is symmetric. 
In fact, it is a properly defined distance~\cite{tsybakov_introduction_2009,polyanskiy2014lecture}.
In addition, it is defined between $0$ and $1$.

Given two random variables $X_A,X_B$, the total variation can also be defined as:

$TV(X_A,X_B) = \sup_{C \subseteq \mathbb{R}} |\mathcal{P}_A(C)  - \mathcal{P}_B(C)|$,

where $\mathcal{P}_A$ and $\mathcal{P}_B$ are the probability distributions\footnote{Given the random variable $X_A$ defined in $\mathbb{R}$, its probability distribution, noted as $\mathcal{P}_A$, is a mapping that,  $\text{for all } U \subseteq \mathbb{R}$ that is measurable, $A(U) = \mathcal{P}(X_A \in U)$~\cite{vapnik_statistical_1998}.} of $X_A$ and $X_B$ respectively.
Since the subset $C$ that takes the supremum is $C=\{x \in \mathbb{R} \ | \ g_A(x) > g_B(x)\}$\cite{devroye_total_2020}, we can interpret the total variation as the ``size'' of the difference in the density functions in all points where $g_A$ is more likely than $g_B$.
Following this intuition, when $TV(X_A,X_B) = 1$, $g_A$ and $g_B$ have disjoint supports~\cite{polyanskiy2014lecture}, and thus $X_A$ and $X_B$ are at their maximum difference with respect to this metric.
On the other hand, when $TV(X_A,X_B) = 0$ the random variables are identical.

The Total-Variance also fails to identify (see Table~\ref{tab:mathematica_distances}) the dominance relationships between $X_B$ and the rest of the random variables in Figure~\ref{fig:mathematica_distances}.

\vspace{1em}
\emph{Hellinger distance and the Bhattacharyya distance}: The Hellinger distance is the square root of the $f$-divergence for $f(x) = (1 - \sqrt{x})^2$~\cite{polyanskiy2014lecture}.
It is related to the Bhattacharya coefficient, since $D_{H}(X_A, X_B) = 2(1 - \BhattCoef(X_A,X_B))$~\cite{130455,polyanskiy2014lecture},
where $\BhattCoef(X_A,X_B)$ is the Bhattacharyya coefficient~\cite{kailath_divergence_1967,bhattacharyya1943measure}.
This coefficient is defined as $\BhattCoef(X_A,X_B) = \int_{\mathbb{R}} \sqrt{g_A(x)g_B(x) }dx$, and has proven useful on signal processing~\cite{kailath_divergence_1967}.
Given two probability density functions $g_A$ and $g_B$, the Bhattacharyya coefficient can be interpreted as the integral of the geometric mean of the probability density functions.
The Bhattacharyya coefficient is also related to the Bhattacharyya distance, as $D_{Bhatt}(X_A,X_B) = -\ln(\BhattCoef(X_A,X_B))$.

The Hellinger distance and the Bhattacharyya distance also fail to identify (see Table~\ref{tab:mathematica_distances}) the dominance relationships between $X_B$ and the rest of the random variables in Figure~\ref{fig:mathematica_distances}.

\subsection{Wasserstein distance} 
The Wasserstein distance is another type of distance between probability random variables.
Given two continuous random variables $X_A,X_B$, the Wasserstein distance (of order 1) is defined as~\cite{schuhmacherComputeWassersteinDistance2021,doi:10.1146/annurev-statistics-030718-104938}

$$D_{W}(X_A,X_B) = \int_{\mathbb{R}} |G_A(x) - G_B(x)| dx$$

In Figure~\ref{fig:mathematica_distances_wasserstein}, we show a different set of four random variables $X_A,X_B,X_C$ and $X_D$.
In this case, it is also clear that $X_A \succ X_B$, $X_B \lessgtr X_C$ and $ X_B \succ X_D$ (Figure~\ref{fig:mathematica_distances_cumulative_wasserstein}), but $D_{W}(X_B,X_A) = D_{W}(X_B,X_C) = D_{W}(X_B,X_D) = 0.017$.
Therefore, in this case, the Wasserstein distance does not give any insights about the dominance between $X_B$ and the rest of the random variables, thus, it cannot satisfy Property~\ref{prop:bounds_with_interpretation} even with a transformation.
It also does not satisfy Properties~\ref{prop:antisimmetry}, \ref{prop:inverse}, \ref{prop:scaling}, \ref{prop:scale_of_portions}, \ref{prop:partial_translation_within_support}.

However, with a small change, the Wasserstein distance can comply with Properties~\ref{prop:antisimmetry} and \ref{prop:inverse}.
This change also improves its correlation with the dominance, even though it still does not comply with Property~\ref{prop:bounds_with_interpretation}.
We remove the absolute value, such that the \textit{signed Wasserstein} distance is defined as

$$D_{SW}(X_A,X_B) = \int_{\mathbb{R}} G_A(x) - G_B(x) dx.$$

For the random variables in Figure~\ref{fig:mathematica_distances_wasserstein}, the signed Wasserstein distance has different values: $D_{SW}(X_B,X_A) = 0.17$, $D_{SW}(X_B,X_C) = 0$ and $D_{SW}(X_B,X_D) = -0.017$. 
Notice that 

${X_A \succ X_B \implies D_{SW}(X_B,X_A) > 0}$ and ${X_B \succ X_A \implies D_{SW}(X_B,X_A) < 0}$, 

but unfortunately, when $X_A \lessgtr X_B$, $D_{SW}(X_B,X_A)$ could be 
positive or negative.
This implies that $D_{SW}(X_B,X_A)$ still can not determine if $X_A\succ X_B$, $X_B \succ X_A$, or $X_A \lessgtr X_B$.

\subsection{Heuristic derivation of the first-order stochastic dominance} 

A measure similar to the Wasserstein distance has been proposed in the literature~\cite{schmid_testing_1996} in the context of comparing random variables.
Specifically, this measure is part of the heuristic derivation of a distribution-free statistical test for first-order stochastic dominance~\cite{schmid_testing_1996}.
Given two random variables $X_A,X_B$, this measure is defined as

$$\mathcal{C}_I(X_A,X_B) = \int_{\mathbb{R}} max(0,G_A(x) - G_B(x)) dG_B(x).$$

Note that the values of I range between $0$ and $0.5$.
When $\mathcal{C}_I(X_A,X_B) = 0.5$, we know that $X_A \succ X_B$.
Unfortunately, when ${\mathcal{C}_I(X_A,X_B) \in (0,0.5)}$, it could be that ${X_A \succ X_B}$ or ${X_A \nsucc X_B}$.
Consequently, $\mathcal{C}_I(X_A,X_B)$ cannot satisfy Property~\ref{prop:bounds_with_interpretation}.

\section{Quantile random variables}

\subsection{Computing the probability density functions of $Y_A$ and $Y_B$}
\label{appendix:estimation_densityY_A_step_by_step}

In Section~\ref{section:quantile_rv} we introduced the quantile random variables $Y_A$ and $Y_B$.
We now describe how to compute the probability density functions of $g_{Y_A}$ and $g_{Y_B}$ step by step, with the pseudocode shown in Algorithm~\ref{algo:y_a_and_y_b}.
We define a function $r$ that returns the position of an observation according to its rank in the sorted list of the observation $A_n \cup B_n$ (lines~1--4).
The ranks go from $0$ (for the smallest observation) to $r_{max}$ (for the largest), where $r_{max}$ is the number of unique observation in $A_n \cup B_n$ minus $1$.
Repeated observations are assigned the same rank, and no ranks are skipped: there is at least a value in $\boldsymbol a \cup \boldsymbol b$ corresponding to each rank from $0$ to $r_{max}$.
For each observation in $\{a_1,...,a_n\}$, a uniform distribution defined in the interval $(\frac{r(a_i) + \gamma(r(a_i) - 1) }{2n}, \frac{r(a_i) + \gamma(r(a_i))}{2n})$ is added to the mixture (lines~10--19), where $\gamma(k)$ (lines~7--9) counts the number of ranks in $A_n \cup B_n$ that are lower than or equal to $k$ (since the lowest rank is $0$, $\gamma(-1) = 0$) .
The kernel density estimation for $Y_B$ is defined similarly, but with the observations $\{b_1,...,b_n\}$ instead.

\begin{algorithm}

	\DontPrintSemicolon 
	\caption{Kernel density estimation of $Y_A$ and $Y_B$}
	\label{algo:y_a_and_y_b}
	\KwIn{ \ \\ 
		\noindent \underline{$A_n = \{a_1,...,a_n\}$}: The $n$ observed samples of $X_A$. \\
		\noindent \underline{$B_n =\{b_1,...,b_n\}$}: The $n$ observed samples of $X_B$. \\ \ \\
	}
	\KwOut{ \ \\ 
		\noindent \underline{$g_{Y_A}$}: The probability density of $Y_A$. \\
		\noindent \underline{$g_{Y_B}$}: The probability density of $Y_B$. \\ \ \\
	}
	
	\tcc{\footnotesize	 
		Compute the ranks of $A_n \cup B_n$.
		The lowest value has rank 0. 
		Assign the same rank to ties without skipping any rank.
	}
	\For{i = 1,...,n}{
		$r(a_i) \gets $  rank of $a_i$ in  $A_n \cup B_n$\; 			
		$r(b_i) \gets $  rank of $b_i$ in  $A_n \cup B_n$\; 				
	}
	$R$ $\gets \{r(a_1),...,r(a_n),r(b_1),...,r(b_n)\}$ \;
	$r_{max} \gets \max(R)$\;
	\For{k = -1,0,1,...,$r_{max}$}{
		$\gamma(k) \gets$ the number of items in $R$  lower than or equal to $k$   \;	
	}
	
	\tcc{\footnotesize	 
		The probability density function of $g_{Y_A}$ is represented as a mixture of $n$ uniform distributions.
		$g_{Y_A}[s]$ is the probability density of $Y_A$ in the interval $[\frac{s}{2n}, \frac{s+1}{2n})$.
	}
	$g_{Y_A} \gets$ array of zeros of length $2n$\;
	$g_{Y_B} \gets$ array of zeros of length $2n$\;
	\For{$x_i$ = $a_1,...,a_n,b_1,...,b_n$}{
		$A_{mult} \gets $ number of times that $x_i$ is in $A_n$\;
		$B_{mult} \gets $ number of times that $x_i$ is in $B_n$\;
		\For{mult = 1,...,$(A_{mult} + B_{mult})$}{	
			$g_{Y_A}[\gamma(r(a_i) - 1) + mult -1] \gets   (n \cdot A_{mult})^{-1}$\;
			$g_{Y_B}[\gamma(r(b_i) - 1) + mult -1] \gets  (n \cdot B_{mult})^{-1}$\;
		}
		
	}
	\textbf{return} $g_{Y_A}, g_{Y_B}$ \;

\end{algorithm}

\subsection{The quantile random variables have the same $\mathcal{C}_\mathcal{P}$ and $\mathcal{C}_\mathcal{D}$ as the kernel density estimates of $X_A$ and $X_B$.}
\label{appendix:Y_A_mantains_same_mathcalC}

In Section~\ref{section:quantile_rv}, we claimed that when a ``small enough'' uniform~\cite{scikit-learndevelopersDensityEstimation2021} kernel is used in the kernel density estimations of $X_A$ and $X_B$, these estimations will have the same $\mathcal{C}_\mathcal{P}$ and $\mathcal{C}_\mathcal{D}$ as the quantile random variables $Y_A$ and $Y_B$. 
Specifically, the size of the uniform kernels needs to be smaller than ${\min\limits_{ i,j\in\{1...n\} |  a_i \neq b_j}2|a_i - b_j|}$, where $A_n = \{a_1,...,a_n\}$ and ${B_n = \{b_1,...,b_n\}}$ are the $n$ observed samples of $X_A$ and $X_B$ respectively.
As a result, the $\mathcal{C}_\mathcal{P}$ and $\mathcal{C}_\mathcal{D}$ of the kernel density estimations will not change when the size of the kernels is reduced below its initial size.
This can be deduced from Property~\ref{prop:partial_translation_within_support} in Section~\ref{section:properties_of_comparison_comparison_functions}, which both $\mathcal{C}_\mathcal{P}$ and $\mathcal{C}_\mathcal{D}$ satisfy.

The quantile random variables $Y_A$ and $Y_B$ can also be obtained by applying a sequence of transformations to the kernel density estimations (with small uniform kernels) of $X_A$ and $X_B$.
Three consecutive transformations are required, none of which modify the $\mathcal{C}_\mathcal{D}$ and $\mathcal{C}_\mathcal{P}$ due to Property~\ref{prop:partial_translation_within_support}.
The first transformation involves further \textit{reducing} the size of the kernels to $1 /(4n)$.
Secondly, each kernel $k$ is moved into the position $r(k)/(2n) + (4n)^{-1}$, where $r(k)$ is the rank of the sample in $k$ in $A_n \cup B_n$.
In the case of ties, $r$ assigns the same rank to all kernels and this same rank is the average of the previous and the next rank.
Since each of the possible positions are at distance $1/(2n)$ from each other, this transformation will not change the $\mathcal{C}_\mathcal{D}$ and $\mathcal{C}_\mathcal{P}$.
Finally, the length of the kernels is increased to ${\textit{mult} / (4n)}$, where \textit{mult} is the number of times that the sample defining the kernel is repeated in $A_n \cup B_n$.
Note that this increase in the length will in no case cause an overlap of kernels.

\section{ $\mathcal{C}_\mathcal{P}$ and $\mathcal{C}_\mathcal{D}$ in the cumulative difference-plot}
\label{appendix:proof_difference_graph}

In this section, we mathematically prove and experimentally verify that the cumulative difference-plot can be used to deduce $\mathcal{C}_\mathcal{D}$ and $\mathcal{C}_\mathcal{P}$. 
First, we describe which estimators are used when these dominance measures are visually estimated from the cumulative difference-plot.
Then, we show that these estimators converge to $\mathcal{C}_\mathcal{P}$ and $\mathcal{C}_\mathcal{D}$ as the number of samples increases.

\subsection{Estimating $\mathcal{C}_\mathcal{P}$ and $\mathcal{C}_\mathcal{D}$ from the cumulative difference-plot}

\begin{mydef} (observations of random variables)\\
	Let $X_A$ be a continuous random variable.
	We define \textit{$n$ observations of $X_A$} as the realizations of the i.i.d random variables  $\{X_A^i\}_{i=1}^{n}$ that are distributed as $X_A$, denoted as $A_n =  \{a_i\}_{i=1}^{n}$.
\end{mydef}

\begin{mydef}(estimation of $\mathcal{C}_\mathcal{P}$) \\
	\label{def:estimation_of_CP}
	Let $X_A$ and $X_B$ be two continuous random variables and $A_n$ and $B_n$ their $n$ observations respectively.
	We define the estimation of the probability that $X_A$ < $X_B$ as 
	$$\widetilde{\mathcal{C}_\mathcal{P}}(A_n, B_n) =  \sum_{i,k=1...n} \dfrac{\sign(b_k -a_i)}{2n^2} + \frac{1}{2}.$$
\end{mydef}

\begin{mydef}(estimation of $\mathcal{C}_\mathcal{D}$) \\
	\label{def:estimation_of_CD}
	Let $X_A$ and $X_B$ be two continuous random variables and $A_n$ and $B_n$ their $n$ observations respectively.
	Let $\{c_j\}_{j=1}^{2n}$ the sorted list of all the observations of $A_n$ and $B_n$ where $c_1$ is the smallest observation and $c_{2n}$ the largest.
	Let $\{c_d\}_{d=1}^{d_{max}}$ be the sorted list of unique values in $\{c_j\}_{j=1}^{2n}$.
	We define the estimation of the dominance rate as

	$$\widetilde{\mathcal{C}_\mathcal{D}}(A_n, B_n) = \dfrac{\sum_{j=1}^{2n} \dfrac{\psi(c_j)}{2n} + 1}{2}\cdot k_c^{-1}$$
	
	$k_c = \frac{\sum_{j=1}^{2n} \mathcal{I}[\psi(c_j) \neq 0]}{2n}$ is the normalization constant and $\psi_j$ is defined as

	\begin{small}
		$$ \psi(c_d) =
		\begin{cases}
		\multirow{2}{1em}{$0$} & \text{ if } \hat{G}_A(c_{d-1}) = \hat{G}_B(c_{d-1})                                                             \\ \vspace{0.75em} &  \text{ and } \hat{G}_A(c_{d}) = \hat{G}_B(c_{d})\\ 
		\multirow{2}{1em}{$1$} & \text{ if }  \hat{G}_A(c_{d-1}) \geq \hat{G}_B(c_{d-1})                                                         \\ \vspace{0.75em} &    \text{ and } \hat{G}_A(c_{d})   >  \hat{G}_B(c_{d})\\ 
		\multirow{2}{1em}{$1$}  & \text{ if }  \hat{G}_A(c_{d-1}) >    \hat{G}_B(c_{d-1})                                                        \\ \vspace{0.75em} &    \text{ and } \hat{G}_A(c_{d}) \geq \hat{G}_B(c_{d})\\ 
		\multirow{2}{1em}{$-1$}  & \text{ if }  \hat{G}_B(c_{d-1}) \geq \hat{G}_A(c_{d-1})                                                       \\ \vspace{0.75em} &    \text{ and } \hat{G}_B(c_{d})   >  \hat{G}_A(c_{d})\\ 
		\multirow{2}{1em}{$-1$}  & \text{ if }  \hat{G}_B(c_{d-1}) >    \hat{G}_A(c_{d-1})                                                       \\ \vspace{0.75em} &    \text{ and } \hat{G}_B(c_{d}) \geq \hat{G}_A(c_{d})\\
		\multirow{2}{7em}{$1 - 2\gamma(c_d)$} & \text{ if }  \hat{G}_B(c_{d-1}) >    \hat{G}_A(c_{d-1})      \\ \vspace{0.75em} &    \text{ and } \hat{G}_A(c_{d})   >  \hat{G}_B(c_{d})\\
		\multirow{2}{7em}{$2\gamma(c_d) - 1$} & \text{ if }  \hat{G}_A(c_{d-1}) >    \hat{G}_B(c_{d-1}) \\ \vspace{0.75em} &    \text{ and } \hat{G}_B(c_{d})   >  \hat{G}_A(c_{d})\\
		\end{cases} 
		$$
	\end{small}
	
	with $\gamma(c_d) = \dfrac{\hat{G}_B(c_{d-1}) - \hat{G}_A(c_{d-1})}{[B_n = c_d] - [A_n = c_d]}$.
	Note that $[A_n = c_d]$ counts the number of items in $A_n$ equal to $c_d$ and $\hat{G}_A$ is the empirical distribution~\cite{10.2307/2958458} estimated from $A_n$.
	To improve the readability, we abuse the notation and assume that ${\hat{G}_A(c_{0}) = 0}$.

\end{mydef}

We now show that these estimates can be directly computed from the cumulative difference plot.
First, we show that the estimation of $\mathcal{C}_\mathcal{P}$ from the cumulative difference-plot is equivalent to the estimation in Definition~\ref{def:estimation_of_CP}.
As mentioned in Section~\ref{section:the_difference_plot}, the $\mathcal{C}_\mathcal{P}$ estimated from the cumulative difference-plot is $0.5 + \int_{0}^{1}  \diff(x) dx$ where $\diff$ is the difference function introduced in Equation~\eqref{eq:difference_function_for_cum_plot}.
Specifically, the difference function was defined as ${\diff(x) = G_{Y_A}(x) - G_{Y_B}(x)}$.

\begin{mylemma}
	\label{lemma:lema_for_C_P}
	Let $X_A$ and $X_B$ be two continuous random variables and $A_n$ and $B_n$ their $n$ observations respectively.
	Then, 
	
	$$\int_{0}^{1}  \diff(x) dx  = \sum_{j=1}^{2n}  \dfrac{G_{Y_A}(\frac{j}{2n}) - G_{Y_B}(\frac{j}{2n})}{2n}$$
\end{mylemma}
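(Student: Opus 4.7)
My plan is to exploit the special structure of $Y_A$ and $Y_B$: by construction (Section~\ref{section:quantile_rv} and Algorithm~\ref{algo:y_a_and_y_b}), the densities $g_{Y_A}$ and $g_{Y_B}$ are mixtures of uniform distributions whose supports are unions of intervals of the form $[\frac{j-1}{2n}, \frac{j}{2n}]$; even in the presence of ties, each such interval is entirely inside the support of exactly one uniform kernel. Consequently $g_{Y_A}$ and $g_{Y_B}$ are piecewise constant on the partition $\{[\frac{j-1}{2n}, \frac{j}{2n}]\}_{j=1}^{2n}$ of $[0,1]$. Integrating, $G_{Y_A}$ and $G_{Y_B}$ are piecewise \emph{linear} on the same partition, and therefore so is $\diff = G_{Y_A} - G_{Y_B}$.

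Next I would use the trapezoidal rule, which is exact for piecewise linear functions on the grid along which they are linear. This gives
\[
\int_0^1 \diff(x)\,dx \;=\; \sum_{j=1}^{2n} \frac{1}{2n}\cdot \frac{\diff\!\left(\tfrac{j-1}{2n}\right) + \diff\!\left(\tfrac{j}{2n}\right)}{2}
\;=\; \frac{1}{4n}\left[\diff(0) + 2\sum_{j=1}^{2n-1}\diff\!\left(\tfrac{j}{2n}\right) + \diff(1)\right].
\]
Then I would observe that $\diff(0) = 0$ and $\diff(1) = 0$, since $G_{Y_A}$ and $G_{Y_B}$ are cumulative distribution functions on $[0,1]$ and thus both equal $0$ at $x=0$ and $1$ at $x=1$. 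Substituting and noting that the $j=2n$ term contributes nothing yields
\[
\int_0^1 \diff(x)\,dx \;=\; \frac{1}{2n}\sum_{j=1}^{2n-1}\diff\!\left(\tfrac{j}{2n}\right) \;=\; \sum_{j=1}^{2n}\frac{G_{Y_A}(\tfrac{j}{2n}) - G_{Y_B}(\tfrac{j}{2n})}{2n},
\]
which is exactly the identity in the lemma.

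The only subtle step is the first one, namely confirming that the grid $\{j/(2n)\}_{j=0}^{2n}$ is a common refinement on which both $g_{Y_A}$ and $g_{Y_B}$ are constant, even when there are ties among the observations. In that case some uniform kernels have width larger than $1/(2n)$, but Algorithm~\ref{algo:y_a_and_y_b} places their endpoints at points of the form $k/(2n)$, so the piecewise-constant structure on the canonical grid is preserved. Once this is checked, the remainder is just the trapezoidal-rule computation together with the boundary condition, both of which are routine.
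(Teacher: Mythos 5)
Your proposal is correct and follows essentially the same route as the paper's proof: both rest on the observation that $g_{Y_A}$ and $g_{Y_B}$ are constant on each interval $[\tfrac{j}{2n},\tfrac{j+1}{2n})$, so $\diff$ is piecewise linear and the trapezoidal rule is exact, after which the boundary values $\diff(0)=\diff(1)=0$ collapse the sum to the stated form. Your explicit check that ties do not break the piecewise-constant structure on the canonical grid is a small point the paper leaves implicit, but it does not change the argument.
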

\begin{proof}
	
	Considering that the density functions of $Y_A$ and $Y_B$ are constant in each interval $[\frac{j}{2n}, \frac{j+1}{2n})$ for $j=0,...,(2n-1)$, we get that

	$\int_{\frac{j}{2n}}^{\frac{j+1}{2n}}  \diff(x) dx = \dfrac{\diff(\frac{j}{2n}) + \diff(\frac{j+1}{2n})}{4n} = $
	
	$ \dfrac{G_{Y_A}(\frac{j}{2n}) - G_{Y_B}(\frac{j}{2n}) + G_{Y_A}(\frac{j+1}{2n}) - G_{Y_B}(\frac{j+1}{2n})}{4n}$
	
	Taking into account that $ G_{Y_A}(0) = G_{Y_B}(0) = 0$ and $ G_{Y_A}(1) = G_{Y_B}(1) = 1$,
	
	$ \int_{0}^{1}  \diff(x) dx = \sum_{j=0}^{2n-1} \int_{\frac{j}{2n}}^{\frac{j+1}{2n}}  \diff(x) dx = $
	
	$ \dfrac{G_{Y_A}(\frac{0}{2n}) - G_{Y_B}(\frac{0}{2n}) +  G_{Y_A}(\frac{2n}{2n}) - G_{Y_B}(\frac{2n}{2n})}{4n} + $
	
	$\sum_{j=1}^{2n-1}  \dfrac{2 \cdot G_{Y_A}(\frac{j}{2n}) - 2 \cdot G_{Y_B}(\frac{j}{2n})}{4n} =$
	
	$\sum_{j=1}^{2n-1}  \dfrac{G_{Y_A}(\frac{j}{2n}) - G_{Y_B}(\frac{j}{2n})}{2n}$
	
	Finally, since  $G_{Y_A}(1) =  G_{Y_B}(1) = 1$, we have that 
	
	$\sum_{j=1}^{2n-1}  \dfrac{G_{Y_A}(\frac{j}{2n}) - G_{Y_B}(\frac{j}{2n})}{2n} = $
	
	$\sum_{j=1}^{2n}  \dfrac{G_{Y_A}(\frac{j}{2n}) - G_{Y_B}(\frac{j}{2n})}{2n}$

\end{proof}

\onecolumn
\begin{myproposition} ($\mathcal{C}_\mathcal{P}$ estimated from the cumulative difference-plot) \\s
	Let $X_A$ and $X_B$ be two random variables and $A_n$ and $B_n$ their $n$ observations respectively.
	Let $\diff$ be the difference function obtained from the samples $A_n$ and $B_n$ as defined in Equation~\eqref{eq:difference_function_for_cum_plot}.
	Then, 
	
	$$ \widetilde{\mathcal{C}_\mathcal{D}}(A_n,B_n) = \int_{0}^{1}  \diff(x) dx + \frac{1}{2}$$
\end{myproposition}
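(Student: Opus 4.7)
The plan is to combine Lemma~\ref{lemma:lema_for_C_P} with the standard rank-sum identity for the Mann--Whitney $U$ statistic. I assume throughout that the combined sample contains no ties, which is consistent with $X_A,X_B$ being continuous and matches the hypothesis in Definition~\ref{def:estimation_of_CD_simple}.

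First I would unpack the right-hand side. Let $s_1 < \dots < s_{2n}$ denote the sorted merged sample $A_n \cup B_n$, and set $\alpha_j = \mathcal{I}[s_j \in A_n]$, $\beta_j = \mathcal{I}[s_j \in B_n]$. From the construction of the quantile random variables in Section~\ref{section:quantile_rv}, we have $G_{Y_A}(j/2n) = \frac{1}{n}\sum_{i\le j}\alpha_i$, and similarly for $Y_B$. Substituting into Lemma~\ref{lemma:lema_for_C_P} and swapping the order of summation yields
\[
\int_{0}^{1} \diff(x)\, dx \;=\; \frac{1}{2n^2}\sum_{i=1}^{2n}(\alpha_i - \beta_i)(2n - i + 1) \;=\; \frac{R_B - R_A}{2n^2},
\]
where $R_A = \sum_k r(a_k)$ and $R_B = \sum_k r(b_k)$ are the rank sums of $A_n$ and $B_n$ in the merged sample; the last equality uses $\sum_i i\,\alpha_i = R_A$ and $\sum_i \alpha_i = n$, together with the analogous identities for $\beta$.

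Next I would rewrite the left-hand side. With no ties, $\sign(b_k - a_i) = 2\,\mathcal{I}[a_i < b_k] - 1$, so $\widetilde{\mathcal{C}_\mathcal{P}}(A_n, B_n) = U/n^2$, where $U = |\{(i,k) : a_i < b_k\}|$ is the Mann--Whitney count. The classical identity $R_B = U + n(n+1)/2$, combined with the trivial $R_A + R_B = n(2n+1)$, gives $R_B - R_A = 2U - n^2$. Dividing by $2n^2$ and adding $1/2$ produces $U/n^2 = \widetilde{\mathcal{C}_\mathcal{P}}(A_n, B_n)$, which matches the expression just derived for the integral.

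The main obstacle is bookkeeping. The uniform kernels defining $Y_A$ and $Y_B$ can be indexed with several off-by-one conventions, so one has to verify carefully that the jumps of $G_{Y_A}$ and $G_{Y_B}$ sit exactly at the breakpoints $j/2n$ and that $R_A$ genuinely counts each observation's position in the merged sample. Once those details are pinned down, the argument reduces to a summation-by-parts combined with a well-known $U$-statistic identity.
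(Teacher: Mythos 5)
Your argument is correct, but it takes a genuinely different route from the paper's in its second half. Both proofs start from Lemma~\ref{lemma:lema_for_C_P}, reducing the integral to $\sum_{j=1}^{2n}\bigl(G_{Y_A}(\tfrac{j}{2n})-G_{Y_B}(\tfrac{j}{2n})\bigr)/2n$. From there the paper expands each $G_{Y_A}(\tfrac{j}{2n})$ into explicit counting functions (including tie-correction terms such as $[A_n=c_j]$ and $[k\leq j \,|\, c_k=c_j]$) and grinds through a long chain of regroupings to reach $\sum_{i,k}\sign(b_k-a_i)/2n^2$ directly. You instead apply summation by parts to convert the sum of partial sums into the rank-sum difference $(R_B-R_A)/2n^2$, and then close the argument with the classical Mann--Whitney identity $R_B=U+n(n+1)/2$ together with $R_A+R_B=n(2n+1)$, giving $R_B-R_A=2U-n^2$ and hence $\widetilde{\mathcal{C}_\mathcal{P}}=U/n^2=\int_0^1\diff(x)\,dx+\tfrac12$. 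This is considerably shorter and makes the connection to the $U$ statistic (which the paper only mentions in passing after Definition~\ref{def:estimation_of_CP_simple}) structurally explicit. The one thing your version gives up is generality with respect to ties: the estimator in Definition~\ref{def:estimation_of_CP} and the quantile random variables $Y_A,Y_B$ are both defined so as to handle repeated observations (via the $\gamma$ function and the widened kernels), and the paper's longer computation verifies the identity in that case too, whereas your identities $\sign(b_k-a_i)=2\,\mathcal{I}[a_i<b_k]-1$, $R_B=U+n(n+1)/2$ and $G_{Y_A}(\tfrac{j}{2n})=\tfrac1n\sum_{i\leq j}\alpha_i$ all require distinct values. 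Since the paper elsewhere insists that the cumulative difference-plot must cope with repeated samples, a complete replacement of the paper's proof would need either a tie-corrected version of the rank-sum identity ($R_B=U+n(n+1)/2$ with $U$ counting ties as $\tfrac12$) or a separate limiting argument; for continuous $X_A,X_B$ the untied case holds almost surely, so your proof does establish the proposition as stated.
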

\begin{proof}
	
	Given the observations $A_n$ and $B_n$, we need to prove that 
	
	$ \sum_{i,k=1...n} \dfrac{\sign(b_k -a_i)}{2n^2} + \frac{1}{2} = \int_{0}^{1}  \diff(x) dx + \frac{1}{2}$ 
	
	With Lemma~\ref{lemma:lema_for_C_P}, it is enough to prove that
	
	$\sum_{i,k=1...n} \dfrac{\sign(b_k -a_i)}{2n^2} = \sum_{j=1}^{2n}  \dfrac{G_{Y_A}(\frac{j}{2n}) - G_{Y_B}(\frac{j}{2n})}{2n}$

	Let $C_{2n} = \{c_j\}_{j=1}^{2n}$ be the list of all the sorted observations of $A_n$ and $B_n$ where $c_1$ is the smallest observation and $c_{2n}$ the largest.
	Then, we have that 
	
	$$G_{Y_A}(\frac{j}{2n}) = \dfrac{[A_n < c_j] + \dfrac{[A_n = c_j][k \leq j | c_k = c_j]}{[C_{2n} = c_j]}  }{n} \text{ and } $$  		 
	$$G_{Y_B}(\frac{j}{2n}) = \dfrac{[B_n < c_j] + \dfrac{[B_n = c_j][k \leq j | c_k = c_j]}{[C_{2n} = c_j]}  }{n}  $$

	where $[A_n < c_j]$ counts the number of items in $A_n$ lower than $c_j$, and $[k \leq j | c_k = c_j]$ counts the number of items in $C_{2n}$ equal to $c_j$ but with a lower or equal position in $C_{2n}$.
	Therefore, we have that 		
	
	$$\sum_{j=1}^{2n}  \dfrac{G_{Y_A}(\frac{j}{2n}) - G_{Y_B}(\frac{j}{2n})}{2n} = $$
	
	$$\sum_{j=1}^{2n} \dfrac{ [A_n < c_j] + \dfrac{[A_n = c_j][k \leq j | c_k = c_j]}{[C_{2n} = c_j]} -  [B_n < c_j] - \dfrac{[B_n = c_j][k \leq j | c_k = c_j]}{[C_{2n} = c_j]}}{2n^2}$$
	
	\begin{equation}
	\label{eq:replace_G_A_with_counting}
	\sum_{j=1}^{2n} \dfrac{ [A_n < c_j] - [B_n < c_j]  + \dfrac{([A_n = c_j] - [B_n = c_j])[k \leq j | c_k = c_j]}{[C_{2n} = c_j]} }{2n^2}
	\end{equation}
	
	Now we group the terms in Equation~\eqref{eq:replace_G_A_with_counting} into $d_{max}$ groups such that each group contains all the terms with the same $c_j$, and each group $d$ contains $[C_{2n} = c_d]$ terms, with $c_j = c_d$.

	$$\sum_{j=1}^{2n} \dfrac{ [A_n < c_j] - [B_n < c_j]}{2n^2} + \sum_{d=1}^{d_{max}} \sum_{c_j} \dfrac{\dfrac{([A_n = c_j] - [B_n = c_j])[k \leq j | c_k = c_j]}{[C_{2n} = c_j]} }{2n^2} = $$
	$$\sum_{j=1}^{2n} \dfrac{ [A_n < c_j] - [B_n < c_j]}{2n^2} + \sum_{d=1}^{d_{max}}  \dfrac{  \dfrac{([A_n = c_d] - [B_n = c_d])(([C_{2n} = c_d]+ 1) \cdot [C_{2n} = c_d] / 2 )}{[C_{2n} = c_d]} }{2n^2} = $$
	$$\sum_{j=1}^{2n} \dfrac{ [A_n < c_j] - [B_n < c_j]}{2n^2} + \sum_{d=1}^{d_{max}}  \dfrac{  ([A_n = c_d] - [B_n = c_d])([C_{2n} = c_d]+ 1) / 2  }{2n^2} = $$
	$$\sum_{j=1}^{2n} \dfrac{ [A_n < c_j] - [B_n < c_j]}{2n^2} + \sum_{d=1}^{d_{max}}  \dfrac{  ([A_n = c_d] - [B_n = c_d])([C_{2n} = c_d]) / 2 +  ([A_n = c_d] - [B_n = c_d]) / 2  }{2n^2} = $$
	$$\sum_{j=1}^{2n} \dfrac{ [A_n < c_j] - [B_n < c_j]}{2n^2} +   \sum_{j=1}^{2n} \dfrac{([A_n = c_j] - [B_n = c_j]) / 2}{2n^2} + \underbrace{\sum_{d=1}^{d_{max}} \dfrac{([A_n = c_d] - [B_n = c_d]) / 2  }{2n^2}}_{ = 0} =$$

	$$\underbrace{\sum_{j=1}^{2n} \dfrac{ [A_n < c_j] - [B_n < c_j]}{2n^2}}_{\text{first sum}}  + \underbrace{\sum_{j=1}^{2n} \dfrac{([A_n = c_j] - [B_n = c_j]) / 2}{2n^2}  }_{\text{second sum}} $$

	Focusing on the first sum, we have that

	$$\sum_{j=1}^{2n}  \dfrac{[A_n < c_j] - [B_n < c_j]  }{2n^2} = $$
	$$  \dfrac{ \sum_{j=1}^{2n} [A_n < c_j] - \sum_{j=1}^{2n} [B_n < c_j]  }{2n^2} =  $$
	$$ \dfrac{ \sum_{j=1}^{2n}  \sum_{i=1}^{n} [\{a_i\} < c_j] - \sum_{j=1}^{2n} \sum_{i=1}^{n} [\{b_i\} < c_j]  }{2n^2} = $$
	$$ \dfrac{ \sum_{k=1}^{n}  \sum_{i=1}^{n} [\{a_i\} < a_k] + \sum_{k=1}^{n} \sum_{i=1}^{n} [\{a_i\} < b_k]  }{2n^2} - $$
	$$ \dfrac{ \sum_{k=1}^{n}  \sum_{i=1}^{n} [\{b_i\} < a_k] + \sum_{k=1}^{n} \sum_{i=1}^{n} [\{b_i\} < b_k]  }{2n^2} = $$
	$$ \dfrac{ \sum_{k=1}^{n}  \sum_{i=1}^{n} [\{a_i\} < a_k] +  [\{a_i\} < b_k] - [\{b_i\} < a_k] - [\{b_i\} < b_k]}{2n^2} =  $$
	$$ \dfrac{ \sum_{k=1}^{n}  \sum_{i=1}^{n} [\{a_i\} < b_k] - [\{b_i\} < a_k] + [\{a_i\} < a_k] - [\{b_i\} < b_k]}{2n^2} =  $$
	$$ \dfrac{ \sum_{k=1}^{n}  \sum_{i=1}^{n}  \sign(b_k - a_i) + [\{a_i\} < a_k] - [\{b_i\} < b_k] }{2n^2} =  $$
	$$ \dfrac{\sum_{k=1}^{n}  \sum_{i=1}^{n} \sign(b_k - a_i)}{2n^2} + \dfrac{  \sum_{k=1}^{n}  [A_n < a_k] - [B_n < b_k]}{2n^2}    $$

	From the second sum, we obtain		
	
	$$\sum_{j=1}^{2n} \dfrac{([A_n = c_j] - [B_n = c_j]) / 2  }{2n^2} =  \sum_{k=1}^{n} \dfrac{([A_n = {a_k}] - [B_n = {a_k}] + [A_n = {b_k}] - [B_n = {b_k}]) / 2  }{2n^2}  $$

	Combining these summations,  
	
	\begin{small}
	
	$$\sum_{j=1}^{2n} \dfrac{ [A_n < c_j] - [B_n < c_j]}{2n^2}  + \sum_{j=1}^{2n} \dfrac{([A_n = c_j] - [B_n = c_j]) / 2  }{2n^2} =$$

	$$ \dfrac{\sum_{k=1}^{n}  \sum_{i=1}^{n} \sign(b_k - a_i)}{2n^2} + $$
	$$ \dfrac{ \sum_{k=1}^{n}  [A_n < a_k] - [B_n < b_k]}{2n^2}  + \dfrac{\sum_{k=1}^{n}  ([A_n = {a_k}] - [B_n = {a_k}] + [A_n = {b_k}] - [B_n = {b_k}]) / 2  }{2n^2}  = $$

	$$ \dfrac{\sum_{k=1}^{n}  \sum_{i=1}^{n} \sign(b_k - a_i)}{2n^2} + $$
	$$ \dfrac{ \sum_{k=1}^{n}  [A_n \leq a_k] - [B_n \leq b_k]}{2n^2} +   \dfrac{\sum_{k=1}^{n}  (-[A_n = {a_k}] - [B_n = {a_k}] + [A_n = {b_k}] + [B_n = {b_k}]) / 2  }{2n^2}  = $$

	$$  \dfrac{\sum_{k=1}^{n}  \sum_{i=1}^{n} \sign(b_k - a_i)}{2n^2} + \dfrac{ \sum_{k=1}^{n}  [A_n \leq a_k] - [B_n \leq b_k]}{2n^2} +   \dfrac{\sum_{k=1}^{n}  (-[C_{2n} = {a_k}]  + [C_{2n} = {b_k}])  }{4n^2}  = $$

	$$  \dfrac{\sum_{k=1}^{n}  \sum_{i=1}^{n} \sign(b_k - a_i)}{2n^2} + \dfrac{   n(n+1)/2 + \sum_{d=1}^{d_{max}} \frac{[A_{n} = c_d]^2 - [A_{n} = c_d]}{2}}{2n^2} - $$ 
	$$\dfrac{n(n+1)/2 + \sum_{d=1}^{d_{max}} \frac{[B_{n} = c_d]^2 - [B_{n} = c_d]}{2}}{2n^2} + \dfrac{\sum_{k=1}^{n}  (-[C_{2n} = {a_k}]  + [C_{2n} = {b_k}])}{4n^2}  =   $$
	
	$$  \dfrac{\sum_{k=1}^{n}  \sum_{i=1}^{n} \sign(b_k - a_i)}{2n^2} + \dfrac{\sum_{d=1}^{d_{max}} \frac{[A_{n} = c_d]^2 - [A_{n} = c_d]}{2} - \sum_{d=1}^{d_{max}} \frac{[B_{n} = c_d]^2 - [B_{n} = c_d]}{2}}{2n^2} +$$
	$$ \dfrac{\sum_{k=1}^{n}  (-[C_{2n} = {a_k}]  + [C_{2n} = {b_k}])}{4n^2}  =   $$
	
	considering that $ \sum_{d=1}^{d_{max}} \frac{[B_{n} = c_d] - [A_{n} = c_d]}{2} = 0,$ we simplify the previous equation to

	$$  \dfrac{\sum_{k=1}^{n}  \sum_{i=1}^{n} \sign(b_k - a_i)}{2n^2} + \dfrac{\sum_{d=1}^{d_{max}} \frac{[A_{n} = c_d]^2 - [B_{n} = c_d]^2}{2}}{2n^2} +  \dfrac{\sum_{k=1}^{n}  (-[C_{2n} = {a_k}]  + [C_{2n} = {b_k}])  }{4n^2}  =   $$

	$$  \dfrac{\sum_{k=1}^{n}  \sum_{i=1}^{n} \sign(b_k - a_i)}{2n^2}  + \dfrac {\sum_{d=1}^{d_{max}} [A_{n} = c_d]^2 -  [B_{n} = c_d]^2  }{4n^2} + \dfrac{\sum_{k=1}^{n}  (-[C_{2n} = {a_k}]  + [C_{2n} = {b_k}])}{4n^2} =   $$
	$$  \dfrac{\sum_{k=1}^{n}  \sum_{i=1}^{n} \sign(b_k - a_i)}{2n^2}  + \dfrac {\sum_{d=1}^{d_{max}} [A_{n} = c_d]^2 -  [B_{n} = c_d]^2  }{4n^2} +$$
	$$ \dfrac{\sum_{d=1}^{d_{max}}  (-[C_{2n} = c_d][A_{n} = c_d]  + [C_{2n} = c_d][B_{n} = c_d])}{4n^2} =   $$
	$$  \dfrac{\sum_{k=1}^{n}  \sum_{i=1}^{n} \sign(b_k - a_i)}{2n^2}  + \dfrac {\sum_{d=1}^{d_{max}} [A_{n} = c_d]^2 -   [B_{n} = c_d]^2  }{4n^2}  + \underbrace{\dfrac{\sum_{d=1}^{d_{max}}  [C_{2n} = c_d](  [B_{n} = c_d]  -[A_{n} = c_d]) }{4n^2}}_{\text{third sum}} =   $$

	We expand the third sum,
	
	$$\dfrac{\sum_{d=1}^{d_{max}}  [C_{2n} = c_d](  [B_{n} = c_d]  -[A_{n} = c_d])}{4n^2} = $$
	
	$$\dfrac{\sum_{d=1}^{d_{max}}  ([B_{n} = c_d] + [A_{n} = c_d])(  [B_{n} = c_d]  -[A_{n} = c_d])}{4n^2} = \dfrac{\sum_{d=1}^{d_{max}}  ([B_{n} = c_d]^2 -[A_{n} = c_d]^2) }{4n^2} $$
	\end{small}

	Finally,
	
	$$  \dfrac{\sum_{k=1}^{n}  \sum_{i=1}^{n} \sign(b_k - a_i)}{2n^2}  + \dfrac {\sum_{d=1}^{d_{max}} [A_{n} = c_d]^2 -   [B_{n} = c_d]^2  }{4n^2}  + \dfrac{\sum_{d=1}^{d_{max}}  ([B_{n} = c_d]^2 -[A_{n} = c_d]^2)}{4n^2} =   $$

	$$  \dfrac{\sum_{k=1}^{n}  \sum_{i=1}^{n} \sign(b_k - a_i)}{2n^2}  $$

\end{proof}

\begin{myproposition}
	Let $X_A$ and $X_B$ be two random variables and $A_n$ and $B_n$ their $n$ observations respectively.
	The $\mathcal{C}_\mathcal{D}$ estimated from the cumulative difference-plot is $\widetilde{\mathcal{C}_\mathcal{D}}$.
\end{myproposition}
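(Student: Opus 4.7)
The plan is to evaluate Equation~\eqref{equation:C_D_from_cumulative_diff_plot_visually} applied to the difference function $\diff(x) = G_{Y_A}(x) - G_{Y_B}(x)$ obtained from the observed samples, and verify that the result agrees with $\widetilde{\mathcal{C}_\mathcal{D}}(A_n,B_n)$ of Definition~\ref{def:estimation_of_CD}. The key structural observation is that, since $g_{Y_A}$ and $g_{Y_B}$ are piecewise constant with break points $\{j/(2n)\}_{j=0}^{2n}$, the difference $\diff$ is piecewise linear on $[0,1]$ with the same break points; moreover, as already computed in the proof of Lemma~\ref{lemma:lema_for_C_P}, the values $G_{Y_A}(j/(2n))$ and $G_{Y_B}(j/(2n))$ can be rewritten in terms of $\hat{G}_A$ and $\hat{G}_B$ evaluated at $c_j$, with a careful treatment of ties.

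First I would group the $2n$ subintervals according to the distinct sorted values $\{c_d\}_{d=1}^{d_{max}}$: all breakpoints $j/(2n)$ with $c_j = c_d$ form a single ``tie block'' of length $[C_{2n}=c_d]/(2n)$, on which $\diff$ is a single affine function interpolating between $\hat{G}_A(c_{d-1})-\hat{G}_B(c_{d-1})$ at the left endpoint and $\hat{G}_A(c_d)-\hat{G}_B(c_d)$ at the right endpoint (using the convention $\hat{G}_A(c_0)=\hat{G}_B(c_0)=0$). The seven cases in the definition of $\psi(c_d)$ then exhaust the possible sign patterns of $\diff$ at the two endpoints of the block.

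Next I would carry out a case-by-case check that the contribution of each block to $\int_0^1 \bigl(\mathcal{I}[\diff(x)>0]-\mathcal{I}[\diff(x)<0]\bigr)\,dx$ equals $\psi(c_d)\cdot [C_{2n}=c_d]/(2n)$. In the first five cases (both endpoints zero, or both with the same weak sign and not both zero), the sign of $\diff$ is constant almost everywhere on the block, giving contribution $0$ or $\pm[C_{2n}=c_d]/(2n)$. In the two mixed-sign cases, $\diff$ crosses zero exactly once inside the block; solving the linear interpolation for the root shows that the fraction of the block on the positive-$\diff$ side equals $\gamma(c_d)$ (or $1-\gamma(c_d)$ in the opposite mixed case), so the signed contribution is $(1-2\gamma(c_d))$ or $(2\gamma(c_d)-1)$ times the block length, matching the definition of $\psi$.

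Summing these block contributions, reorganizing the total as $\sum_{j=1}^{2n}\psi(c_j)/(2n)$, and dividing by the normalizer then reproduces $\widetilde{\mathcal{C}_\mathcal{D}}$. The identification $k_c = \int_0^1 \mathcal{I}[\diff(x)\neq 0]\,dx$ holds because $\psi(c_d)=0$ precisely when $\diff$ vanishes on the entire block corresponding to $c_d$, while in the mixed-sign cases the isolated zero-crossing inside the block has Lebesgue measure zero and does not affect the integral. The main obstacle is the mixed-sign case: one must express the slope of $\diff$ on the block in terms of $([A_n=c_d]-[B_n=c_d])/n$ together with the block length, and then track the scaling carefully so that the zero-crossing fraction produced by the linear interpolation coincides exactly with the expression $\gamma(c_d)$ given in Definition~\ref{def:estimation_of_CD}.
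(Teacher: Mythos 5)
Your proposal follows essentially the same route as the paper's own proof: both decompose $[0,1]$ into tie blocks indexed by the distinct sorted values $c_d$, exploit that $\diff$ is affine (hence has at most one sign change) on each block, match the endpoint sign patterns to the cases of $\psi$, resolve the mixed-sign case by locating the zero-crossing of the linear interpolant to recover $\gamma(c_d)$, and identify the normalizer $k_c$ with $\int_0^1\mathcal{I}[\diff(x)\neq 0]\,dx$ via the same measure-zero observation. The only difference is presentational — you make the affine structure and the root-finding explicit where the paper introduces the proportion $l_d$ and asserts the correspondence — so no gap to report.
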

\begin{proof}
	In Section~\ref{section:the_difference_plot}, we defined the $\mathcal{C}_\mathcal{D}$ estimated from the cumulative difference-plot as 
	$$\mathcal{C}_\mathcal{D} = \frac{  \dfrac{\int_{0}^{1}\mathcal{I}[\diff(x) > 0] - \mathcal{I}[\diff(x) < 0] dx}{2} + \frac{1}{2}}{\int_{0}^{1}\mathcal{I}[\diff(x) \neq 0]dx},$$
	where $\mathcal{I}$ is the indicator function.
	This proposition claims that 
	
	$$ \frac{  \dfrac{\int_{0}^{1}\mathcal{I}[\diff(x) > 0] - \mathcal{I}[\diff(x) < 0] dx}{2} + \frac{1}{2}}{\int_{0}^{1}\mathcal{I}[\diff(x) \neq 0]dx} = $$
	$$ \dfrac{\sum_{j=1}^{2n} \dfrac{\psi(c_j)}{2n} + 1}{2}\cdot k_c^{-1}.$$

	To prove it, we show that 
	
	$i) \ \int_{0}^{1}\mathcal{I}[\diff(x) > 0] - \mathcal{I}[\diff(x) < 0] dx = \sum_{j=1}^{2n} \dfrac{\psi(c_j)}{2n}$
	
	and
	
	$ii) \ \int_{0}^{1}\mathcal{I}[\diff(x) \neq 0]dx = k_c. $

	Let us focus our attention in $i)$.
	We split the integral into $2n$ parts:

	$$\int_{0}^{1}\mathcal{I}[\diff(x) > 0] - \mathcal{I}[\diff(x) < 0] dx = $$ 
	\begin{equation}
	\label{eq:desglosar_suma_de_integrales_CD}
	\sum_{j=1}^{2n} \int_{\frac{j-1}{2n}}^{\frac{j}{2n}} \mathcal{I}[\diff(x) > 0] - \mathcal{I}[\diff(x) < 0] dx
	\end{equation}

	Let $C_{2n} = \{c_j\}_{j=1}^{2n}$ be the list of all the sorted observations of $A_n$ and $B_n$ where $c_1$ is the smallest observation and $c_{2n}$ the largest and let $\{c_d\}_{d=1}^{d_{max}}$ be the sorted list of unique values in $C_{2n}$.
	We group the terms in the sum of Equation~\eqref{eq:desglosar_suma_de_integrales_CD} into $d_{max}$ groups such that for every $j$ in a group, $c_j = c_d$ .
	
	$$\sum_{d=1}^{d_{max}} \sum_{j} \int_{\frac{j-1}{2n}}^{\frac{j}{2n}}\mathcal{I}[\diff(x) > 0] - \mathcal{I}[\diff(x) < 0] dx$$

	Now we join the integrals for every $j$ in each group, such that the $j$ of the integral goes from $j_{d\downarrow} - 1$ to $j_{d\uparrow}$ (if the sample $c_d$ is unique in $C_{2n}$, then $j_{d\downarrow} = j_{d\uparrow} = j$).

	\begin{equation}
	\label{equation:integral_cd_desglosado_por_grupos_con_el_mismo_cj}
	\sum_{d=1}^{d_{max}} \int_{\frac{j_{d\downarrow} - 1}{2n}}^{\frac{j_{d\uparrow}}{2n}}\mathcal{I}[\diff(x) > 0] - \mathcal{I}[\diff(x) < 0] dx
	\end{equation}

	In the interval $(\frac{j_{d\downarrow} - 1}{2n},\frac{j_{d\uparrow}}{2n})$, $\diff$ evaluates to one of these four possibilities:
	
	\begin{enumerate}
		\item $\diff(x) = 0$  for all $x\in(\frac{j_{d\downarrow} - 1}{2n},\frac{j_{d\uparrow}}{2n})$
		\item $\diff(x) > 0$  for all $x\in(\frac{j_{d\downarrow} - 1}{2n},\frac{j_{d\uparrow}}{2n})$
		\item $\diff(x) < 0$  for all $x\in(\frac{j_{d\downarrow} - 1}{2n},\frac{j_{d\uparrow}}{2n})$
		\item $\diff(x) = 0$ in one point in the interval $(\frac{j_{d\downarrow} - 1}{2n},\frac{j_{d\uparrow}}{2n})$ and $\diff(x) > 0$  or $\diff(x) < 0$ for every other $x$ in the interval. However, we can safely ignore this point as the value of the integral is invariant to the value of the function in sets of zero measure.
	\end{enumerate}
	
	By looking at the empirical distributions $\hat{G}_A(x)$ and $\hat{G}_B(x)$ estimated from $A_n$ and $B_n$ respectively, we can guess which of these possibilities corresponds to each interval.

	\begin{small}
		$$
		\begin{cases}
		\multirow{2}{1em}{$1)$} & \text{ if } \hat{G}_A(c_{d-1}) = \hat{G}_B(c_{d-1})                                                             \\ \vspace{0.75em} &  \text{ and } \hat{G}_A(c_{d}) = \hat{G}_B(c_{d})\\ 
		\multirow{2}{1em}{$2)$} & \text{ if }  \hat{G}_A(c_{d-1}) \geq \hat{G}_B(c_{d-1})                                                         \\ \vspace{0.75em} &    \text{ and } \hat{G}_A(c_{d})   >  \hat{G}_B(c_{d})\\ 
		\multirow{2}{1em}{$2)$}  & \text{ if }  \hat{G}_A(c_{d-1}) >    \hat{G}_B(c_{d-1})                                                        \\ \vspace{0.75em} &    \text{ and } \hat{G}_A(c_{d}) \geq \hat{G}_B(c_{d})\\ 
		\multirow{2}{1em}{$3)$}  & \text{ if }  \hat{G}_B(c_{d-1}) \geq \hat{G}_A(c_{d-1})                                                       \\ \vspace{0.75em} &    \text{ and } \hat{G}_B(c_{d})   >  \hat{G}_A(c_{d})\\ 
		\multirow{2}{1em}{$3)$}  & \text{ if }  \hat{G}_B(c_{d-1}) >    \hat{G}_A(c_{d-1})                                                       \\ \vspace{0.75em} &    \text{ and } \hat{G}_B(c_{d}) \geq \hat{G}_A(c_{d})\\
		\multirow{2}{7em}{$4)$} & \text{ if }  \hat{G}_B(c_{d-1}) >    \hat{G}_A(c_{d-1})      \\ \vspace{0.75em} &    \text{ and } \hat{G}_A(c_{d})   >  \hat{G}_B(c_{d})\\
		\multirow{2}{7em}{$4)$} & \text{ if }  \hat{G}_A(c_{d-1}) >    \hat{G}_B(c_{d-1}) \\ \vspace{0.75em} &    \text{ and } \hat{G}_B(c_{d})   >  \hat{G}_A(c_{d})\\
		\end{cases} 
		$$
	\end{small}

	The value of the integral in Equation~\eqref{equation:integral_cd_desglosado_por_grupos_con_el_mismo_cj} corresponding to these possibilities are the following:
	
	\begin{enumerate}
		\item $0$
		\item $[C_{2n} = c_d] \cdot \frac{1}{2n}$
		\item $- [C_{2n} = c_d] \cdot \frac{1}{2n}$
		\item $[C_{2n} = c_d] \cdot (2 \cdot l_d - 1) \cdot \frac{1}{2n}$ 
	\end{enumerate}
	
	where $[C_{2n} = c_d]$ counts the number of items in $C_{2n}$ equal to $c_d$ and $l_d$ is the proportion in which $\diff(x) > 0$ in the interval $(\frac{j_{d\downarrow}- 1}{2n},\frac{j_{d\uparrow}}{2n})$.
	For example, $l_d = 0.75$ would represent that $\diff(x) > 0$ in $75\%$ of the total length of the interval, and $\diff(x) < 0$ in the other $25\%$.

	With this, we can rewrite Equation~\eqref{equation:integral_cd_desglosado_por_grupos_con_el_mismo_cj} as

	$$\sum_{d=1}^{d_{max}} [C_{2n} = c_d] \cdot \psi(c_d) \cdot \frac{1}{2n} = \sum_{j=1}^{2n} \frac{\psi(c_j)}{2n}, $$
	
	where $\psi$ is the function introduced in Definition~\ref{def:estimation_of_CD}.
	
	Now, we only need to prove $ii)$.
	Specifically, we need to show that
	
	$$\int_{0}^{1}\mathcal{I}[\diff(x) \neq 0]dx = k_c.$$
	
	We have that 
	
	$$\int_{0}^{1}\mathcal{I}[\diff(x) \neq 0]dx = \sum_{d=1}^{d_{max}} \int_{\frac{j_{d\downarrow} - 1}{2n}}^{\frac{j_{d\uparrow}}{2n}}\mathcal{I}[\diff(x) \neq 0]dx,$$
	
	and
	
	$$k_c = \frac{\sum_{j=1}^{2n} \mathcal{I}[\psi(c_j) \neq 0]}{2n} = $$
	$$\sum_{d=1}^{d_{max}} [C_{2n} = c_d] \frac{ \mathcal{I}[\psi(c_d) \neq 0]}{2n}.$$

	Finally, it is easy to see that 
	
	$$ \int_{\frac{j_{d\downarrow} - 1}{2n}}^{\frac{j_{d\uparrow}}{2n}}\mathcal{I}[\diff(x) \neq 0]dx = [C_{2n} = c_d] \frac{ \mathcal{I}[\psi(c_d) \neq 0]}{2n}, $$
	
	because $\diff(x) = 0$ in the interval $(\frac{j_{d\downarrow} - 1}{2n},\frac{j_{d\uparrow}}{2n})$ if and only if $\psi(c_d) = 0$.

\end{proof}

\subsection{Convergence of the estimators}
\label{section:convergence_of_estimators}

\begin{myproposition}
	Let $X_A$ and $X_B$ be two \underline{continuous} random variables and $\{a_i\}_{i\in\mathbb{N}}$ and $\{b_i\}_{i\in\mathbb{N}}$ be two infinite sequences of their observations respectively.
	Let $A_n$ and $B_n$ be the two finite subsequences that contain the first $n$ elements of $\{a_i\}_{i\in\mathbb{N}}$ and $\{b_i\}_{i\in\mathbb{N}}$ respectively.
	Then, 
	$$ \mathcal{C}_\mathcal{P}(X_A,X_B)  = \lim\limits_{n \to \infty} \widetilde{\mathcal{C}_\mathcal{P}}(A_n, B_n)$$
\end{myproposition}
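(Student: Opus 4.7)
The plan is to recognize $\widetilde{\mathcal{C}_\mathcal{P}}$ as essentially the two-sample Mann--Whitney U-statistic, and then invoke the law of large numbers together with Glivenko--Cantelli to get the limit.

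First, I would rewrite the estimator in a more tractable form. Since $X_A$ and $X_B$ are continuous, the events $\{a_i = b_k\}$ have probability zero for each $i,k$, so with probability one $\sign(b_k - a_i) \in \{-1,+1\}$ for all pairs. Using the identity $\tfrac{1}{2}\sign(b_k - a_i) + \tfrac{1}{2} = \mathbf{1}[a_i < b_k]$, the estimator becomes (almost surely)
\begin{equation*}
\widetilde{\mathcal{C}_\mathcal{P}}(A_n,B_n) \;=\; \frac{1}{n^2}\sum_{i=1}^{n}\sum_{k=1}^{n} \mathbf{1}[a_i < b_k] \;=\; \frac{1}{n}\sum_{k=1}^{n} \hat{G}_{A,n}(b_k),
\end{equation*}
where $\hat{G}_{A,n}$ denotes the empirical cumulative distribution function built from $A_n$.

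Next, I would split this into a ``population'' part and a ``Glivenko--Cantelli error'' part:
\begin{equation*}
\frac{1}{n}\sum_{k=1}^{n} \hat{G}_{A,n}(b_k) \;=\; \underbrace{\frac{1}{n}\sum_{k=1}^{n} G_A(b_k)}_{T_1(n)} \;+\; \underbrace{\frac{1}{n}\sum_{k=1}^{n} \bigl(\hat{G}_{A,n}(b_k) - G_A(b_k)\bigr)}_{T_2(n)}.
\end{equation*}
For $T_1(n)$, the summands $G_A(b_k)$ are i.i.d.\ copies of $G_A(X_B)$, bounded by $1$, so the strong law of large numbers gives $T_1(n) \to \mathbb{E}[G_A(X_B)] = \int_N g_B(x) G_A(x)\,dx = \mathcal{C}_\mathcal{P}(X_A,X_B)$ by Equation~\eqref{equation:definition_of_C_P_from_density}. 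For $T_2(n)$, the key observation is that $T_2(n)$ is independent of $B_n$ in its size: it is bounded in absolute value by $\sup_{x\in\mathbb{R}}|\hat{G}_{A,n}(x) - G_A(x)|$, and by the Glivenko--Cantelli theorem this supremum tends to $0$ almost surely as $n \to \infty$ (this step uses only that $\{a_i\}$ is an i.i.d.\ sequence from $X_A$, which does not require continuity). Hence $T_2(n) \to 0$.

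Combining the two limits yields $\widetilde{\mathcal{C}_\mathcal{P}}(A_n,B_n) \to \mathcal{C}_\mathcal{P}(X_A,X_B)$, almost surely, which in particular implies the deterministic limit statement once we agree on a mode of convergence. The main obstacle is essentially bookkeeping: making clear that the ``ties contribute zero'' step is valid only because the variables are continuous (this is precisely where the continuity assumption is used), and invoking Glivenko--Cantelli uniformly rather than pointwise so that the error $T_2(n)$ can be controlled without having to worry about the randomness of the evaluation points $b_k$. Alternatively, one could short-circuit this decomposition by citing Hoeffding's strong law for two-sample U-statistics with the bounded kernel $h(x,y) = \mathbf{1}[x<y]$, whose expected value is exactly $\mathcal{C}_\mathcal{P}(X_A,X_B)$.
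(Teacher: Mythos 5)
Your proof is correct, and it takes a genuinely different route from the paper's. The paper argues by randomization: it draws random permutations $\sigma_s,\tau_s$, forms the pairings $P_s(A_n,B_n)=\sum_{i=1}^{n}\frac{\sign(b_{\sigma_s(i)}-a_{\tau_s(i)})}{2n}+\frac{1}{2}$, notes that each $P_s$ is an average of $n$ identically distributed, mean-$\mathcal{C}_\mathcal{P}$ terms (using continuity to rule out ties), and then observes that averaging the $P_s$ over many random pairings recovers the full double sum $\widetilde{\mathcal{C}_\mathcal{P}}(A_n,B_n)$, concluding that the latter is ``also an estimator'' of $\mathcal{P}(X_A<X_B)$. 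That argument is essentially an unbiasedness computation dressed up as a convergence claim: it never controls the dependence between the $n^2$ summands as $n\to\infty$, so it does not by itself deliver the stated limit. Your decomposition $\widetilde{\mathcal{C}_\mathcal{P}}(A_n,B_n)=\frac{1}{n}\sum_{k}G_A(b_k)+\frac{1}{n}\sum_{k}\bigl(\hat{G}_{A,n}(b_k)-G_A(b_k)\bigr)$ fixes exactly this: the first term is an honest i.i.d.\ average handled by the strong law, and the second is dominated uniformly by the Glivenko--Cantelli supremum, so you obtain almost sure convergence to $\mathbb{E}[G_A(X_B)]=\int_N g_B(x)G_A(x)\,dx$. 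You also correctly isolate where continuity is used (only to make the tie events negligible so that the $\sign$ form matches the indicator form) and where it is not (Glivenko--Cantelli). The one point worth stating explicitly if this were written up is that the conclusion holds almost surely rather than deterministically, which you already flag; citing Hoeffding's strong law for two-sample U-statistics with kernel $\mathcal{I}[x<y]$ would be an equally valid shortcut. In short, your proof is more rigorous than the paper's own, at the cost of invoking Glivenko--Cantelli, whereas the paper's randomization sketch only really establishes unbiasedness.
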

\begin{proof}
	Let $\{P_s\}_{s \in \mathbb{N}}$ be a sequence of estimators with every estimator is determined randomly with the following procedure:
	
	1) generate two random permutations $\sigma_s$ and $\tau_s$ of size $n$.
	
	2) define each estimation as
	
	$$P_s(A_n,B_n) = \sum_{i=1}^{n}\dfrac{\sign(b_{\sigma_s(i)} - a_{\tau_s(i)})}{2n} + \dfrac{1}{2}.$$
	
	It is easy to see that each $P_s$ is an estimator of $\mathcal{P}(X_A < X_B)$ (since $X_A$, $X_B$ are continuous, we know that ${\mathcal{P}(X_A = X_B) = 0}$).
	Now observe that the sequence $\left\lbrace \dfrac{\sum_{t = 1}^{s} P_t(A_n,B_n) }{s} \right\rbrace_{n \in \mathbb{N}}$ converges to ${\widetilde{\mathcal{C}_\mathcal{P}}(A_n, B_n) =  \sum_{i,k=1...n} \dfrac{\sign(b_k -a_i)}{2n^2} + \frac{1}{2}}$, which means that $\widetilde{\mathcal{C}_\mathcal{P}}(A_n, B_n)$ is also an estimator of ${\mathcal{P}(X_A < X_B)}$.

\end{proof}

Unfortunately, the estimator $\widetilde{\mathcal{C}_\mathcal{D}}$ will not always converge: $\mathcal{C}_\mathcal{D}$ fails to satisfy Property~\ref{prop:scale_of_portions}, and this means that a a few points can still have a big impact in the estimation of $\mathcal{C}_\mathcal{D}$.
Specifically, given the continuous random variables $X_A$ and $X_B$ defined in $N$, $\widetilde{\mathcal{C}_\mathcal{D}}$ will converge iff ${\int_{N} \mathcal{I}[G_A(x) = G_B(x)] \cdot (g_A + g_B) dx = 0}$.

Luckily, this lack of convergence is not a problem when the estimation of $\mathcal{C}_\mathcal{D}$ is carried out visually in the cumulative difference-plot.
Since the visual representation of the cumulative difference-plot involves rendering the plot with pixels, there exists an small $\delta > 0$ such that when $|\diff(x)| < \delta$, the difference is displayed as $0$.

In practice, we do not even need to account for the case that $\diff(x) = 0$.
The cumulative difference-plot models the uncertainty with a confidence band, and when $\diff(x) = 0$ is inside the confidence band, then so are $\diff(x) > 0$ and $\diff(x) < 0$.
If we assume that the difference is positive, negative or zero every time that $\diff(x) = 0$ is inside the confidence band, we obtain the estimations $\widetilde{\mathcal{C}_\mathcal{D}}^+$, $\widetilde{\mathcal{C}_\mathcal{D}}^-$ and $\widetilde{\mathcal{C}_\mathcal{D}}^0$ respectively.
Now since $\widetilde{\mathcal{C}_\mathcal{D}}^+ > \widetilde{\mathcal{C}_\mathcal{D}}^= > \widetilde{\mathcal{C}_\mathcal{D}}^- $, the estimation of $\mathcal{C}_\mathcal{D}$  with the highest part of the confidence band is an upper bound of $\mathcal{C}_\mathcal{D}$.
The same is true for the estimation with the lowest part of the confidence band: it is a lower bound of $\mathcal{C}_\mathcal{D}$.

Although $\widetilde{\mathcal{C}_\mathcal{D}}$ does not converge to $\mathcal{C}_\mathcal{D}$, for any $\epsilon > 0$ we can find a $\delta$ small enough such that the difference between $\widetilde{\mathcal{C}_\mathcal{D}}^\delta$ and $\mathcal{C}_\mathcal{D}$ is smaller than $\epsilon$.
We formalize this claim in Conjecture~\ref{conjecture:cd_estimation_converges}, and we leave the proof for future work.

\begin{mydef}($\delta$-estimation of $\mathcal{C}_\mathcal{D}$) \\
	\label{def:estimation_of_CD_with epsilon}
	Let $X_A$ and $X_B$ be two continuous random variables and $A_n$ and $B_n$ their $n$ observations respectively.
	Let $\{c_j\}_{j=1}^{2n}$ the sorted list of all the observations of $A_n$ and $B_n$ where $c_1$ is the smallest observation and $c_{2n}$ the largest.
	Let $\{c_d\}_{d=1}^{d_{max}}$ be the sorted list of unique values in $\{c_j\}_{j=1}^{2n}$.
	
	We define the $\delta$-estimation of $\mathcal{C}_\mathcal{D}$, denoted as $\widetilde{\mathcal{C}_\mathcal{D}}^\delta$, as the same estimation as $\widetilde{\mathcal{C}_\mathcal{D}}$, but 
	assuming that the empirical distributions computed from $A_n$ and $B_n$ are equal when $|\hat{G}_A(x) - \hat{G}_B(x)| < \delta$.
\end{mydef}

The previous definition can also be based in the  $\delta$-difference, defined as $\diff^{\delta}(x) = \diff(x)$ when $\diff(x) \geq \delta$, and $\diff^{\delta}(x) = 0$ otherwise.

\begin{myconjecture}
	\label{conjecture:cd_estimation_converges}
	Let $X_A$ and $X_B$ be two \underline{continuous} random variables and $\{a_i\}_{i\in\mathbb{N}}$ and $\{b_i\}_{i\in\mathbb{N}}$ be two infinite sequences of their observations respectively.
	Let $A_n$ and $B_n$ be the two finite subsequences that contain the first $n$ elements of $\{a_i\}_{i\in\mathbb{N}}$ and $\{b_i\}_{i\in\mathbb{N}}$ respectively.
	Then, for all $\epsilon > 0$, there exists a $\delta > 0$ such that 
	$$ \left| \mathcal{C}_\mathcal{D}(X_A,X_B)  - \lim\limits_{n \to \infty} \widetilde{\mathcal{C}_\mathcal{D}}^\delta(A_n, B_n)\right|  < \epsilon$$
\end{myconjecture}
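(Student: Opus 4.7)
The plan is to decouple the double limit in the conjecture by first letting $n \to \infty$ with $\delta$ fixed, and then letting $\delta \to 0$. Specifically, I would introduce a population analogue $\mathcal{C}_\mathcal{D}^\delta(X_A,X_B)$ of the $\delta$-estimator, defined by replacing the strict inequalities $G_A(x) > G_B(x)$ and $G_A(x) < G_B(x)$ in Definition~\ref{def:dominance_density_function} by $G_A(x) - G_B(x) \ge \delta$ and $G_B(x) - G_A(x) \ge \delta$ respectively, and the normalization constants $k_A$ and $k_B$ by their analogues restricted to $\{x : |G_A(x)-G_B(x)| \ge \delta\}$. The target is then to prove (i) $\lim_{n\to\infty} \widetilde{\mathcal{C}_\mathcal{D}}^\delta(A_n,B_n) = \mathcal{C}_\mathcal{D}^\delta(X_A,X_B)$ almost surely for all but countably many $\delta$, and (ii) $\mathcal{C}_\mathcal{D}^\delta(X_A,X_B) \to \mathcal{C}_\mathcal{D}(X_A,X_B)$ as $\delta \to 0$.

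For (i), the key tool is the Glivenko--Cantelli theorem: $\|\hat{G}_A - G_A\|_\infty \to 0$ and $\|\hat{G}_B - G_B\|_\infty \to 0$ almost surely. Consequently, for every $x$ with $|G_A(x)-G_B(x)| \neq \delta$, the indicator $\mathcal{I}[|\hat{G}_A(x)-\hat{G}_B(x)| \ge \delta]$ eventually stabilizes at $\mathcal{I}[|G_A(x)-G_B(x)| \ge \delta]$, and the same eventually holds for the sign inside this event. The boundary set $\{|G_A-G_B| = \delta\}$ has positive $(g_A+g_B)$-measure for at most countably many values of $\delta$ (since those level sets are mutually disjoint and the total mass is $2$), so we may restrict attention to $\delta$ avoiding this pathology. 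Dominated convergence, applied to the (bounded) integrands appearing in numerator and denominator of the integral form derived in Appendix~\ref{appendix:proof_difference_graph}, then yields convergence of numerator and denominator separately, and hence of the ratio.

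For (ii), let $U_\delta = \{x \in N : 0 < |G_A(x)-G_B(x)| < \delta\}$. Since $G_A - G_B$ is continuous, the family $\{U_\delta\}_{\delta > 0}$ is decreasing with $\bigcap_{\delta > 0} U_\delta = \emptyset$, so by dominated convergence $\int_{U_\delta}(g_A + g_B)\,dx \to 0$. The discrepancy $\mathcal{C}_\mathcal{D}^\delta - \mathcal{C}_\mathcal{D}$ is entirely driven by the reclassification of points in $U_\delta$ (both in the signed integrand and in the normalization), so it can be bounded by a constant multiple of $\int_{U_\delta}(g_A+g_B)\,dx$, where the constant depends on the (fixed and finite) values $k_A, k_B$ appearing in Definition~\ref{def:dominance_density_function}. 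Composing (i) and (ii): given $\epsilon > 0$, first choose $\delta > 0$ small enough that $|\mathcal{C}_\mathcal{D}^\delta - \mathcal{C}_\mathcal{D}| < \epsilon$ and the boundary level set has zero $(g_A+g_B)$-measure, then invoke (i) to match $\lim_n \widetilde{\mathcal{C}_\mathcal{D}}^\delta$ with $\mathcal{C}_\mathcal{D}^\delta$.

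The main obstacle is handling the normalization $k_c$ in $\widetilde{\mathcal{C}_\mathcal{D}}^\delta$ and ruling out degenerate situations where $\{|G_A-G_B| \ge \delta\}$ has vanishing mass, which would turn the ratio into $0/0$. That case corresponds to $X_A$ and $X_B$ being essentially indistinguishable and is already covered by the convention $\mathcal{C}_\mathcal{D} = 0.5$ in Definition~\ref{def:dominance_rate}; it should be treated separately, with $\delta$ chosen smaller than the maximum of $|G_A-G_B|$. A secondary subtlety is that $\hat G_A, \hat G_B$ are piecewise constant, so comparisons $|\hat G_A - \hat G_B| \ge \delta$ involve jumps; the uniform Glivenko--Cantelli bound controls these jumps simultaneously, but the bookkeeping around the finitely many jump points at each $n$ is where the proof will require care.
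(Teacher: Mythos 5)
First, a point of reference: the paper offers no proof to compare against here. The statement is explicitly labelled a conjecture and the authors write that they ``leave the proof for future work,'' so your proposal is being measured only against the statement itself.

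Your two-stage strategy (fix $\delta$, let $n\to\infty$, then let $\delta\to 0$, restricting to $\delta$ whose level set $\{|G_A-G_B|=\delta\}$ carries no $(g_A+g_B)$-mass) is sound, and your step (ii) and the countability argument for the bad levels are correct. The genuine gap is in step (i): the population analogue $\mathcal{C}_\mathcal{D}^\delta$ you define by thresholding Definition~\ref{def:dominance_density_function} is \emph{not} the almost-sure limit of $\widetilde{\mathcal{C}_\mathcal{D}}^\delta$. Definition~\ref{def:dominance_density_function} normalizes the positive part by $\int_{\{G_A\neq G_B\}}g_A$ and the negative part by $\int_{\{G_A\neq G_B\}}g_B$ \emph{separately}, whereas the estimator of Definition~\ref{def:estimation_of_CD} sums $\psi(c_j)$ over the \emph{pooled} sample and divides by the pooled count $k_c$; its limit (which is what your dominated-convergence argument in quantile space actually produces) is the functional
$\bigl(H(D_A^\delta)-H(D_B^\delta)\bigr)/H(D_A^\delta\cup D_B^\delta)$, suitably affinely rescaled, where $H=\tfrac12(G_A+G_B)$, $D_A^\delta=\{G_A-G_B\geq\delta\}$ and $D_B^\delta=\{G_B-G_A\geq\delta\}$. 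For fixed $\delta>0$ this pooled ratio and your $\mathcal{C}_\mathcal{D}^\delta$ genuinely differ (take $D_A^\delta$ with $g_A$-mass $0.6$ and $g_B$-mass $0.2$, $D_B^\delta$ with $g_A$-mass $0.1$ and $g_B$-mass $0.3$: one gives $0.6/0.7-0.3/0.5\approx 0.257$, the other $(0.4-0.2)/0.6\approx 0.333$). So the equality asserted in (i) would fail.

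The gap is repairable: both functionals converge to $\mathcal{C}_\mathcal{D}$ as $\delta\downarrow 0$, because on $\{G_A=G_B\}$ one has $g_A=g_B$ almost everywhere (the derivative of $G_A-G_B$ vanishes a.e.\ on its zero set), which forces $\int_{\{G_A\neq G_B\}}g_A=\int_{\{G_A\neq G_B\}}g_B$ and makes the separately-normalized and pooled expressions agree in the limit. So you should replace $\mathcal{C}_\mathcal{D}^\delta$ by the pooled-measure functional in step (i), and then prove in step (ii) that \emph{this} quantity tends to $\mathcal{C}_\mathcal{D}$, invoking the identity above; your $U_\delta$ argument carries over. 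Two smaller items to nail down: the convention for the $0/0$ case must be imposed on the estimator as well as on $\mathcal{C}_\mathcal{D}$ (when $G_A\equiv G_B$, Glivenko--Cantelli forces $\sup_x|\hat G_A(x)-\hat G_B(x)|<\delta$ eventually, so $k_c^\delta=0$ for all large $n$); and the passage from $\frac{1}{2n}\sum_j f_n(c_j)$ to an integral over a fixed domain needs to be made explicit, since dominated convergence does not apply directly to integrals against the $n$-dependent empirical measure --- the quantile-space (plot) representation over $[0,1]$ with Lebesgue measure is the right vehicle, exactly as you hint.
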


\subsection{Experimental verification}
In the following, we experimentally verify that the cumulative difference-plot can be used to deduce $\mathcal{C}_\mathcal{D}$ and $\mathcal{C}_\mathcal{P}$. 
To do so, we define six pairs of example random variables and measure the $\mathcal{C}_\mathcal{P}$ and $\mathcal{C}_\mathcal{D}$ with three different methods: the definition of $\mathcal{C}_\mathcal{D}$ and $\mathcal{C}_\mathcal{P}$ (Equation~\eqref{equation:definition_of_C_P_from_density} and Definition~\ref{def:dominance_rate}), the estimators in Definitions~\ref{def:estimation_of_CP} and \ref{def:estimation_of_CD} and from the \textit{cumulative difference-plot}.
The \textit{cumulative difference-plot} has a confidence band in addition to the estimation, and this confidence band allows the lower and upper bounds of $\mathcal{C}_\mathcal{D}$ and $\mathcal{C}_\mathcal{P}$ to be computed.

The probability density functions of the six examples are shown in Figures~\ref{fig:example1density1} through \ref{fig:example1density6}.
The probability density of these random variables is a mix of normal distributions, the beta distribution, and the log-normal distribution.

The difference plot and the estimations were carried out with $5000$ samples from each random variable.
The $\mathcal{C}_\mathcal{P}$ and $\mathcal{C}_\mathcal{D}$ values computed are shown in Figures~\ref{fig:three_methods_estimationCP} and \ref{fig:three_methods_estimationCD} respectively.
In every case, the estimations with the three methods match, except for $\mathcal{C}_\mathcal{D}$ in Example 4 (Figure~\ref{fig:example1density4}).
This is a deceptive example because, in most of the probability mass of $X_A$ and $X_B$, the cumulative distribution functions are equal.
Consequently, in this example, the estimator of $\mathcal{C}_\mathcal{D}$ introduced in Definition~\ref{def:estimation_of_CD} is unstable: it is very likely that the estimated empirical distributions are different even though the cumulative distribution functions are identical.
Overcoming this limitation involves choosing a small $\delta >0$, such that when the difference between the empirical distributions is smaller than $\delta$, they are considered equal.

We conclude that, in most cases, the three estimation methods (from densities, using the estimators and with the cumulative difference-plot) yield a similar result, which validates the statements in the previous section.

\FloatBarrier

\begin{figure}[h]
	\centering
	\includegraphics[width=0.57\linewidth]{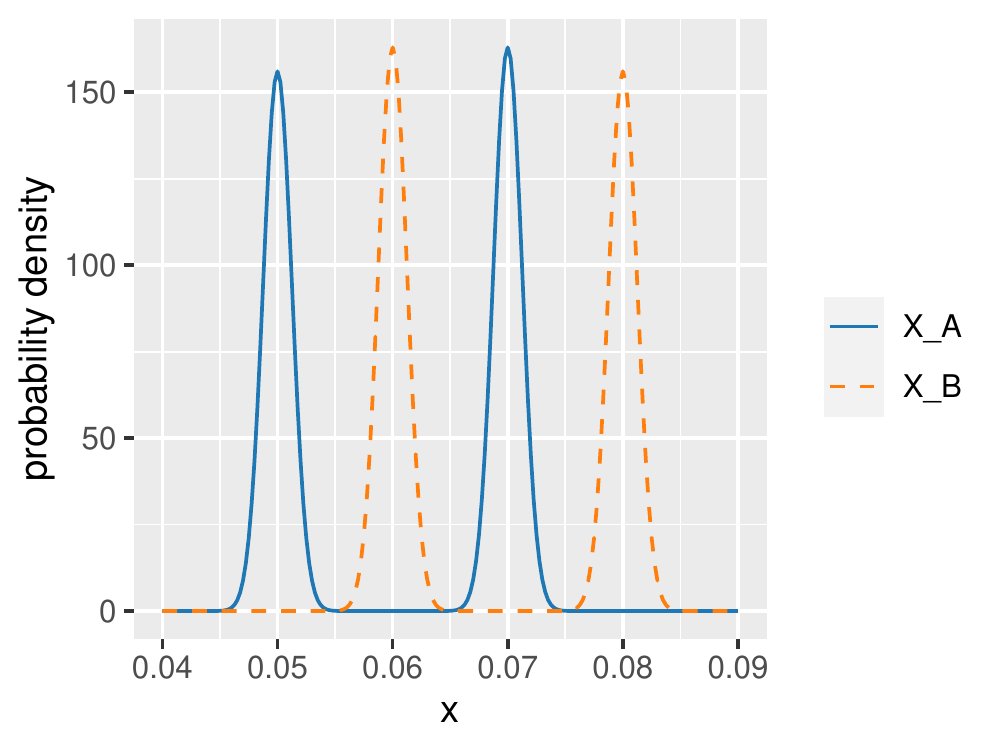}
	\caption{Probability density functions of Example 1}
	\label{fig:example1density1}
\end{figure}

\begin{figure}[h]
	\centering
	\includegraphics[width=0.57\linewidth]{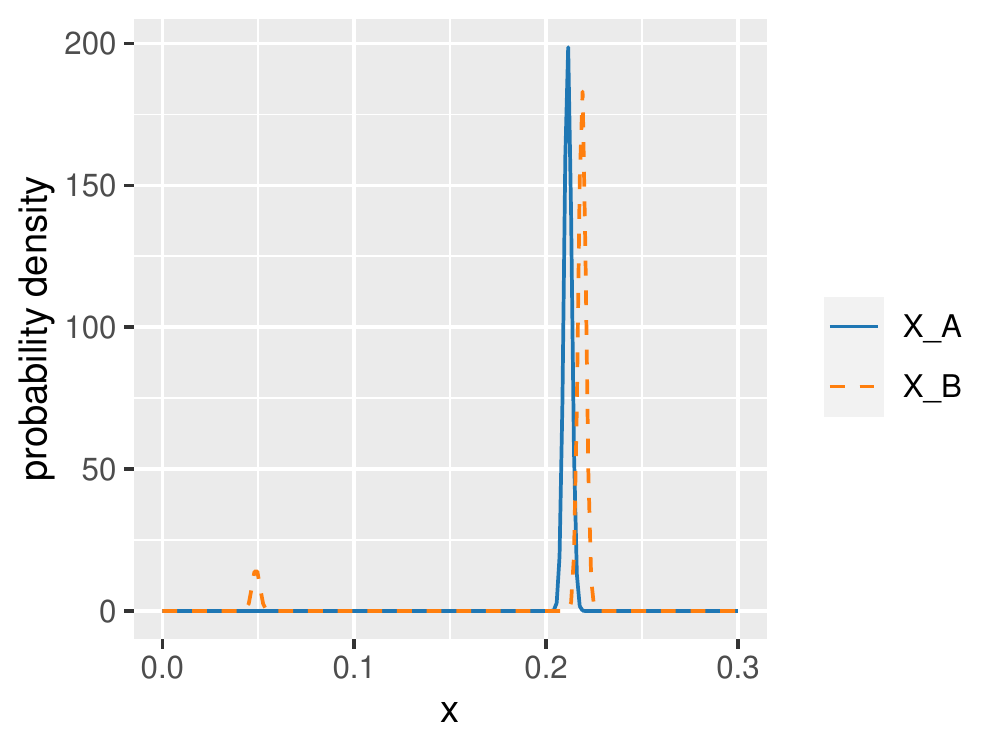}
	\caption{Probability density functions of Example 2}
	\label{fig:example1density2}
\end{figure}

\begin{figure}[h]
	\centering
	\includegraphics[width=0.57\linewidth]{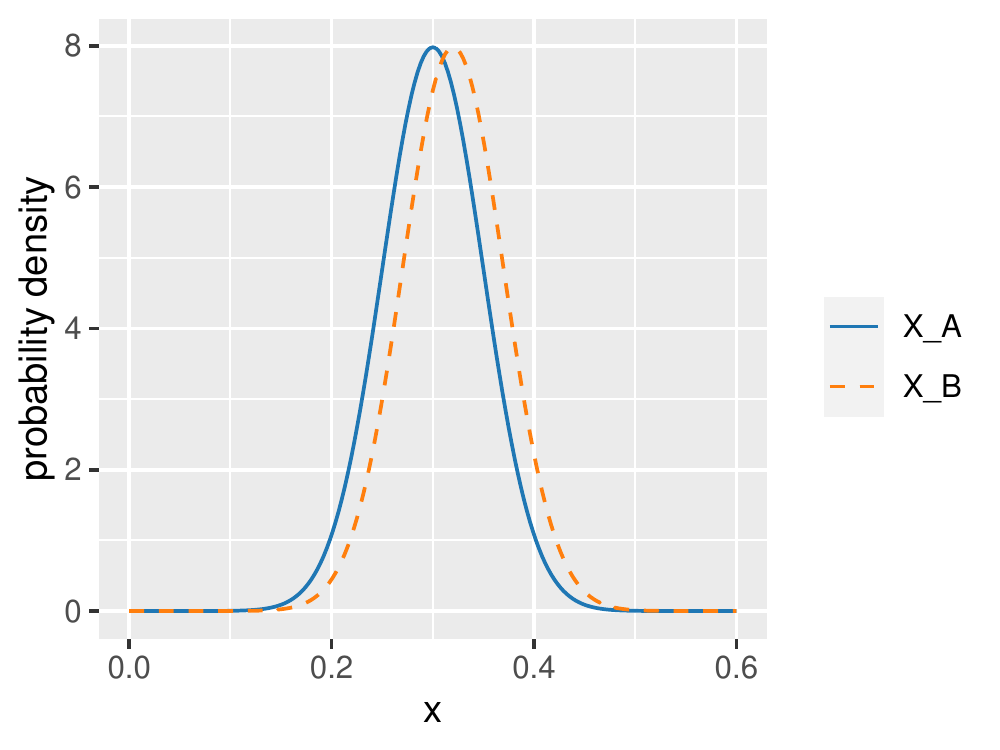}
	\caption{Probability density functions of Example 3}
	\label{fig:example1density3}
\end{figure}

\begin{figure}[h]
	\centering
	\includegraphics[width=0.57\linewidth]{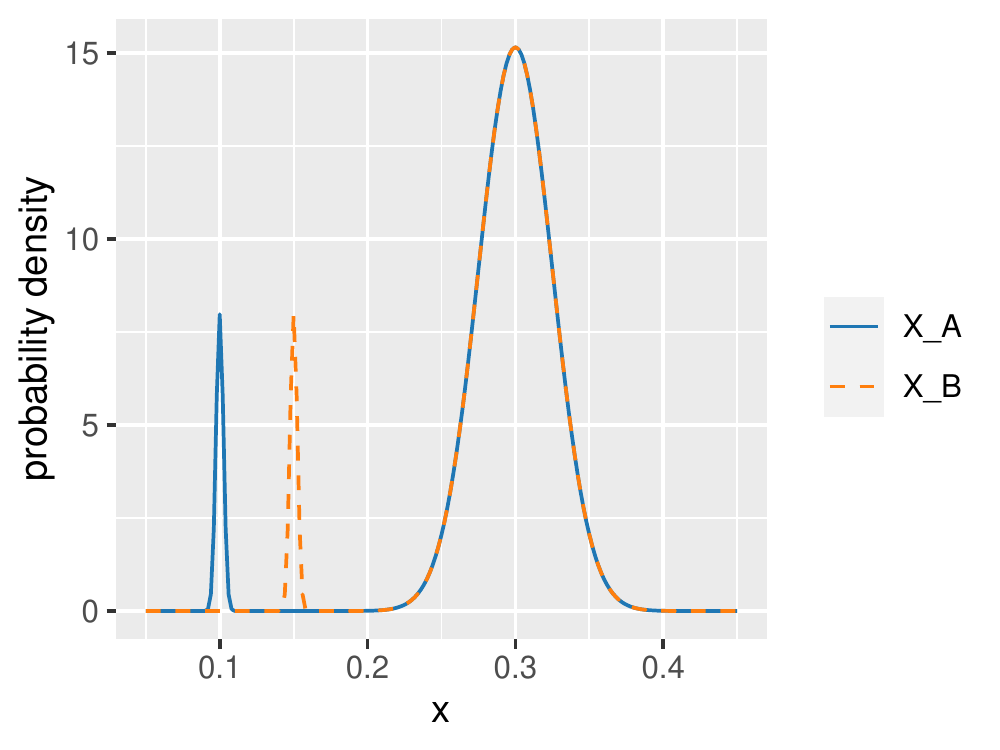}
	\caption{Probability density functions of Example 4}
	\label{fig:example1density4}
\end{figure}

\begin{figure}[h]
	\centering
	\includegraphics[width=0.57\linewidth]{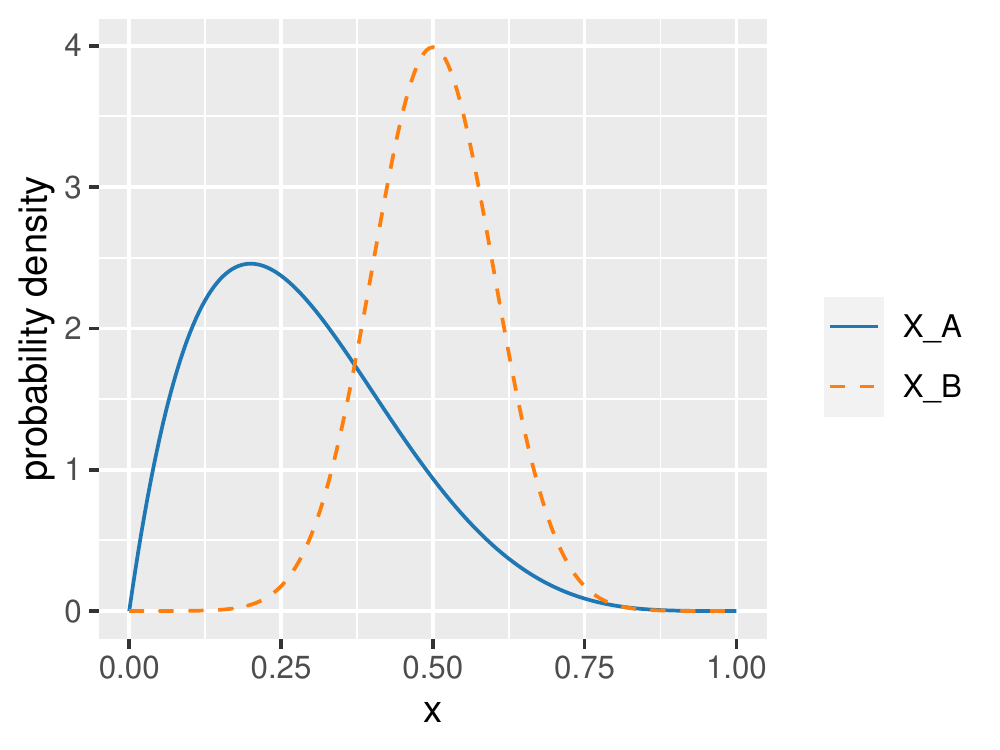}
	\caption{Probability density functions of Example 5}
	\label{fig:example1density5}
\end{figure}

\begin{figure}[h]
	\centering
	\includegraphics[width=0.57\linewidth]{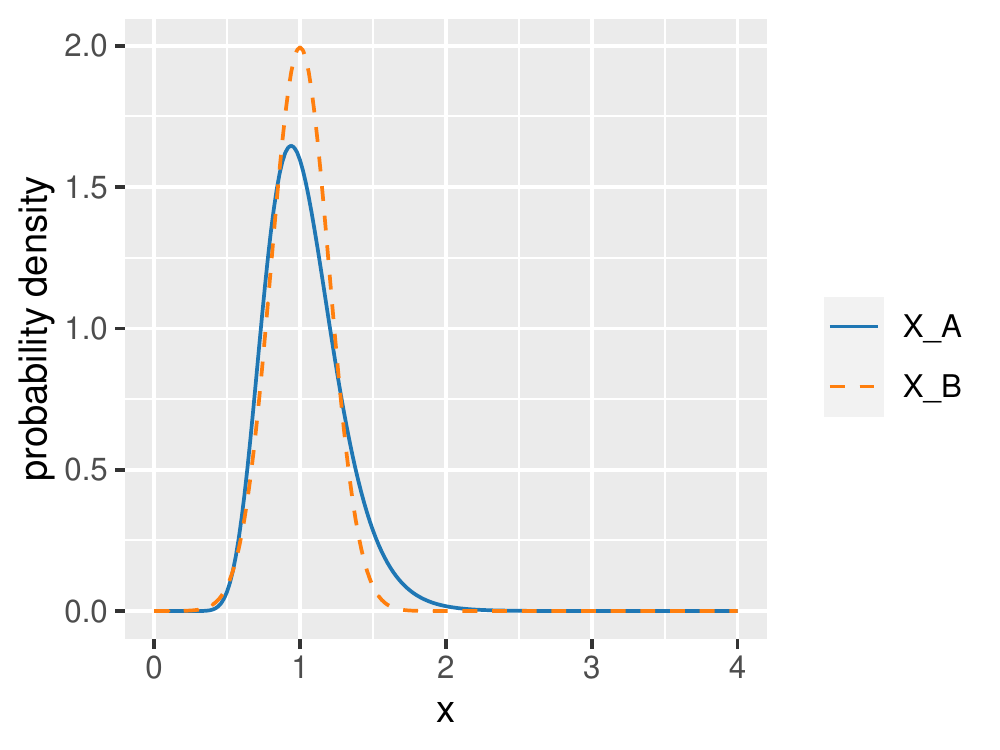}
	\caption{Probability density functions of Example 6}
	\label{fig:example1density6}
\end{figure}

\begin{figure}[h]
	\centering
	\includegraphics[width=0.57\linewidth]{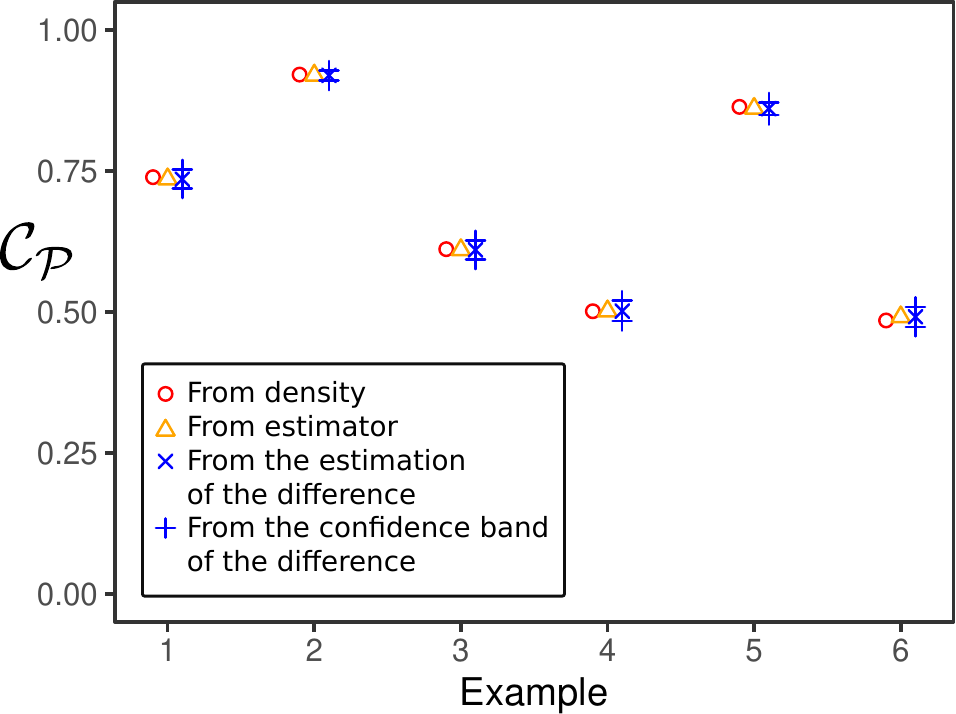}
	\caption{The $\mathcal{C}_\mathcal{P}$ values obtained in the six examples with the three methods.}
	\label{fig:three_methods_estimationCP}
\end{figure}

\begin{figure}[h]
	\centering
	\includegraphics[width=0.57\linewidth]{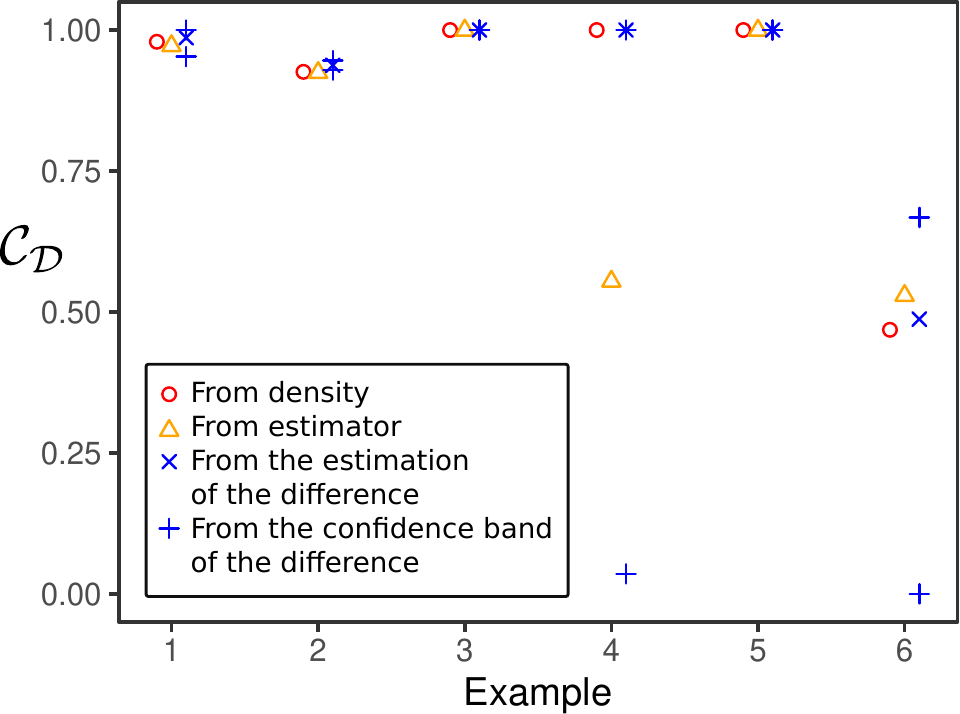}
	\caption{The $\mathcal{C}_\mathcal{D}$ values obtained in the six examples with the three methods.}
	\label{fig:three_methods_estimationCD}
\end{figure}

%
%
%

\end{document}